\documentclass[letterpaper]{article} 
\usepackage{aaai23}  
\usepackage{times}  
\usepackage{helvet}  
\usepackage{courier}  
\usepackage[hyphens]{url}  
\usepackage{graphicx} 
\urlstyle{rm} 
\usepackage[sorting = none, backend = bibtex, style=numeric-comp,maxbibnames=99,firstinits=true]{biblatex}
\addbibresource{reference.bib}

\usepackage{caption} 
\frenchspacing  
\setlength{\pdfpagewidth}{8.5in} 
\setlength{\pdfpageheight}{11in} 
%
\usepackage{microtype}
\usepackage{graphicx}
\usepackage{subfigure}
\usepackage{booktabs}
\usepackage{multirow}
\usepackage{makecell}
\usepackage{enumitem}
\usepackage{float}
\usepackage{amsmath}
\usepackage{amssymb}
\usepackage{mathtools}
\usepackage{amsthm}
\usepackage{amsfonts}
\usepackage{dsfont}
\usepackage{color}
\usepackage{hyperref}
\usepackage[linesnumbered, ruled, vlined]{algorithm2e} 

\usepackage{algorithmicx}

\theoremstyle{plain}
\newtheorem{theorem}{Theorem}

\newtheorem{lemma}[theorem]{Lemma}
\newtheorem{corollary}{Corollary}
\theoremstyle{definition}

\newtheorem{assumption}{Assumption}
\theoremstyle{remark}

\DeclareMathOperator*{\argmax}{arg\,max}
\DeclareMathOperator*{\argmin}{arg\,min}

%
\usepackage{newfloat}
\usepackage{listings}
\DeclareCaptionStyle{ruled}{labelfont=normalfont,labelsep=colon,strut=off} 
\lstset{%
	basicstyle={\footnotesize\ttfamily},
	numbers=left,numberstyle=\footnotesize,xleftmargin=2em,
	aboveskip=0pt,belowskip=0pt,%
	showstringspaces=false,tabsize=2,breaklines=true}
\floatstyle{ruled}
\newfloat{listing}{tb}{lst}{}
\floatname{listing}{Listing}
%
\pdfinfo{
/TemplateVersion (2023.1)
}

\setcounter{secnumdepth}{2} 

%


\title{User-Oriented Robust Reinforcement Learning}
\author{
    Haoyi You,
    Beichen Yu,
    Haiming Jin\thanks{Corresponding author.},
    Zhaoxing Yang,
    Jiahui Sun
}
\affiliations{
 Shanghai Jiao Tong University, Shanghai, China\\

    \{yuri-you, polarisybc, jinhaiming, yiannis, jhsun1997\}@sjtu.edu.cn
%
}


\usepackage{bibentry}

\begin{document}

\maketitle

\begin{abstract}
\vspace{-0.05cm}
Recently, improving the robustness of policies across different environments attracts increasing attention in the reinforcement learning (RL) community. Existing robust RL methods mostly aim to achieve the max-min robustness by optimizing the policy's performance in the worst-case environment. However, in practice, a user that uses an RL policy may have different preferences over its performance across environments. Clearly, the aforementioned max-min robustness is oftentimes too conservative to satisfy user preference. Therefore, in this paper, we integrate user preference into policy learning in robust RL, and propose a novel \textit{User-Oriented Robust RL (UOR-RL)} framework. Specifically, we define a new \textit{User-Oriented Robustness (UOR)} metric for RL, which allocates different weights to the environments according to user preference and generalizes the max-min robustness metric. To optimize the UOR metric, we develop two different UOR-RL training algorithms for the scenarios with or without a priori known environment distribution, respectively. Theoretically, we prove that our UOR-RL training algorithms converge to near-optimal policies even with inaccurate or completely no knowledge about the environment distribution. Furthermore, we carry out extensive experimental evaluations in 6 MuJoCo tasks. The experimental results demonstrate that UOR-RL is comparable to the state-of-the-art baselines under the average-case and worst-case performance metrics, and more importantly establishes new state-of-the-art performance under the UOR metric. 
\end{abstract}

\vspace{-0.3cm}
\section{Introduction}
Recently, reinforcement learning (RL) raises a high level of interest in both the research community and the industry due to its satisfactory performance in a variety of decision-making tasks, such as playing computer games \cite{mnih2013playing}, autonomous driving \cite{kiran2021deep}, robotics \cite{kober2013reinforcement}. Among existing RL methods, model-free ones, such as DQN \cite{mnih2013playing}, DDPG \cite{silver2014deterministic}, PPO \cite{schulman2017proximal}, which typically train policies in simulated environments, have been widely studied. However, it is highly possible that there exist discrepancies between the training and execution environments, which could severely degrade the performance of the trained policies. Therefore, it is of significant importance to robustify RL policies across different environments. 

Existing studies of RL robustness against environment discrepancies \cite{RobustMDP,rajeswaran2016epopt,action_robust_1,curi2021combining} mostly aim to achieve the max-min robustness by optimizing the  performance of the policy in the worst-case environment. However, such max-min robustness could oftentimes be overly conservative, since it only concentrates on the performance of the policy in the worst case, regardless of its performance in any other case. As a matter of fact, it is usually extremely rare for the worst case (e.g., extreme weather conditions in autonomous driving, power failure incidence in robot control) to happen in many applications. Therefore, we should take the environment distribution into consideration and pay more attention to the environments with higher probabilities though they are not the worst.


Besides, a user that uses an RL policy for real-world tasks may have different preferences over her performance across environments. Furthermore, for the same decision-making task, the preferences of different users may vary, resulting that they are in favor of different policies. For instance, in computer games, some users prefer to attack others and take an aggressive policy, while others may prefer a defensive policy. Therefore, user preference is a crucial factor that should be considered in RL policy training, which is ignored by the max-min robustness. 

Due to the significance of user preference and environment distribution, we design a new \textit{User-Oriented Robustness (UOR)} metric, which integrates both user preference and environment distribution into the measurement of robustness. Specifically, the UOR metric allocates different weights to different environments, based on user preference, environment distribution, and relative policy performance. In fact, the max-min robustness is a special case of the UOR metric that the user prefers extremely conservative robustness, and thus, the UOR metric is a generalization of the max-min robustness. Hence, in this paper, we focus on optimizing the UOR metric during policy training, which can help obtain policies better aligned with user preference.

To optimize the UOR metric, we propose the \textit{User-Oriented Robust RL (UOR-RL)} framework. One of the training algorithms of the UOR-RL framework, namely \textit{Distribution-Based UOR-RL (DB-UOR-RL)}, takes the environment distribution as input to help optimize the UOR metric. In real-world applications, however, the environment distribution may sometimes be unknown to the user. To tackle such case, we design another training algorithm, namely \textit{Distribution-Free UOR-RL (DF-UOR-RL)}, which works even without any knowledge of the environment distribution. Both algorithms evaluate the UOR metric and use it to update the policy, while they differ in their approaches for UOR metric evaluation, because of the different prior knowledge of the environment distribution.

Theoretically, under several mild assumptions, we prove that UOR-RL guarantees the following series of desirable properties. For DB-UOR-RL, we prove that with $O(\frac{1}{\epsilon^d})$ computational complexity, where $d$ denotes the dimension of the parameter that parameterizes the environment, the output policy of DB-UOR-RL is $\epsilon$-suboptimal to the optimal policy under the UOR metric. Furthermore, even when DB-UOR-RL takes an inaccurate empirical environment distribution as input, we prove that, as long as the total variation distance between the empirical distribution and the accurate one is no larger than $O(\epsilon^d)$, the output policy of DB-UOR-RL is still guaranteed to be $\epsilon$-suboptimal to the optimal policy. For DF-UOR-RL, though without any prior knowledge of the environment distribution, our proof shows that DF-UOR-RL could still generate an $\epsilon$-suboptimal policy with $O(\frac{1}{\epsilon^{2d+4}})$ computational complexity.

The contributions of this paper are summarized as follows.
\begin{itemize}[leftmargin=*]
    \item We propose a user-oriented metric for robustness measurement, namely UOR, allocating different weights to different environments according to user preference. To the best of our knowledge, UOR is the first metric that integrates user preference into the measurement of robustness in RL.
    \item We design two UOR-RL training algorithms for the scenarios with or without a priori known environment distribution, respectively. Both algorithms take the UOR metric as the optimization objective so as to obtain policies better aligned with user preference. 
    \item We prove a series of results, through rigorous theoretical analysis, showing that our UOR-RL training algorithms converge to near-optimal policies even with inaccurate or entirely no knowledge about the environment distribution. 
    \item We conduct extensive experiments in 6 MuJoCo tasks. The experimental results demonstrate that UOR-RL is comparable to the state-of-the-art baselines under the average-case and worst-case performance metrics, and more importantly establishes new state-of-the-art performance under the UOR metric.
    \end{itemize}

\section{Problem Statement}
\subsection{Preliminary}
\label{Preliminary}
We introduce parameterized Markov Decision Process (PMDP) \cite{rajeswaran2016epopt} represented by a 6-tuple $(\mathcal{S},\mathcal{A},\gamma,\mathbb{S}_{0},T,R)$, as it is the basis of the UOR-PMDP that will be defined in Section~\ref{subsection UOR-PMDP}. $\mathcal{S}$ and $\mathcal{A}$ are respectively the set of states and actions. $\gamma$ is the discount factor. $\mathbb{S}_0\in \Delta(\mathcal{S})$ denotes the initial state distribution\footnote{We use $\Delta(\mathcal{X})$ to denote the set of all distributions over set $\mathcal{X}$.}. Furthermore, different from the traditional MDP, a PMDP's transition function $T:\mathcal{S}\times\mathcal{A}\times\mathcal{P}\rightarrow\Delta(\mathcal{S})$ and reward function $R:\mathcal{S}\times \mathcal{A}\times\mathcal{S}\times\mathcal{P}\rightarrow\Delta(\mathbb{R})$ take an additional environment parameter $p$ as input, with $\mathbb{R}$ denoting the set of all real numbers. The environment parameter $p$ is a random variable in range $\mathcal{P}\subset \mathbb{R}^d$, following a probability distribution $\mathbb{D}$. In PMDP, the parameter $p$ is sampled at the beginning of each trajectory, and keeps constant during the trajectory. We consider the scenario where $p$ is unknown to the user during execution, but our work can extend to the scenario with known $p$ through regarding the environment parameter $p$ as an additional dimension of state.

A policy $\pi:\mathcal{S}\rightarrow\Delta(\mathcal{A})$ is a mapping from a state to a probability distribution over actions.
Furthermore, the expected return $J(\pi,p)$ is defined as the expected discounted sum of rewards when the policy $\pi$ is executed under the environment parameter $p$, i.e., 
\begin{equation}
J(\pi,p)=\mathbb{E}\left[\sum_{t=0}^\infty \gamma^t R(s_{t},a_t,s_{t+1},p)|\pi\right].
\end{equation}

\subsection{User-Oriented Robust PMDP}
\label{subsection UOR-PMDP}
\subsubsection{Definition}

In this paper, we propose \textit{User-Oriented Robust PMDP (UOR-PMDP)}, which is represented by an 8-tuple $(\mathcal{S},\mathcal{A},\gamma,\mathbb{S}_{0},T,R,h,W)$. The first six items of UOR-PMDP are the same as those of PMDP. Furthermore, we introduce the last two items, namely the ranking function $h$ and the preference function $W$, to formulate our new \textit{User-Oriented Robustness (UOR)} metric that allocates more weights to the environment parameters where the policy has relatively worse performance, according to user preference.

As UOR requires assessing the relative performance of the policy under different environment parameters, we define the \textit{ranking function} $h:\Pi\times\mathcal{P}  \rightarrow [0,1]$ as 
\begin{equation}
    h(\pi,p)=\int_{\mathcal{P}}\mathbb{D}(p)\cdot\mathds{1}\left[J(\pi,p')\le J(\pi,p)\right]\mathrm{d}p',
\end{equation}
which represents the probability that the performance of policy $\pi$ under an environment parameter $p'$ sampled from $\mathbb{D}$ is worse than that under the environment parameter $p$. 

To represent a user's preference, we define the \textit{preference function} $W:[0,1]\rightarrow \mathbb{R_+}\cup\{0\}$ which assigns weights to environment parameters with different rankings. Specifically, given $\pi$, the weight assigned to environment parameter $p$ is set as $W(h(\pi,p))$. Moreover, we require function $W$ to be non-increasing, since UOR essentially puts more weights on environment parameters with lower rankings. 

In practice, to make it more convenient for a user to specify her preference, we could let the preference function $W$ belong to a family of functions parameterized by as few as only one parameter. For example, by setting the preference function $W$ as
 \begin{equation}
    W(x)=(k+1)\cdot (1-x)^k,
    \label{fomula robust weight function}
\end{equation}
a single robustness degree parameter $k\in \mathbb{R_+}$ suffices to completely characterize the user preference.

In terms of the objective, the UOR-PMDP aims to maximize the \textit{UOR metric} $\mathcal{E}$ defined as
\begin{equation}
    \mathcal{E}(\pi)=\int_\mathcal{P}\mathbb{D}(p)\cdot J(\pi,p)\cdot W(h(\pi,p))\mathrm{d}p, 
\end{equation}
which is the expectation of the weighted sum of $J(\pi,p)$ over the distribution $\mathbb{D}$. That is, the optimal policy $\pi^\ast$ of the UOR-PMDP satisfies
\begin{equation}
    \pi^\ast=\argmax_{\pi\in \Pi}\mathcal{E}(\pi).
\end{equation}
\subsubsection{Properties}
In fact, our UOR metric generalizes the worst-case performance robustness metric $\min_{p\in\mathcal{P}} J(\pi,p)$ \cite{RobustMDP,rajeswaran2016epopt,action_robust_1,curi2021combining}, and the average-case metric $\mathbb{E}_{p\sim \mathbb{D}}\left[J(\pi,p)\right]$ without robustness consideration \cite{dasilva2012learning,tobin2017domain}.

As $W$ is non-increasing, it has the following two extreme cases. For one extreme case, $W$ concentrates on zero. That is, $W(x)=\delta(x)$, where $\delta$ denotes the Dirac function, and consequently the UOR metric becomes 
\begin{equation*}
    \mathcal{E}(\pi)=\int_\mathcal{P}\mathbb{D}(p)\cdot J(\pi,p)\cdot\delta(h(\pi,p))\mathrm{d}p=\min_{p\in \mathcal{P}} J(\pi,p).
\end{equation*}
For the other extreme case, $W$ is uniform in $[0,1]$. That is, $W(x)\equiv1$, and consequently the UOR metric becomes 
\begin{equation*}
    \mathcal{E}(\pi)=\int_\mathcal{P}\mathbb{D}(p)\cdot J(\pi,p)\mathrm{d}p=\mathbb{E}_{p\sim \mathbb{D}} \left[J(\pi,p)\right].
\end{equation*}

\section{Solutions for UOR-PMDP}
In this section, we present two \textit{User-Oriented Robust RL (UOR-RL)} algorithms to solve UOR-PMDP in the scenarios with and without a priori known environment parameter distribution, respectively. 
\subsection{Distribution-Based UOR-RL}
\subsubsection{Algorithm Design}
We consider the scenario that the distribution $\mathbb{D}$ is known before training, and propose the \textit{Distribution-Based UOR-RL (DB-UOR-RL)} training algorithm in Algorithm \ref{Algorithm with model} which makes use of the distribution $\mathbb{D}$ during the training period.

\begin{algorithm}[t]
\caption{DB-UOR-RL Algorithm}\label{Algorithm with model}
	\small
	\tcp{Initialization.}
	Initialize policy $\pi_{\theta^0}$ and block diameter upper bound $\delta$\;\label{a1l1}
	$\{\mathcal{P}_1,\mathcal{P}_2,\cdots,\mathcal{P}_n\}\leftarrow \text{Set\_Division}(\mathcal{P},\delta)$\;\label{a1l2}
	\ForEach{Block $\mathcal{P}_j$\label{a1l3}}{
	$p_j\leftarrow$ Arbitrarily chosen element in $\mathcal{P}_j$\;\label{a1l4}
	$m_j\leftarrow\int_{\mathcal{P}_j}\mathbb{D}(p)\mathrm{d}p$\;\label{a1l5}
	}
	\tcp{Policy Training.}
	\ForEach{Iteration $i=0$ to max-iterations\label{a1l6}}{
	\ForEach{Block $\mathcal{P}_j$\label{a1l7}}{

	Execute policy
	$\pi_{\theta^i}$ under $p_j$ and evaluate the empirical expected return $\hat{J}(\pi_{\theta^i},p_j)$\;\label{a1l8} 
	}
	Sort the sequence $\{\hat{J}(\pi_{\theta^i},p_j)\}$ into an increasing sequence  $\{\hat{J}(\pi_{\theta^i},p_{\alpha_j})\}$\;\label{a1l9}
	\tcp{Metric Calculation.}
    Initialize metric $\hat{\mathcal{E}}(\pi_{\theta^i})\leftarrow0$ and  $M\leftarrow0$\;\label{a1l10}
    \ForEach{Block $\mathcal{P}_j$\label{a1l11}}
    {
    $w_j\leftarrow\int_{M}^{M+m_{\alpha_j}} W(x)\mathrm{d}x$\;\label{a1l12}
    $\hat{\mathcal{E}}(\pi_{\theta^i})\leftarrow\hat{\mathcal{E}}(\pi_{\theta^i})+w_j\cdot \hat{J}(\pi_{\theta^i},p_{\alpha_j})$\;\label{a1l13}
    $M\leftarrow M+m_{\alpha_j}$\;\label{a1l14}
    }
    \tcp{Policy Update.}
    $\pi_{\theta^{i+1}}\leftarrow
    $ Policy\_Update$(\pi_{\theta^i},\hat{\mathcal{E}}(\pi_{\theta^i}))$\;\label{a1l15}
	}
\end{algorithm}

Firstly, Algorithm \ref{Algorithm with model} randomly sets the initial policy $\pi_{\theta^0}$, and chooses the upper bound $\delta$ of the block diameter (line~\ref{a1l1}). The criteria for setting $\delta$ will be discussed in detail in Section \ref{section 4.1.2}. Then, by calling the Set\_Division algorithm, Algorithm \ref{Algorithm with model} divides $\mathcal{P}$ into $n$ blocks $\mathcal{P}_1,\mathcal{P}_2,\cdots,\mathcal{P}_n$, whose diameters are less than $\delta$ (line~\ref{a1l2}). That is,
\begin{equation*}
\forall~\mathcal{P}_j\in \{\mathcal{P}_1,\mathcal{P}_2,\cdots,\mathcal{P}_n\},\forall~p_x,p_y\in \mathcal{P}_j, ||p_x-p_y||_2\le \delta.
\end{equation*}
Note that the number of blocks $n$ is decided by how the Set\_Division algorithm divides $\mathcal{P}$ based on $\delta$. Because of space limit, we put our Set\_Division algorithm in Appendix \ref{set division function}. In fact, Algorithm \ref{Algorithm with model} works with any Set\_Division algorithm that could guarantee that the diameters of the divided blocks are upper bounded by $\delta$. Then, for each block $\mathcal{P}_j$, Algorithm \ref{Algorithm with model} arbitrarily chooses an element $p_j$ from the block to represent it (line ~\ref{a1l4}), and calculates the probability that an environment parameter falls into the block $\mathcal{P}_j$ (line ~\ref{a1l5}). 

Next, Algorithm \ref{Algorithm with model} trains the policy (line~\ref{a1l6} to~\ref{a1l15}). In each iteration $i$, it evaluates the performance $\hat{J}(\pi_{\theta^i},p_j)$ of the policy $\pi_{\theta^i}$ under each $p_j$ (line~\ref{a1l8}), and sorts the sequence $\{\hat{J}(\pi_{\theta^i},p_j)\}$ into an increasing one $\{\hat{J}(\pi_{\theta^i},p_{\alpha_j})\}$ (line \ref{a1l9}). Then, Algorithm \ref{Algorithm with model} calculates the metric $\hat{\mathcal{E}}(\pi_{\theta^i})$, which is an approximation of the UOR metric $\mathcal{E}(\pi_{\theta^i})$ (line \ref{a1l10} to~\ref{a1l14}). Specifically, it initializes the metric $\hat{\mathcal{E}}(\pi_{\theta^i})$ and the lower limit $M$ of the integral as zero (line \ref{a1l10}). For each block $\mathcal{P}_j$, it calculates the weight $w_j$ allocated to this block based on the ranking $\alpha_j$ of block $\mathcal{P}_j$ in the sorted sequence $\{J(\pi_{\theta^i},p_{\alpha_j})\}$ and preference function $W$ (line~\ref{a1l12}), and updates the metric $\hat{\mathcal{E}}(\pi_{\theta^i})$ (line~\ref{a1l13}) and the lower limit of the integral (line~\ref{a1l14}). Finally, based on the metric $\hat{\mathcal{E}}(\pi_{\theta^i})$, Algorithm \ref{Algorithm with model} updates the policy by applying a Policy\_Update algorithm (line~\ref{a1l15}). Note that Policy\_Update could be any policy gradient algorithm that updates the policy based on the metric $\hat{\mathcal{E}}(\pi_{\theta^i})$. 

The above Algorithm~\ref{Algorithm with model} essentially uses integral discretization to calculate an approximate UOR metric $\hat{\mathcal{E}}$ which is used as the optimization objective of Policy\_Update. To discretize the integral for calculating the UOR metric, Algorithm~\ref{Algorithm with model} divides the environment parameter range into blocks. Furthermore, to get the ranking function for weight allocation, Algorithm~\ref{Algorithm with model} sorts the blocks according to the evaluated performance on them. 

\subsubsection{Algorithm Analysis}
\label{section 4.1.2} 
To analyze Algorithm~\ref{Algorithm with model}, we make the following three mild assumptions.
\begin{assumption}
The transition function $T$ and reward function $R$ are continuous to the environment parameter $p$. \label{Assumption p continue}
\end{assumption}
\begin{assumption}
The transition function $T$ and reward function $R$ are Lipschitz continuous to the state space $\mathcal{S}$ and action space $\mathcal{A}$ with constants $L_{T,\mathcal{S}}$, $L_{T,\mathcal{A}}$, $L_{R,\mathcal{S}}$, and $L_{R,\mathcal{A}}$, respectively. \label{Assumption R,T continue}
\end{assumption}
\begin{assumption}
The policy $\pi$ during the training process in Algorithm \ref{Algorithm with model} is Lipschitz continuous with constant $L_\pi$.\label{Assumption Pi continue}
\end{assumption}
Assumption \ref{Assumption p continue} is natural, because the $R$ and $T$ functions characterize the environment which will usually not change abruptly as the environment parameter $p$ changes. Furthermore, Assumptions \ref{Assumption R,T continue} and \ref{Assumption Pi continue} are commonly used in previous works \cite{curi2020efficient,curi2021combining}. 
Based on Assumptions~\ref{Assumption p continue}-\ref{Assumption Pi continue}, we prove the following Theorem \ref{Theorem near optimal}, which demonstrates the existence of the diameter upper bound $\delta$ under which Algorithm \ref{Algorithm with model} can converge to a near-optimal policy.

\begin{theorem}
\label{Theorem near optimal}
$\forall$ optimality requirement $\epsilon=2\epsilon_0>0$, $\exists~\delta_0$, such that as long as Policy\_Update can learn an $\epsilon_0$-suboptimal policy for metric $\hat{\mathcal{E}}$, by running Algorithm~\ref{Algorithm with model} with any diameter upper bound $\delta\le \delta_0$, we can guarantee that the output policy $\hat{\pi}$ of Algorithm~\ref{Algorithm with model} satisfies
\begin{equation}
\mathcal{E}(\hat{\pi})\ge \mathcal{E}(\pi^\ast)-\epsilon.
\end{equation}
\end{theorem}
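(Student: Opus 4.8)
The plan is to reduce Theorem~\ref{Theorem near optimal} to a single \emph{uniform approximation bound}: that for a small enough block diameter the discretized objective $\hat{\mathcal{E}}$ computed in lines~\ref{a1l10}--\ref{a1l14} stays uniformly close to the true objective $\mathcal{E}$ over the whole (Lipschitz) policy class. Write $\hat{\pi}^\ast=\argmax_{\pi}\hat{\mathcal{E}}(\pi)$. Since $\hat{\pi}$ is $\epsilon_0$-suboptimal for $\hat{\mathcal{E}}$ and $\hat{\mathcal{E}}(\hat\pi^\ast)\ge\hat{\mathcal{E}}(\pi^\ast)$, a three-term telescoping yields
\begin{align*}
\mathcal{E}(\hat\pi)
&\ge \hat{\mathcal{E}}(\hat\pi) - \bigl|\mathcal{E}(\hat\pi) - \hat{\mathcal{E}}(\hat\pi)\bigr| \\
&\ge \hat{\mathcal{E}}(\pi^\ast) - \epsilon_0 - \bigl|\mathcal{E}(\hat\pi) - \hat{\mathcal{E}}(\hat\pi)\bigr| \\
&\ge \mathcal{E}(\pi^\ast) - \epsilon_0 - \bigl|\mathcal{E}(\pi^\ast) - \hat{\mathcal{E}}(\pi^\ast)\bigr| - \bigl|\mathcal{E}(\hat\pi) - \hat{\mathcal{E}}(\hat\pi)\bigr|.
\end{align*}
Hence it suffices to find $\delta_0$ so that $\sup_{\pi}\bigl|\mathcal{E}(\pi)-\hat{\mathcal{E}}(\pi)\bigr|\le \epsilon_0/2$ whenever $\delta\le\delta_0$: the two error terms then sum to at most $\epsilon_0$, giving the claimed gap of $2\epsilon_0=\epsilon$. (For this theorem I treat the evaluation at each representative $p_j$ as exact, i.e.\ $\hat J(\pi,p_j)=J(\pi,p_j)$; Monte-Carlo error is orthogonal to the discretization analysis and is controlled separately.)

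The first ingredient is a continuity lemma. Using Assumption~\ref{Assumption p continue} (continuity of $T,R$ in $p$), Assumptions~\ref{Assumption R,T continue}--\ref{Assumption Pi continue} (Lipschitzness in $\mathcal{S},\mathcal{A}$ with fixed constants), $\gamma<1$, and compactness of $\mathcal{P}$, I would show that $J(\pi,\cdot)$ is uniformly continuous on $\mathcal{P}$ with a modulus $\rho(\delta)\to 0$ as $\delta\to 0$ that is \emph{uniform over the policy class}. The standard route is to propagate the Lipschitz constants through the discounted value recursion to bound the sensitivity of the trajectory law (hence of $J$) to perturbations of $T$ and $R$, and then invoke continuity of $T,R$ in $p$ on the compact set $\mathcal{P}$; the fixed $L_\pi$ from Assumption~\ref{Assumption Pi continue} makes the family equicontinuous so the modulus does not depend on $\pi$. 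The upshot is that for any block $\mathcal{P}_j$ of diameter $\le\delta$ and any $p\in\mathcal{P}_j$, $\bigl|J(\pi,p)-J(\pi,p_j)\bigr|\le\rho(\delta)$.

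The second, and central, ingredient is a quantile reformulation that matches the algorithm's sorting step to the ranking function $h$. Let $Q_\pi$ be the quantile function (generalized inverse) of the law of $J(\pi,\cdot)$ under $\mathbb{D}$, so that $\mathcal{E}(\pi)=\int_0^1 Q_\pi(u)\,W(u)\,\mathrm{d}u$ by the change of variable $u=h(\pi,p)$; and let $\hat Q_\pi$ be the step function equal to $J(\pi,p_{\alpha_j})$ on the rank interval $(M_{j-1},M_j)$ with $M_j=\sum_{l\le j}m_{\alpha_l}$, so that $\hat{\mathcal{E}}(\pi)=\sum_j w_j J(\pi,p_{\alpha_j})=\int_0^1\hat Q_\pi(u)W(u)\,\mathrm{d}u$ exactly mirrors lines~\ref{a1l12}--\ref{a1l14}. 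The key claim is that on each rank interval $|Q_\pi(u)-J(\pi,p_{\alpha_j})|\le 2\rho(\delta)$. This follows from a mass-counting argument: any $p'$ with $J(\pi,p')<J(\pi,p_{\alpha_j})-2\rho(\delta)$ must sit in a block whose representative has performance $<J(\pi,p_{\alpha_j})-\rho(\delta)$, hence a block ranked before $j$, so the total such mass is at most $M_{j-1}<u$; the symmetric bound uses blocks ranked after $j$. Combining,
\begin{equation*}
\bigl|\hat{\mathcal{E}}(\pi)-\mathcal{E}(\pi)\bigr|\le\int_0^1\bigl|\hat Q_\pi(u)-Q_\pi(u)\bigr|\,W(u)\,\mathrm{d}u\le 2\rho(\delta)\int_0^1 W(u)\,\mathrm{d}u,
\end{equation*}
and choosing $\delta_0$ with $\rho(\delta_0)\le \epsilon_0/\bigl(4\int_0^1 W\bigr)$ closes the argument.

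I expect the main obstacle to be this per-block quantile bound, i.e.\ reconciling the algorithm's \emph{discrete} ordering by block representatives (line~\ref{a1l9}) with the \emph{continuous} ranking $h(\pi,p)$ that defines $\mathcal{E}$. One must control the ``rank leakage'' between blocks whose representative performances differ by less than $2\rho(\delta)$, and handle ties/atoms in the law of $J(\pi,\cdot)$ so that the change of variable $u=h(\pi,p)$ and the generalized inverse are well posed. The continuity lemma is comparatively routine but is a genuine prerequisite, since Assumption~\ref{Assumption p continue} only furnishes a modulus $\rho(\delta)$ rather than a Lipschitz rate; this is enough for the pure existence of $\delta_0$ asserted here (a Lipschitz-in-$p$ strengthening would additionally pin down the $O(1/\epsilon^{d})$ block count). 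Finally, the bound uses $\int_0^1 W<\infty$, which holds for every admissible preference except the degenerate Dirac limit recovering exact max-min robustness.
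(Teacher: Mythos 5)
Your proposal follows essentially the same route as the paper's proof: a uniform-continuity modulus for $J(\pi,\cdot)$ in $p$ obtained by propagating the Lipschitz constants through the discounted dynamics on compact $\mathcal{S},\mathcal{A},\mathcal{P}$, a quantile-function rewriting of both $\mathcal{E}$ and $\hat{\mathcal{E}}$ (your $Q_\pi,\hat Q_\pi$ are the paper's $H_\pi^{-1},H'_\pi$) with a per-rank-interval mass-counting bound, and the same three-term decomposition absorbing the Policy\_Update suboptimality. The only differences are bookkeeping (your factor $2\rho(\delta)$ versus the paper's $j(\delta)$, and keeping $\int_0^1 W$ explicit rather than normalizing it to $1$), so the argument is correct and matches the paper.
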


Because of space limit, the proofs to all of the theorems and corollary in this paper are provided in the appendix. 

Theorem~\ref{Theorem near optimal} reveals that as long as the diameter upper bound $\delta$ is sufficiently small, the output policy $\hat{\pi}$ of Algorithm~\ref{Algorithm with model} will be close enough to the optimal policy $\pi^\ast$. However, as $\delta$ decreases, the number of the blocks output by Set\_Division on line \ref{a1l2} of Algorithm \ref{Algorithm with model} will increase, leading to an increased complexity of Algorithm \ref{Algorithm with model}. Thus, it is of great importance to have a quantitative relationship between $\epsilon$ and $\delta$, which could help us better choose the upper bound $\delta$ based on the optimality requirement $\epsilon$. To obtain the quantitative relationship between $\delta$ and $\epsilon$, we introduce the following Assumption \ref{assumption lipschitz of p}, which is stronger than Assumption \ref{Assumption p continue}. 
\begin{assumption}
The transition function $T$ and reward function $R$ are Lipschitz continuous to the environment parameter $p$ with constants $L_{T,p}$ and $L_{R,p}$. 
\label{assumption lipschitz of p}
\end{assumption}
Based on Assumptions \ref{Assumption R,T continue}-\ref{assumption lipschitz of p}, we prove Theorem~\ref{Theorem near optimal strengthem}. 
\begin{theorem}
$\forall$ optimality requirement $\epsilon=2\epsilon_0>0$, $\exists~\delta_0= \upsilon\epsilon=O(\epsilon)$, 
such that as long as Policy\_Update can learn an $\epsilon_0$-suboptimal policy for metric $\hat{\mathcal{E}}$, by running Algorithm~\ref{Algorithm with model} with any diameter upper bound $\delta\le \delta_0$, we can guarantee that the output policy $\hat{\pi}$ of Algorithm~\ref{Algorithm with model} satisfies
\begin{equation}
\mathcal{E}(\hat{\pi})\ge \mathcal{E}(\pi^\ast)-\epsilon.
\end{equation}
Note that the constant $\upsilon$ depends on the Lipschitz constants in Assumptions \ref{Assumption R,T continue}-\ref{assumption lipschitz of p}, whose detailed form is presented in Equation (\ref{detailed equation}) in Appendix~\ref{Proof of Theorem 2}.

\label{Theorem near optimal strengthem}
\end{theorem}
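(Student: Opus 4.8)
The plan is to follow the same sandwiching structure as the proof of Theorem~\ref{Theorem near optimal}, but to replace its qualitative uniform-continuity step with a \emph{quantitative} bound on the gap between the true metric $\mathcal{E}$ and the discretized metric $\hat{\mathcal{E}}$ that is \emph{linear} in the block diameter $\delta$. Concretely, I will establish a uniform estimate of the form $\sup_{\pi\in\Pi}|\mathcal{E}(\pi)-\hat{\mathcal{E}}(\pi)|\le L_J\delta$, where $L_J$ is an explicit constant assembled from the Lipschitz constants of Assumptions~\ref{Assumption R,T continue}--\ref{assumption lipschitz of p} and the discount factor $\gamma$. Given this, the theorem follows by a three-term argument: since Policy\_Update returns an $\hat\pi$ that is $\epsilon_0$-suboptimal for $\hat{\mathcal{E}}$, I chain
\begin{equation*}
\mathcal{E}(\hat\pi)\ge \hat{\mathcal{E}}(\hat\pi)-L_J\delta\ge \hat{\mathcal{E}}(\pi^\ast)-\epsilon_0-L_J\delta\ge \mathcal{E}(\pi^\ast)-\epsilon_0-2L_J\delta,
\end{equation*}
and then set $\delta_0=\tfrac{\epsilon_0}{2L_J}=\tfrac{\epsilon}{4L_J}$, so that every $\delta\le\delta_0$ yields $\mathcal{E}(\hat\pi)\ge\mathcal{E}(\pi^\ast)-2\epsilon_0=\mathcal{E}(\pi^\ast)-\epsilon$. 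This identifies $\upsilon=\tfrac{1}{4L_J}$ and gives $\delta_0=O(\epsilon)$; the detailed form of $\upsilon$ is precisely the closed-form expression obtained for $L_J$ below.

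The first key step is a \emph{Lipschitz-in-$p$ bound for the return}: for every $\pi$ and all $p,p'$, I will show $|J(\pi,p)-J(\pi,p')|\le L_J\|p-p'\|_2$. This is the only place Assumption~\ref{assumption lipschitz of p} enters, and it is where the argument genuinely strengthens Theorem~\ref{Theorem near optimal}. I would prove it by a simulation-lemma-style telescoping of the two trajectory distributions induced by the kernels $T(\cdot\mid s,a,p)$ and $T(\cdot\mid s,a,p')$ under the common policy $\pi$. Writing $J(\pi,p)-J(\pi,p')$ as a discounted sum of per-step differences, each step contributes (i) a direct reward term bounded by $L_{R,p}\|p-p'\|_2$ via Assumption~\ref{assumption lipschitz of p}, and (ii) a distribution-shift term controlled by how far the time-$t$ state-action occupancy measures have drifted apart. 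That drift obeys a recursion whose one-step factor is governed by $L_{T,\mathcal{S}},L_{T,\mathcal{A}}$ (Assumption~\ref{Assumption R,T continue}) and $L_\pi$ (Assumption~\ref{Assumption Pi continue}), with fresh discrepancy injected each step at scale $L_{T,p}\|p-p'\|_2$; discounting by $\gamma^t$ keeps the geometric accumulation finite and delivers a closed-form $L_J$ in terms of these constants and $\tfrac{1}{1-\gamma}$.

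The second key step converts this pointwise bound into the uniform metric gap. I will use that, by definition, the ranking function $h(\pi,\cdot)$ is exactly the CDF of the random return $J(\pi,p)$ with $p\sim\mathbb{D}$, so that $\mathcal{E}(\pi)=\int_0^1 Q_\pi(u)\,W(u)\,\mathrm{d}u$, where $Q_\pi$ is the associated quantile function. The value $\hat{\mathcal{E}}(\pi)$ computed in lines~\ref{a1l9}--\ref{a1l14} is the same integral for the \emph{block-discretized} distribution that places mass $m_{\alpha_j}$ at the representative return $\hat{J}(\pi,p_{\alpha_j})$: sorting realizes its quantile function and $w_j=\int_{M}^{M+m_{\alpha_j}}W$ integrates $W$ over the matching $u$-interval. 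Mapping each $p$ to the representative of its block is a coupling between the two distributions that displaces every sample by at most $L_J\delta$, using the Lipschitz bound together with the diameter guarantee $\|p-p_j\|_2\le\delta$; by the one-dimensional optimal-transport characterization this forces the two quantile functions to differ by at most $L_J\delta$ uniformly in $u$, and integrating against $W$ (with $\int_0^1 W(u)\,\mathrm{d}u$ finite) gives $|\mathcal{E}(\pi)-\hat{\mathcal{E}}(\pi)|\le L_J\delta$ for every $\pi$.

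I expect the Lipschitz-in-$p$ bound for $J$ to be the main obstacle: the per-step distribution-shift term requires measuring the divergence of the time-$t$ occupancy measures under $p$ and $p'$ and showing it grows only in a controlled, discount-tamed fashion, which forces a careful combination of the state/action Lipschitz constants with the policy Lipschitz constant across the recursion. A secondary technical point is the quantile/coupling step when $J(\pi,\cdot)$ has atoms under $\mathbb{D}$ (ties) and when the empirical evaluation $\hat{J}$ differs from the true $J$; the former is handled by working with generalized inverses, and the latter either by assuming accurate evaluation or by folding the estimation error into $\epsilon_0$.
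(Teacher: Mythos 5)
Your proposal matches the paper's proof in all essentials: the paper likewise derives a Lipschitz-in-$p$ bound for $J$ via a discounted occupancy-drift recursion with one-step contraction factor $L_{T,\mathcal{S}}+L_{T,\mathcal{A}}L_\pi$ and per-step injected error scaling with $L_{T,p}$ and $L_{R,p}$, converts it into the uniform bound $|\mathcal{E}(\pi)-\hat{\mathcal{E}}(\pi)|\le j(\delta)$ with $j$ linear in $\delta$ using exactly the quantile-function representation $\mathcal{E}(\pi)=\int_0^1 H_\pi^{-1}(x)W(x)\,\mathrm{d}x$, and closes with the same three-term sandwich to obtain $\delta_0=\upsilon\epsilon=O(\epsilon)$. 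Your optimal-transport phrasing of the quantile step and your single aggregated constant $L_J$ are only cosmetic departures from the paper's two-sided bound on $H'_\pi-H_\pi^{-1}$ and its constants $\lambda$, $\mu$, $\upsilon$.
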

Theorem~\ref{Theorem near optimal strengthem} indicates that when Algorithm~\ref{Algorithm with model} chooses $\delta=\upsilon\epsilon=O(\epsilon)$, the number of divided blocks is at most $O(\frac{1}{\epsilon^d})$. Therefore, with such choice of $\delta$, we can guarantee that the complexity of each iteration in Algorithm~\ref{Algorithm with model} is at most $O(\frac{1}{\epsilon^d})$. 



In practice, the user may not know the accurate distribution $\mathbb{D}$, but only has access to a biased empirical distribution $\mathbb{D}^e$. In the following Theorem~\ref{Theorem Empirical}, we prove the theoretical guarantee of Algorithm~\ref{Algorithm with model} when it runs with $\mathbb{D}^e$.
\begin{theorem}
\label{Theorem Empirical}
Define the policy $\pi^e$ such that
\begin{equation}
    \pi^e=\argmax_{\pi}\mathcal{E}^e(\pi),
\end{equation}
where 
$\mathcal{E}^e$ denotes the UOR metric under the UOR-PMDP with the empirical environment parameter distribution $\mathbb{D}^e$. 

Then, $\forall$ given $\epsilon>0$, $\exists~\kappa=O(\epsilon^d)$, such that as long as $\mathbb{D}$ and $\mathbb{D}^e$ satisfies the total variation distance $D_{TV}(\mathbb{D},\mathbb{D}^e)\le\kappa$, then we can guarantee that
\begin{equation}
    \mathcal{E}(\pi^e)\ge \mathcal{E}(\pi^\ast)-\epsilon.
\end{equation}
\end{theorem}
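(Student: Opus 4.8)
The plan is to reduce the statement to a \emph{uniform} closeness of the two objectives and then apply a sandwich argument. Write $\mathcal{E}^e$ for the UOR metric induced by $\mathbb{D}^e$, equipped with its own ranking function $h^e(\pi,p)=\int_{\mathcal{P}}\mathbb{D}^e(p')\mathds{1}\left[J(\pi,p')\le J(\pi,p)\right]\mathrm{d}p'$. Suppose we can establish the uniform bound
\begin{equation*}
\sup_{\pi\in\Pi}\left|\mathcal{E}(\pi)-\mathcal{E}^e(\pi)\right|\le \frac{\epsilon}{2}.
\end{equation*}
Then, using that $\pi^e$ maximizes $\mathcal{E}^e$ and $\pi^\ast$ maximizes $\mathcal{E}$,
\begin{equation*}
\mathcal{E}(\pi^e)\ge \mathcal{E}^e(\pi^e)-\frac{\epsilon}{2}\ge \mathcal{E}^e(\pi^\ast)-\frac{\epsilon}{2}\ge \mathcal{E}(\pi^\ast)-\epsilon,
\end{equation*}
which is exactly the claim. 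Hence everything reduces to converting the hypothesis $D_{TV}(\mathbb{D},\mathbb{D}^e)\le\kappa$ into a uniform $O(\epsilon)$ bound on $|\mathcal{E}-\mathcal{E}^e|$.

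To control the objective gap I would split it into a change-of-measure term and a change-of-ranking term,
\begin{align*}
\mathcal{E}(\pi)-\mathcal{E}^e(\pi)
&=\int_{\mathcal{P}}\left[\mathbb{D}(p)-\mathbb{D}^e(p)\right]J(\pi,p)\,W(h(\pi,p))\,\mathrm{d}p\\
&\quad+\int_{\mathcal{P}}\mathbb{D}^e(p)\,J(\pi,p)\left[W(h(\pi,p))-W(h^e(\pi,p))\right]\mathrm{d}p.
\end{align*}
Since rewards are bounded and $\gamma<1$, $J(\pi,\cdot)$ is uniformly bounded, and $W$ is bounded; the first term is therefore at most a constant times $\int_{\mathcal{P}}|\mathbb{D}-\mathbb{D}^e|\,\mathrm{d}p=2D_{TV}(\mathbb{D},\mathbb{D}^e)=O(\kappa)$, uniformly in $\pi$. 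For the second term the key observation is that the two rankings differ by no more than the same total-variation budget: for every $p$ and every $\pi$,
\begin{equation*}
\left|h(\pi,p)-h^e(\pi,p)\right|=\left|\int_{\mathcal{P}}\left[\mathbb{D}(p')-\mathbb{D}^e(p')\right]\mathds{1}\left[J(\pi,p')\le J(\pi,p)\right]\mathrm{d}p'\right|\le 2\kappa.
\end{equation*}

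The main obstacle is turning this pointwise ranking perturbation into an objective-level bound with the correct dependence on the environment dimension $d$. If $W$ were Lipschitz the second term would immediately be $O(\kappa)$, and in fact $\kappa=O(\epsilon)$ would already suffice; but $W$ is only assumed non-increasing, and the motivating family $W(x)=(k+1)(1-x)^k$ fails to be Lipschitz for $k<1$ (while the max-min limit is a Dirac mass, in which case $\mathcal{E}=\min_p J=\mathcal{E}^e$ is distribution-free and the gap vanishes). To obtain the stated threshold I would instead route the comparison through the block discretization of Theorem~\ref{Theorem near optimal strengthem}: fix a partition into $n=O(1/\epsilon^{d})$ blocks of diameter $O(\epsilon)$ and replace $\mathcal{E},\mathcal{E}^e$ by their discretized surrogates, which share the \emph{same} sorted order because the empirical returns $\hat J$ do not depend on the distribution. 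The discretized weights are integrals $w_j=\int_{M_j}^{M_{j+1}}W\,\mathrm{d}x$ of the bounded function $W$ over cumulative-mass intervals, so each $w_j$ moves only with the perturbation of the block masses $m_j=\int_{\mathcal{P}_j}\mathbb{D}\,\mathrm{d}p$; aggregating these perturbations across the $n$ blocks and adding the $O(\epsilon)$ discretization error from Theorem~\ref{Theorem near optimal strengthem} bounds $\sup_\pi|\mathcal{E}-\mathcal{E}^e|$, and balancing against $\epsilon$ forces the admissible perturbation down to $\kappa=O(\epsilon^{d})$. The two delicate points I expect to fight are (i) making the ranking/weight estimate hold \emph{uniformly} over all $\pi\in\Pi$ despite $h(\pi,\cdot)$ being policy-dependent, and (ii) carefully tracking the constants through the $O(1/\epsilon^{d})$ blocks so that the final threshold is genuinely $\kappa=O(\epsilon^{d})$ rather than a worse power of $\epsilon$.
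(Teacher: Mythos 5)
Your outer skeleton is exactly the paper's: reduce to a per-policy bound $|\mathcal{E}(\pi)-\mathcal{E}^e(\pi)|\le\epsilon/2$ (the paper's Lemma on $|\mathcal{E}-\mathcal{E}^e|$) and finish with the same three-step sandwich $\mathcal{E}(\pi^e)\ge\mathcal{E}^e(\pi^e)-\epsilon/2\ge\mathcal{E}^e(\pi^\ast)-\epsilon/2\ge\mathcal{E}(\pi^\ast)-\epsilon$. Your decomposition into a change-of-measure term and a change-of-ranking term, the bound $|h(\pi,p)-h^e(\pi,p)|\le D_{TV}(\mathbb{D},\mathbb{D}^e)$, and your diagnosis that the only obstacle is the non-Lipschitzness of $W$ near $0$ all coincide with the paper's treatment of what it calls $\mathcal{E}_2$. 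Up to that point the proposal is sound.

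The gap is in how you propose to get the threshold $\kappa=O(\epsilon^d)$, and in particular where the dimension $d$ enters. The paper splits off the atom of $W$ at zero, $\mathcal{E}_1(\pi)=\bigl(\int_0^{0^+}W\bigr)\cdot\lim_{x\to0^+}H_\pi^{-1}(x)$, i.e.\ the essential-infimum (worst-case) component of the metric. The danger there is not a weight perturbation but a \emph{support} perturbation: $\mathbb{D}^e$ may assign zero mass to a neighborhood of the $\mathbb{D}$-worst-case parameter $p_0$, which raises $\lim_{x\to0^+}H_{\pi,e}^{-1}(x)$ by an amount unrelated to $\kappa$. The paper rules this out by a volume argument: a ball of radius $\delta_0=O(\epsilon)$ around $p_0$ (on which $J$ varies by at most $\epsilon$, via Theorem~\ref{Theorem near optimal strengthem}) has $\mathbb{D}$-mass at least $c\delta_0^d\cdot D/2=O(\epsilon^d)$, so taking $\kappa$ below that guarantees $\mathbb{D}^e$ retains mass there. \emph{That} is the sole source of the exponent $d$; the remainder of $W$ is handled by truncating it to a constant on $[0,\sigma]$ at an $O(\epsilon)$ cost, after which it is Lipschitz and $\kappa=O(\epsilon)$ suffices, and the final threshold is the minimum of the two. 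Your alternative — pushing everything through the $n=O(1/\epsilon^d)$-block discretization and "aggregating perturbations across the $n$ blocks" — does not produce this. The cumulative block masses satisfy $\sum_j|m_j-m_j^e|\le D_{TV}(\mathbb{D},\mathbb{D}^e)$ in total, so an Abel-summation over the common sorted order bounds the weight-perturbation contribution by $O(\kappa\,\|W\|_\infty J_0)$ with no factor of $n$ at all (so no $\epsilon^d$ arises from counting blocks), and when $W$ is unbounded or atomic at $0$ the same failure mode reappears inside the discretization: if $m^e_{\alpha_1}=0$ the mass $\int_0^{0^+}W$ migrates to a different block whose representative return may differ by $\Theta(1)$. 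So you would still need the paper's ball-mass/support-preservation argument for the worst blocks, and without it the claimed $\kappa=O(\epsilon^d)$ is not justified by the steps you describe.
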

Based on Theorems~\ref{Theorem near optimal strengthem} and~\ref{Theorem Empirical}, we have  Corollary~\ref{Corollary Empirical}.
\begin{corollary}
$\forall$ optimality requirement $\epsilon_1=3\epsilon>0$, $\exists~\delta_0=O(\epsilon)$ and $\kappa=O(\epsilon^d)$, such that as long as $D_{TV}(\mathbb{D},\mathbb{D}^e)\le\kappa$ and Policy\_Update can learn an $\epsilon$-suboptimal policy for metric $\hat{\mathcal{E}}$,
by running Algorithm~\ref{Algorithm with model} with diameter upper bound $\delta\le \delta_0$ and distribution $\mathbb{D}^e$, we can guarantee that the output policy $\hat{\pi}^e$ of Algorithm~\ref{Algorithm with model} satisfies
\begin{equation}
    \mathcal{E}(\hat{\pi}^e)\ge \mathcal{E}(\pi^\ast)-\epsilon_1.
\vspace{-0.8cm}    
\end{equation}
\label{Corollary Empirical}
\end{corollary}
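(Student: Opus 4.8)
The plan is to split the target error $\epsilon_1=3\epsilon$ into an optimization error of $2\epsilon$, controlled by Theorem~\ref{Theorem near optimal strengthem}, and a distribution-mismatch error of $\epsilon$, controlled by the stability estimate underlying Theorem~\ref{Theorem Empirical}. The key preliminary observation is that Algorithm~\ref{Algorithm with model} and Theorem~\ref{Theorem near optimal strengthem} are agnostic to \emph{which} distribution is supplied: when the algorithm is fed $\mathbb{D}^e$, line~\ref{a1l5} computes the block masses $m_j$ under $\mathbb{D}^e$, so the discretized objective $\hat{\mathcal{E}}$ approximates $\mathcal{E}^e$ rather than $\mathcal{E}$. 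Hence I would apply Theorem~\ref{Theorem near optimal strengthem} to the UOR-PMDP instantiated with $\mathbb{D}^e$, taking the optimality requirement to be $2\epsilon$ (i.e. $\epsilon_0=\epsilon$) and any $\delta\le\delta_0=O(\epsilon)$; since $\pi^e=\argmax_\pi\mathcal{E}^e(\pi)$ by definition, this yields $\mathcal{E}^e(\hat{\pi}^e)\ge \mathcal{E}^e(\pi^e)-2\epsilon$.

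Next I would invoke the per-policy distributional stability bound that is established inside the proof of Theorem~\ref{Theorem Empirical}: there exists $\kappa=O(\epsilon^d)$ such that $D_{TV}(\mathbb{D},\mathbb{D}^e)\le\kappa$ implies $\lvert \mathcal{E}(\pi)-\mathcal{E}^e(\pi)\rvert\le \epsilon/2$ for \emph{every} policy $\pi$. Crucially, I would \emph{not} feed Theorem~\ref{Theorem Empirical} in as a black box on $\pi^e$ versus $\pi^\ast$, since that route spends a full $\epsilon$ moving from $\pi^e$ to $\pi^\ast$ and then still needs the bridging bound, giving a looser $4\epsilon$. Instead I would route through the $\mathcal{E}^e$-optimality of $\pi^e$ directly against $\pi^\ast$ and spend the stability budget only twice, at $\hat{\pi}^e$ and at $\pi^\ast$. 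Concretely:
\begin{align}
\mathcal{E}(\hat{\pi}^e) &\ge \mathcal{E}^e(\hat{\pi}^e)-\tfrac{\epsilon}{2}
\ge \mathcal{E}^e(\pi^e)-2\epsilon-\tfrac{\epsilon}{2} \notag\\
&\ge \mathcal{E}^e(\pi^\ast)-2\epsilon-\tfrac{\epsilon}{2}
\ge \mathcal{E}(\pi^\ast)-3\epsilon, \notag
\end{align}
where the second inequality is the optimization guarantee above, the third uses $\mathcal{E}^e(\pi^e)\ge\mathcal{E}^e(\pi^\ast)$, and the first and last use the stability bound at $\hat{\pi}^e$ and $\pi^\ast$ respectively. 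This is exactly $\mathcal{E}(\hat{\pi}^e)\ge\mathcal{E}(\pi^\ast)-\epsilon_1$, so I would then simply collect the conditions $\delta\le\delta_0=O(\epsilon)$ and $\kappa=O(\epsilon^d)$ to match the statement.

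The main obstacle is the per-policy stability bound $\lvert\mathcal{E}(\pi)-\mathcal{E}^e(\pi)\rvert\le\epsilon/2$, and in particular justifying that it holds \emph{uniformly} over $\pi$ with $\kappa$ scaling as $\epsilon^d$. The difficulty is that $\mathbb{D}$ enters $\mathcal{E}(\pi)=\int_{\mathcal{P}}\mathbb{D}(p)\,J(\pi,p)\,W(h(\pi,p))\,\mathrm{d}p$ in two places: as the outer integrating measure, where a $D_{TV}$ perturbation changes the objective by at most $O(\lVert J\rVert_\infty\,\lVert W\rVert_\infty\,D_{TV})$, and inside the ranking function $h(\pi,\cdot)$, which itself is defined through $\mathbb{D}$. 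A small total-variation perturbation can reorder the performance ranking of environments and thereby shift $W(h(\pi,p))$; bounding this reordering effect uniformly in $\pi$ is the crux, and it is precisely this sensitivity—tied, as in Theorem~\ref{Theorem near optimal strengthem}, to blocks of diameter $O(\epsilon)$ and hence volume $O(\epsilon^d)$—that forces $\kappa$ to be as small as $O(\epsilon^d)$. I would expect to reuse the monotonicity and boundedness of $W$ together with the Lipschitz-in-$p$ control of $J$ from Assumptions~\ref{Assumption R,T continue}--\ref{assumption lipschitz of p} to make this reordering argument quantitative.
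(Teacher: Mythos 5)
Your proposal is correct and follows essentially the same route as the paper: the paper's own proof also chains $\mathcal{E}(\hat{\pi}^e)\ge\mathcal{E}^e(\hat{\pi}^e)-\epsilon\ge\mathcal{E}^e(\pi^e)-2\epsilon\ge\mathcal{E}^e(\pi^\ast)-2\epsilon\ge\mathcal{E}(\pi^\ast)-3\epsilon$, invoking the per-policy stability lemma (Lemma~\ref{lemma2}) exactly twice, at $\hat{\pi}^e$ and at $\pi^\ast$, rather than applying Theorem~\ref{Theorem Empirical} as a black box. The only difference is bookkeeping: the paper allocates $\epsilon+\epsilon+\epsilon$ (treating the Policy\_Update guarantee as directly giving $\epsilon$-suboptimality for $\mathcal{E}^e$), whereas your $\tfrac{\epsilon}{2}+2\epsilon+\tfrac{\epsilon}{2}$ split is slightly more careful in separating the discretization error of $\hat{\mathcal{E}}$ from the optimizer's suboptimality via Theorem~\ref{Theorem near optimal strengthem}.
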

\vspace{-0.1mm}
Corollary~\ref{Corollary Empirical} demonstrates that even running Algorithm~\ref{Algorithm with model} with the biased distribution $\mathbb{D}^e$, as long as $\mathbb{D}^e$ is close enough to $\mathbb{D}$, the output policy is still near-optimal.
\subsection{Distribution-Free UOR-RL}
\subsubsection{Algorithm design}
In practice, it is likely that the distribution function $\mathbb{D}$ is unknown, making Algorithm~\ref{Algorithm with model} not applicable. Therefore, we propose the \textit{Distribution-Free UOR-RL (DF-UOR-RL)} training algorithm in Algorithm \ref{Algorithm distribution free} that trains a satisfactory policy even without any knowledge of the distribution function $\mathbb{D}$. 

\begin{algorithm}[t]
\caption{DF-UOR-RL Algorithm}
\label{Algorithm distribution free}
	\small
	\tcp{Initialization}
	Initialize empty trajectory clusters $\mathcal{C}_1,\mathcal{C}_2,\cdots,\mathcal{C}_{n_1}$, cluster size $n_2$, and policy $\pi_{\theta^0}$\;\label{a2l1}
	\tcp{Policy Training}
	\ForEach{Iteration $i=0$ to max-iterations\label{a2l2}}{
	\ForEach{$j=1$ to $n_1$\label{a2l3}}{
	\ForEach{$k=1$ to $n_2$\label{a2l4}}{
	    Observe environment parameter $p_{j,k}$\;\label{a2l5}
	    Execute $\pi_{\theta^i}$ under $p_{j,k}$,
	    get trajectory $\xi_{j,k}$, and $\mathcal{C}_j\leftarrow\mathcal{C}_j\cup\{\xi_{j,k}\}$\;\label{a2l6}
	    Evaluate discounted reward $\hat{J}(\xi_{j,k})$ of $\xi_{j,k}$\;\label{a2l7}
	}
	$\hat{J}_j\leftarrow\frac{1}{|\mathcal{C}_j|}\cdot \sum_{\xi_{j,k}\in \mathcal{C}_j} \hat{J}(\xi_{j,k})$\;\label{a2l8}
	}
	Sort the sequence $\{\hat{J}_j\}$ into an increasing sequence  $\{\hat{J}_{\alpha_j}\}$\;\label{a2l9}
	\tcp{Metric Calculation}
    Initialize metric $\tilde{\mathcal{E}}(\pi_{\theta^i})\leftarrow0$\;\label{a2l10}
    \ForEach{$j=1$ to $n$\label{a2l11}}
    {
    $w_j\leftarrow\int_{(j-1)/n}^{j/n} W(x)\mathrm{d}x$\;\label{a2l12}
    $\tilde{\mathcal{E}}(\pi_{\theta^i})\leftarrow\tilde{\mathcal{E}}(\pi_{\theta^i})+w_j\cdot \hat{J}_{\alpha_j}$\;\label{a2l13}
    }
    \tcp{Policy Update}
    $\pi_{\theta^{i+1}}\leftarrow
    $ Policy\_Update$(\pi_{\theta^i},\tilde{\mathcal{E}}(\pi_{\theta^i}))$\;\label{a2l14}
	}
\end{algorithm}

At the beginning, Algorithm~\ref{Algorithm distribution free} randomly sets the initial policy $\pi_{\theta^0}$ and $n_1$ empty clusters, and chooses the size of each cluster as $n_2$ (line~\ref{a2l1}). We will introduce in detail how to set the number of clusters $n_1$ and cluster size $n_2$ in Section~\ref{section Model Free Analysis}. Then, Algorithm~\ref{Algorithm distribution free} begins to train the policy (line \ref{a2l2}-\ref{a2l14}). In each iteration $i$, it samples $n_2$ trajectories for each cluster $\mathcal{C}_j$, by executing the current policy $\pi_{\theta^i}$ under the observed environment parameters (line~\ref{a2l5}-\ref{a2l6}), and evaluates the discounted reward of these trajectories (line~\ref{a2l7}).  After that, Algorithm~\ref{Algorithm distribution free} evaluates the performance $\hat{J}_j$ of each cluster $\mathcal{C}_j$ by averaging the discounted reward of the trajectories in the cluster  (line \ref{a2l8}), and sorts the sequence $\{\hat{J}_j\}$ into an increasing one $\{\hat{J}_{\alpha_j}\}$ (line~\ref{a2l9}). Then, Algorithm \ref{Algorithm distribution free} calculates the metric $\tilde{\mathcal{E}}(\pi_{\theta^i})$, which is an approximation of the UOR metric $\mathcal{E}(\pi_{\theta^i})$ (line \ref{a2l10}-\ref{a2l13}).  Initially, it sets the metric $\tilde{\mathcal{E}}(\pi_{\theta^i})$  as zero (line~\ref{a2l10}). Then, for each cluster $\mathcal{C}_j$, Algorithm~\ref{Algorithm distribution free} allocates the weight to cluster according to its ranking $\alpha_j$ and the preference function (line~\ref{a2l12}) and updates the $\tilde{\mathcal{E}}(\pi_{\theta^i})$ (line~\ref{a2l13}) based on the weight and performance of the cluster. Finally, Algorithm~\ref{Algorithm distribution free} obtains the $\tilde{\mathcal{E}}(\pi_{\theta^i})$ and uses it to update the policy (line~\ref{a2l14}).

Different from Algorithm \ref{Algorithm with model}, due to the lack of the knowledge of the distribution $\mathbb{D}$, Algorithm \ref{Algorithm distribution free} observes the environment parameter rather than directly sample it according to $\mathbb{D}$. Given that it is of large bias to evaluate $J(\pi,p)$ from only one trajectory, Algorithm~\ref{Algorithm distribution free} averages the discounted rewards of $n_2$ trajectories. The clusters in Algorithm~\ref{Algorithm distribution free} have the same functionality as the blocks in Algorithm~\ref{Algorithm with model}, and Algorithm~\ref{Algorithm distribution free} uses them to calculate an approximate UOR metric $\tilde{\mathcal{E}}(\pi_{\theta^i})$.
\subsubsection{Algorithm Analysis}
\label{section Model Free Analysis}
To analyze Algorithm~\ref{Algorithm distribution free}, we introduce an additional mild Assumption 5 on two properties of the environment parameters, including the difference between consecutively sampled environment parameters in line~\ref{a2l5} of Algorithm~\ref{Algorithm distribution free}, and the convergence rate of the posterior distribution of the environment parameter to the distribution $\mathbb{D}$. Because of space limit, we provide the detailed description of Assumption ~\ref{Assumption_5_complete} in Appendix \ref{Proof of Distribution Free}. 

Based on Assumptions~\ref{Assumption R,T continue}-\ref{Assumption_5_complete}, we have the theoretical guarantee of Algorithm~\ref{Algorithm distribution free} in the following Theorem~\ref{Theorem distribution function free}.
\begin{theorem}
\label{Theorem distribution function free}
$\forall$ optimality requirement $\epsilon=2\epsilon_0>0$ and confidence $\rho$,  $\exists~n_1=\Theta(\frac{-\ln\rho}{\epsilon^2}),n_2=\Theta(\frac{-\ln\rho}{\epsilon^{2d+2}})$, such that as long as Policy\_Update can learn an $\epsilon_0$-suboptimal policy for metric $\tilde{\mathcal{E}}$, by running Algorithm~\ref{Algorithm distribution free} with trajectory cluster number larger than $n_1$ and cluster size larger than $n_2$, we can guarantee that the output policy $\tilde{\pi}$ of Algorithm~\ref{Algorithm distribution free} satisfies
\begin{equation}
\mathcal{E}(\pi)\ge\mathcal{E}(\tilde{\pi})-\epsilon
\end{equation} 
with confidence more than $1-\rho$.
\end{theorem}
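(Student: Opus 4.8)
The plan is to reduce Theorem~\ref{Theorem distribution function free} to the same template used for Theorem~\ref{Theorem near optimal}: first show that the empirical metric $\tilde{\mathcal{E}}$ computed on line~\ref{a2l13} is, with probability at least $1-\rho$, within $\epsilon_0/2$ of the true UOR metric $\mathcal{E}$ for the relevant policies (in particular $\tilde{\pi}$ and $\pi^\ast$), and then fold in the Policy\_Update guarantee. (I read the conclusion as the near-optimality bound $\mathcal{E}(\tilde{\pi})\ge\mathcal{E}(\pi^\ast)-\epsilon$, matching Theorems~\ref{Theorem near optimal}--\ref{Theorem Empirical}.) The enabling observation is that $\mathcal{E}$ has a quantile representation: writing $F_\pi$ for the cumulative distribution function of the random return $J(\pi,P)$ with $P\sim\mathbb{D}$, we have $h(\pi,p)=F_\pi(J(\pi,p))$, so the substitution $u=F_\pi(J(\pi,p))$ gives
\begin{equation}
\mathcal{E}(\pi)=\int_\mathcal{P}\mathbb{D}(p)\,J(\pi,p)\,W(h(\pi,p))\,\mathrm{d}p=\int_0^1 Q_\pi(u)\,W(u)\,\mathrm{d}u,
\end{equation}
where $Q_\pi=F_\pi^{-1}$ is the quantile function. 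Algorithm~\ref{Algorithm distribution free} is precisely a plug-in estimator of this integral: the sorted cluster averages $\hat{J}_{\alpha_1}\le\cdots\le\hat{J}_{\alpha_{n_1}}$ are empirical order statistics tracking $Q_\pi$ on the bins $[(j-1)/n_1,\,j/n_1]$, and the weights $w_j=\int_{(j-1)/n_1}^{j/n_1}W(x)\,\mathrm{d}x$ integrate $W$ over those same bins, so $\tilde{\mathcal{E}}(\pi)$ is a Riemann-type approximation of $\int_0^1 Q_\pi(u)W(u)\,\mathrm{d}u$.

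I would then control $\lvert\tilde{\mathcal{E}}(\pi)-\mathcal{E}(\pi)\rvert$ through a three-term split: a noiseless discretization $\bar{\mathcal{E}}(\pi)=\sum_{j}w_j\,Q_\pi(j/n_1)$ that uses the true quantiles, an intermediate $\check{\mathcal{E}}(\pi)$ built from the empirical order statistics of the infinite-trajectory cluster performances, and the algorithm's actual $\tilde{\mathcal{E}}(\pi)$. The deterministic term $\lvert\bar{\mathcal{E}}(\pi)-\mathcal{E}(\pi)\rvert$ is a quantile-discretization error: since $W$ is non-increasing and integrable on $[0,1]$, and the Lipschitz machinery already developed for Theorem~\ref{Theorem near optimal strengthem} makes $J(\pi,\cdot)$ vary by only $O(\delta)$ over a region of diameter $\delta$, this term is of lower order. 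The term $\lvert\check{\mathcal{E}}(\pi)-\bar{\mathcal{E}}(\pi)\rvert$ is the fluctuation of the empirical quantiles of the cluster performances around $Q_\pi(j/n_1)$; bounding the empirical CDF uniformly by a Dvoretzky--Kiefer--Wolfowitz-type inequality shows this is $O\!\bigl(\sqrt{-\ln\rho/n_1}\bigr)$, which is exactly what forces $n_1=\Theta(-\ln\rho/\epsilon^2)$.

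The statistical term $\lvert\tilde{\mathcal{E}}(\pi)-\check{\mathcal{E}}(\pi)\rvert$ carries the Monte-Carlo and within-cluster error, and is where Assumption~\ref{Assumption_5_complete} enters. Each $\hat{J}_j$ averages $n_2$ bounded trajectory returns, so a Hoeffding bound together with a union bound over the $n_1$ clusters (at confidence level $\rho/n_1$) controls the variance part; the consecutive-parameter closeness in Assumption~\ref{Assumption_5_complete} bounds the diameter of the parameters inside a cluster, which via the Lipschitz constants of Assumptions~\ref{Assumption R,T continue}--\ref{Assumption_5_complete} bounds the bias between $\hat{J}_j$ and the performance at a single representative parameter, and the posterior-convergence part guarantees that the empirical frequencies of the cluster regions approach their $\mathbb{D}$-masses so the equal-width quantile weights $w_j$ are justified. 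A small additional argument handles the sorting: because $W$ is bounded and adjacent quantiles differ by $O(1/n_1)$ in value, a rank swap between two clusters whose true performances lie within the $O(\epsilon)$ noise level perturbs the weighted sum by only $O(\epsilon)$, so misranking among near-equal clusters is harmless. Combining the three terms so that the total approximation error $e\le\epsilon_0/2$, and chaining
\begin{equation}
\mathcal{E}(\tilde{\pi})\ge\tilde{\mathcal{E}}(\tilde{\pi})-e\ge\tilde{\mathcal{E}}(\pi^\ast)-\epsilon_0-e\ge\mathcal{E}(\pi^\ast)-\epsilon_0-2e\ge\mathcal{E}(\pi^\ast)-2\epsilon_0=\mathcal{E}(\pi^\ast)-\epsilon,
\end{equation}
yields the claim with confidence $1-\rho$.

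The main obstacle is the statistical term, and specifically deriving the $d$-dependence of $n_2$. Resolving the quantile function of a distribution over the $d$-dimensional space $\mathcal{P}$ to accuracy $\epsilon$ requires the clusters to cover $\mathcal{P}$ at spatial resolution $\epsilon$, so that each cluster concentrates in a ball of probability mass $\Theta(\epsilon^d)$; translating the slowly-mixing observation process of Assumption~\ref{Assumption_5_complete} into the number of per-cluster samples needed to both form such a tight cluster and drive its empirical frequency to its $\mathbb{D}$-mass, while simultaneously achieving the $\epsilon^{-2}$ variance reduction, is what produces $n_2=\Theta(-\ln\rho/\epsilon^{2d+2})$ and hence the total sample complexity $n_1 n_2=\Theta(\epsilon^{-(2d+4)})$ announced in the introduction. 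Making this covering-plus-mixing accounting simultaneous with the uniform (over policies) control needed to invoke the Policy\_Update guarantee is the delicate part; the discretization and DKW steps are comparatively routine.
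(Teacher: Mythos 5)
Your overall architecture coincides with the paper's: both proofs pass through the quantile (layer-cake) representation of $\mathcal{E}$, split $\lvert\tilde{\mathcal{E}}(\pi)-\mathcal{E}(\pi)\rvert$ into a deterministic discretization error (inherited from Theorem~\ref{Theorem near optimal strengthem}, which gives $\lvert\hat{\mathcal{E}}(\pi)-\mathcal{E}(\pi)\rvert\le\epsilon/2$), a per-cluster Monte-Carlo term controlled by Hoeffding over the $n_2$ trajectories in a cluster, and a term measuring how well the empirical cluster frequencies reproduce the $\mathbb{D}$-masses of the $N=O(\epsilon^{-d})$ blocks; and both finish with the same four-step chaining through the Policy\_Update guarantee. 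Your reading of the conclusion as $\mathcal{E}(\tilde{\pi})\ge\mathcal{E}(\pi^\ast)-\epsilon$ is also the intended one.

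There are, however, two concrete gaps. First, you invoke a Dvoretzky--Kiefer--Wolfowitz-type bound for the empirical quantiles of the cluster performances, but DKW is an i.i.d.\ result, and Assumption~\ref{Assumption_5_complete} explicitly makes the observed environment parameters a \emph{dependent} sequence: only the posterior distributions $\mathbb{D}_x^y$ converge to $\mathbb{D}$, at a summable total-variation rate. The paper's proof replaces DKW by a Doob martingale built from the block-membership indicators $X_j^i=\mathds{1}[p_j\in\mathcal{P}_i]$ and applies the Azuma--Hoeffding inequality, with the bounded-difference constant $c=1+2B$ (where $B=\sum_y D_{TV}(\mathbb{D},\mathbb{D}_1^y)$) absorbing the dependence; without this or an equivalent dependence-robust deviation bound your $O(\sqrt{-\ln\rho/n_1})$ claim does not follow. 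Second, you explicitly defer ``the covering-plus-mixing accounting'' that produces the exponent $2d+2$, but that accounting is precisely where the quantitative content of the theorem lives: one must require $\lvert m_i'-m_i\rvert\le\epsilon_2/N$ for each of the $N$ blocks simultaneously, and the factor $N^2=O(\epsilon^{-2d})$ in the Azuma exponent is what forces the $\Theta(-\ln\rho/\epsilon^{2d+2})$ sample requirement. Your sorting/misranking worry, by contrast, is dispatched in the paper without any rank-swap argument, via the two-sided level-set comparison $m[g'\ge y-2\epsilon_1]\le m[g\ge y]+\epsilon_2$ and $m[g'\ge y+2\epsilon_1]\ge m[g\ge y]-\epsilon_2$ followed by an Abel transformation of both step functions. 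As written, the proposal is a sound plan with the right decomposition, but the two statistical steps it treats as routine or defers are exactly the ones that carry the theorem's rates.
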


Theorem \ref{Theorem distribution function free} provides guidelines for setting the cluster size $n_1$ and cluster number $n_2$ in Algorithm~\ref{Algorithm distribution free}. In fact, as $n_1$ and $n_2$ increase, the performance evaluation of the cluster and the weight allocated to the cluster will be more accurate, both of which lead to a more accurate approximation of the UOR metric. However, the increase of either $n_1$ or $n_2$ leads to an increased complexity of each iteration of Algorithm~\ref{Algorithm distribution free}. To deal with such trade-off, we could set $n_1$ and $n_2$ based on the lower bounds in Theorem~\ref{Theorem distribution function free}, through which Algorithm~\ref{Algorithm distribution free} can guarantee both the optimality requirement $\epsilon$ and $O(\frac{\ln^2{\rho}}{\epsilon^{2d+4}})$ complexity of each iteration.

\section{Experiments}

\subsection{Baseline Methods}
We compare UOR-RL with the following four baselines. \\
\textbf{Domain Randomization-Uniform (DR-U).} Domain Randomization (DR) \cite{tobin2017domain} is a method that randomly samples environment parameters in a domain, and optimizes the expected return over all collected trajectories. DR-U is an instance of DR, which samples environment parameters from a uniform distribution.
\textbf{Domain Randomization-Gaussian (DR-G).} DR-G is another instance of DR, which samples environment parameters from a Gaussian distribution.
\textbf{Ensemble Policy Optimization (EPOpt).} EPOpt \cite{rajeswaran2016epopt} is a method that aims to find a robust policy through optimizing the performance of the worst few collected trajectories. 
\textbf{Monotonic Robust Policy Optimization (MRPO).} MRPO \cite{jiang2021monotonic} is the state-of-the-art robust RL method, which is based on EPOpt and jointly optimizes the performance of the policy in both the average and worst cases.

\subsection{MuJoCo Tasks and Settings}
We conduct experiments in six MuJoCo \cite{mujoco} tasks of version-0 based on Roboschool\footnote{\url{https://openai.com/blog/roboschool}}, including Walker 2d, Reacher, Hopper, HalfCheetah, Ant, and Humanoid. In each of the six tasks,  by setting different environment parameters, we get a series of environments with the same optimization goal but different dynamics.  Besides, we take 6 different random seeds for each task, and compare the performance of our algorithms to the baselines under these seeds. Because of space limit, we put the specific environment parameter settings and the random seed settings during the training process in Appendix \ref{para_training}. For testing, in each environment, we sample 100 environment parameters following the Gaussian distributions truncated over the range given in Table \ref{env_para}. 

\begin{table}[!ht]
    \renewcommand\arraystretch{1.2}
    \caption{Environment Parameter Settings for Testing.}
    \centering
    \resizebox{\columnwidth}{!}{
    \begin{tabular}{c|c|c|c}
    \Xhline{1.25pt}
    \textbf{Task} & \textbf {Parameters} & \textbf{Range $\mathcal{P}$} & \textbf{Distribution $\mathbb{D}$}\\
    \hline
    \multirow{2}{*}{Reacher} & Body size & [0.008,0.05]& $\mathcal N(0.029,0.007^2)$\\  
    & Body length & [0.1,0.13] & $\mathcal N(0.015,0.005^2)$\\ \hline
    \multirow{2}{*}{Hopper} &  Density & [750,1250] & $\mathcal N(1000,83.3^2)$\\  
    & Friction & [0.5, 1.1] & $\mathcal N(0.8,0.1^2)$\\ \hline
    \multirow{2}{*}{Half Cheetah} &  Density & [750,1250] & $\mathcal N(1000,83.3^2)$\\  
    & Friction & [0.5, 1.1] & $\mathcal N(0.8,0.1^2)$\\ \hline
    \multirow{2}{*}{Humanoid} &  Density & [750,1250] & $\mathcal N(1000,83.3^2)$\\  
    & Friction & [0.5, 1.1] & $\mathcal N(0.8,0.1^2)$\\ \hline
    \multirow{2}{*}{Ant} &  Density & [750,1250] & $\mathcal N(1000,83.3^2)$\\  
    & Friction & [0.5, 1.1]& $\mathcal N(0.8,0.1^2)$\\ \hline
    \multirow{2}{*}{Walker 2d} &  Density & [750,1250] & $\mathcal N(1000,83.3^2)$\\ 
    & Friction & [0.5, 1.1]& $\mathcal N(0.8,0.1^2)$\\
    \Xhline{1.25pt}
    \end{tabular}}
    \label{env_para}
\end{table}

\begin{table*}[!ht]
    \vspace{-0.46cm}
    \caption{Test results ($k=1$). Each value denotes the mean and std of $\mathcal{E}_1$ over 6 seeds.}
    \vspace{-0.1cm}
    \newcolumntype{"}{@{\hskip\tabcolsep\vrule width 1.25pt\hskip\tabcolsep}}
    \centering
    \small
    \begin{tabular}{c"cccccc}
    \Xhline{1.25pt}
    \textbf{Algorithm} & \textbf{Reacher} & \textbf{Hopper} &\textbf{Half Cheetah} &\textbf{Humanoid} & \textbf{Ant} & \textbf{Walker 2d} \\
    \Xhline{1.25pt}
    DR-U & $10.92 \pm 1.90 $ & $1465 \pm 447.78$ & $2375 \pm 70.36$ & $54.73 \pm 3.73$ & $2247 \pm 568.53$ & $1022 \pm 368.74$ \\
    DR-G & $10.86 \pm 2.34 $ & $1557 \pm 346.97$ & $2382 \pm 64.94$ & $56.43 \pm 12.77$ & $2258 \pm 538.72$ & $838 \pm 211.86$\\
    EPOpt  & $12.57\pm 0.59$ & $1360 \pm 631.32$ & $2450 \pm 84.62$ & $65.37 \pm7.53$ & $2307 \pm 192.94$ & $1058 \pm 432.98$\\
    MRPO  & $11.88\pm 2.27$ & $1578\pm333.85$ & $2665 \pm 147.89$ & $89.48 \pm 	5.72$ & $2303 \pm 560.06$ & $1368 \pm 519.04$\\
    DB-UOR-RL  & $\mathbf{14.38 \pm 0.86}$ & $\mathbf{1942 \pm 348.97}$ & $\mathbf{3067 \pm 169.31}$ & $\mathbf{95.57 \pm 17.64}$ & $3212 \pm 187.34$ & $\mathbf{1373 \pm 567.17}$\\
    DF-UOR-RL  & $13.63\pm 0.59$ & $1772 \pm 313.59$ & $2760 \pm 238.41$ & $93.97 \pm 20.36$ & $\mathbf{3213 \pm 399.13}$ & $1268 \pm 348.03$\\
    \Xhline{1.25pt}
    \end{tabular}
    \label{k=1}
\end{table*}
\begin{table*}[!ht]
    \vspace{-0.15cm}
    \caption{Test results ($k=0$). Each value denotes the mean and std of the average return of all trajectories over 6 seeds. }
    \vspace{-0.1cm}
    \newcolumntype{"}{@{\hskip\tabcolsep\vrule width 1.25pt\hskip\tabcolsep}}
    \centering
    \small
     \begin{tabular}{c"cccccc}
    \Xhline{1.25pt}
    \textbf{Algorithm} & \textbf{Reacher} & \textbf{Hopper} &\textbf{Half Cheetah} &\textbf{Humanoid} & \textbf{Ant} & \textbf{Walker 2d} \\
    \Xhline{1.25pt}
    DR-U & $16.47 \pm 1.91 $ & $1768 \pm 471.06$ & $2428 \pm 89.76$ & $71.32 \pm 5.10$ & $2272 \pm 572.20$ & $1159 \pm 369.08$ \\
    DR-G & $19.28 \pm 1.21 $ & $1993 \pm 159.17$ & $2420 \pm 61.21$ & $72.38 \pm 15.77$ & $2475 \pm 86.20$ & $1145 \pm 503.29$\\
    EPOpt  & $17.83\pm 0.49$ & $1455 \pm 675.24$ & $2555 \pm 142.51$ & $82.30 \pm 7.81$ & $2328 \pm 203.12$ & $1160 \pm 418.16$\\
    MRPO  & $17.55\pm 1.43$ & $1940 \pm193.08$ & $2652 \pm 125.13$ & $99.83 \pm 6.89$ & $2295 \pm 574.06$ & $1268 \pm 548.13$\\
    DB-UOR-RL  & $\mathbf{19.97 \pm 1.11 }$ & $\mathbf{2007 \pm 379.03}$ & $\mathbf{2910 \pm 135.06}$ & $\mathbf{104.53 \pm 18.68}$ & $3393 \pm 240.89$ & $\mathbf{1403 \pm 234.75}$\\
    DF-UOR-RL  & $19.53\pm 0.65$ & $1805 \pm 246.96$ & $2698 \pm 293.90$ & $98.28 \pm 11.51$ & $\mathbf{3410 \pm 106.21}$ & $1208 \pm 200.76$\\
    \Xhline{1.25pt}
    \end{tabular}
    \label{ave}
\end{table*}
\begin{table*}[ht!]
    \vspace{-0.15cm}
    \caption {Test results ($k=21$). Each value denotes the mean and std of the average return of worst 10\% trajectories over 6 seeds.}
    \vspace{-0.1cm}
    \newcolumntype{"}{@{\hskip\tabcolsep\vrule width 1.25pt\hskip\tabcolsep}}
    \centering
    \small
     \begin{tabular}{c"cccccc}
    \Xhline{1.25pt}
    \textbf{Algorithm} & \textbf{Reacher} & \textbf{Hopper} &\textbf{Half Cheetah} &\textbf{Humanoid} & \textbf{Ant} & \textbf{Walker 2d} \\
    \Xhline{1.25pt}
    DR-U & $-0.81 \pm 1.86 $ & $442 \pm	136.28$ & $1895 \pm 112.92$ & $8.00\pm 7.19$ & $2163 \pm 564.79$ & $461 \pm 369.96$ \\
    DR-G & $-1.90 \pm 3.73 $ & $593 \pm 367.47$ & $1957 \pm 225.45$ & $8.55 \pm 7.37$ & $2133 \pm 531.10$ & $380 \pm 388.40$\\
    EPOpt  & $-0.14\pm 1.79$ & $664 \pm 495.87$ & $2132 \pm 418.40$ & $10.47 \pm 4.20$ & $2192 \pm 153.55$ & $360 \pm 222.00$\\
    MRPO  & $-3.66\pm 7.54$ & $678 \pm640.70$ & $\mathbf{2523 \pm 195.72}$ & $34.92\pm 	5.70$ & $2187 \pm555.22$ & $515 \pm 419.33$\\
    DB-UOR-RL  & $\mathbf{3.82 \pm 	0.85 }$ & $\mathbf{	824 \pm	583.20}$ & $2497 \pm 404.51$ & $\mathbf{	35.45\pm 	7.97}$ & $3132 \pm 119.07$ & $\mathbf{624 \pm 283.99}$\\
    DF-UOR-RL  & $1.75\pm 	1.66$ & $452	 \pm 641.92$ & $2383 \pm 581.09$ & $	33.18 \pm 44.76$ & $\mathbf{3265 \pm 	196.34}$ & $497 \pm 278.93	$\\
    \Xhline{1.25pt}
    \end{tabular}
    \label{worst10}
\end{table*}
\begin{figure*}[ht!]
    \centering
    \includegraphics[width=\textwidth]{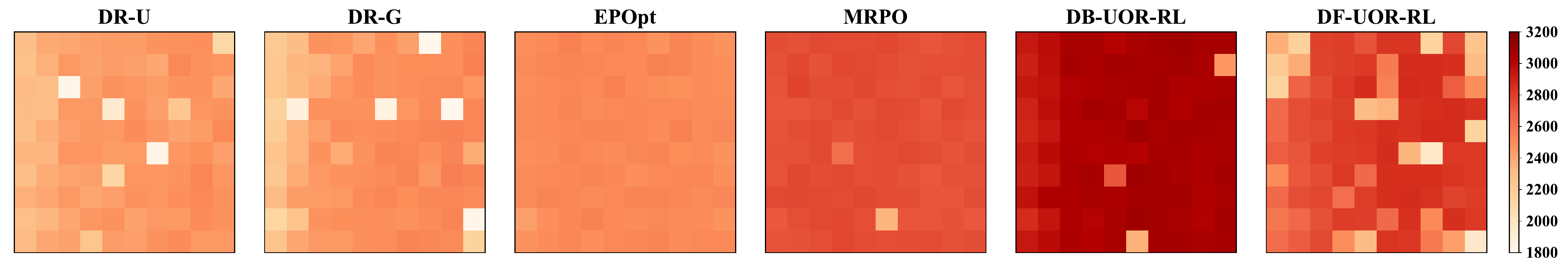}
    \vspace{-0.6cm}
    \caption{Heat map of $\mathcal{E}_1$ in sub-ranges (Half Cheetah). The x-axis and y-axis denote friction and density, respectively. The ranges of these two parameters are chosen as in Table \ref{env_para}, and are evenly divided into 10 sub-ranges. }
    \label{Halfcheetah-v0}
\end{figure*}

\begin{figure*}[ht!]
\vspace{-0.2cm}
    \centering
    \includegraphics[width=\textwidth]{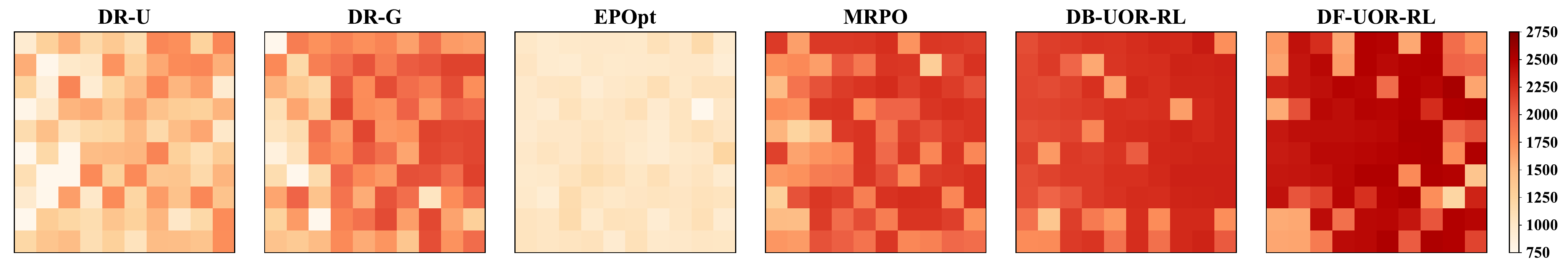}
    \vspace{-0.6cm}
    \caption{Heat map of $\mathcal{E}_1$ in sub-ranges (Hopper).  The x-axis and y-axis denote friction and density, respectively. The ranges of these two parameters are chosen as in Table \ref{env_para}, and are evenly divided into 10 sub-ranges.}
    \vspace{-0.45cm}
    \label{Hopper-v0}
\end{figure*}

In the experiments, we let the preference function $W$ take the form as given by Equation (\ref{fomula robust weight function}), which uses a robustness degree $k$ to represent user preference.  
Thus, we conduct experiments on various UOR metrics, including the average and max-min robustness, by choosing different $k$ in Equation (\ref{fomula robust weight function}), and we use $\mathcal{E}_{k_0}$ to denote the UOR metric with $k=k_0$.

Considering that the state and action spaces of MuJoCo are high-dimensional and continuous, we choose to use deep neural networks to represent the policies of UOR-RL and the baseline methods, and use PPO \cite{schulman2017proximal} to implement the policy updating process. 

\subsection{Experimental Results and Discussions}

We compare UOR-RL with the baseline methods when $k\in\{0,1,21\}$. Specifically, when $k=0$, $W(x)\equiv1$, and thus $\mathcal{E}_{0}$ is equivalent to the expected return $\mathbb{E}_{p\sim \mathbb{D}}\left[J(\pi,p)\right]$ over the distribution $\mathbb{D}$; when $k=21$, $\mathcal{E}_{21}$ approximates the expected return over the worst 10\% trajectories, because more than 90\% weight is allocated to them according to the preference function $W$; when $k=1$, $\mathcal{E}_{1}$ represents the UOR metric between $\mathcal{E}_{0}$ and $\mathcal{E}_{21}$. 

Table \ref{k=1} shows the test results under the UOR metric $\mathcal{E}_{1}$. Among all algorithms, DB-UOR-RL performs the best, and it outperforms the four baselines in each environment. Such results indicate that DB-UOR-RL is effective under metric $\mathcal{E}_{1}$. At the same time, although the performance of DF-UOR-RL is not as good as that of DB-UOR-RL, it is better than those of the baselines. This shows that DF-UOR-RL could output competitive policies, even when the distribution of environment parameters is unknown. 

Table \ref{ave} shows the test result under the average return of all trajectories. In most environments, DR-G achieves the best performance among the baselines under such average-case metric, because it directly takes this average-case metric as its optimization objective. From Table \ref{ave}, we could observe that the performance of DB-UOR-RL and DF-UOR-RL is close to or better than that of DR-G in most environments. Such observation indicates that UOR-RL can also yield acceptable results, when robustness is not considered.

Table \ref{worst10} shows the test result under the average return of the worst 10\% trajectories. From the table, both DB-UOR-RL and DF-UOR-RL perform no worse than the best baselines in most environments, which shows that UOR-RL also yields sufficiently good performance in terms of the traditional robustness evaluation criteria.

Apart from Tables \ref{k=1}-\ref{worst10}, we also visualize the performance of UOR-RL and the baselines in the ranges of environment parameters given by Table \ref{env_para} by plotting heat maps. Because of space limit, we only show the heat maps of the Half Cheetah and Hopper tasks with $k=1$ in Figures \ref{Halfcheetah-v0} and \ref{Hopper-v0}, respectively. In these two figures, a darker color means a better performance. We could observe that both DB-UOR-RL and DF-UOR-RL are darker in color than baselines in most sub-ranges, which supports the superiority of UOR-RL for most environment parameters.
Moreover, we place the heat maps of the other four tasks in Appendix~\ref{other_heatmaps}.

To show the effect of the robustness degree parameter $k$ on the performance of UOR-RL, we carry out experiments with four robustness degree parameters $k\in\{0,1,5,21\}$ in Half Cheetah under the same environment parameters. The results are shown in Figures \ref{k_curve_alg1} and \ref{k_curve_alg3}. To plot these two figures, we sort the collected trajectories by return into an increasing order, divide the trajectories under such order into 10 equal-size groups, calculate the \textit{average return of the trajectories (ART)} in each group, and compute the normalized differences between the ARTs that correspond to each consecutive pair of $k$'s in $\{0,1,5,21\}$. We could observe that every curve in these two figures shows a decreasing trend as the group index increases. Such observation indicates that, as $k$ increases, both DB-UOR-RL and DF-UOR-RL pay more attention to the trajectories that perform poorer, and thus the trained policies become more robust.

Additionally, we plot the training curves of the baselines and UOR-RL, and place them in Appendix \ref{train_cruve}.



\section{Related Work}
Robust RL \cite{Iyengar04robustdynamic,nilim2005robust,RobustMDP} aims to optimize policies under the worst-case environment, traditionally by the zero-sum game formulation \cite{Littman94markovgames,Littman1996grl}. Several recent works focus on finite or linear MDPs, and propose robust RL algorithms with theoretical guarantees \cite{derman2021twice,wang2021online,badrinath2021robust,zhang2021robust,grand2020first,kallus2020double}. However, real-world applications are usually with continuous state and action spaces, as well as complex non-linear dynamics \cite{kumar2020one,zhang2020stability}. 

A line of deep robust RL works robustify policies against different factors that generate the worst-case environment, such as agents' observations \cite{state_robust_1, state_robust_2,state_robust_3}, agents' actions \cite{action_robust_1,action_robust_2}, transition function \cite{mankowitz2020robust,viano2021robust,chen2021improved}, and reward function \cite{wang2020reinforcement}. 
Another line of recent works \cite{kumar2020one, tobin2017domain, jiang2021monotonic, igl2019generalization, cobbe2019quantifying} aim to improve the average performance over all possible environments.
Considering only the worst or average case may cause the policy to be overly conservative or aggressive, and limit \cite{state_robust_1, state_robust_2,state_robust_3,action_robust_1,action_robust_2,mankowitz2020robust,viano2021robust,chen2021improved,wang2020reinforcement,kumar2020one, tobin2017domain, jiang2021monotonic, igl2019generalization, cobbe2019quantifying} for boarder applications. 
Hence, some researches study to use other cases to characterize the robustness of the policy.
\cite{risk-sensitive2015} optimizes the policy performance on $\alpha$-percentile worst-case environments; \cite{soft-robust} considers the robustness with a given environment distribution; \cite{distributionally2010,distributionally2015}       
aim to improve the policy performance on the worst distribution in an environment distribution set. 
\begin{figure}[!ht]
    \centering
    \includegraphics[width=\columnwidth]{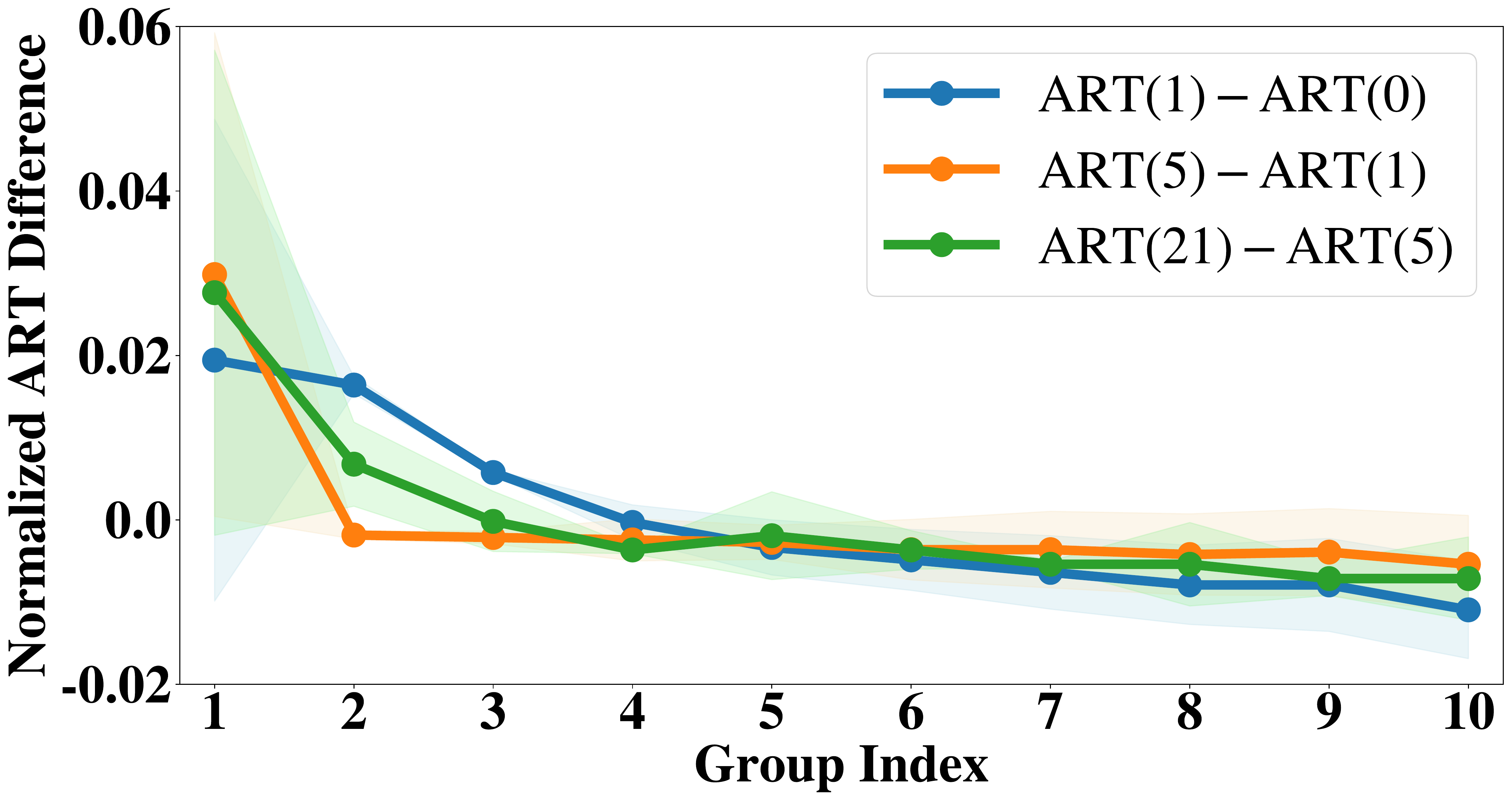}
    \vspace{-0.4cm}
    \caption{Normalized ART difference for each sorted group (DB-UOR-RL). ART($k_0$) denotes the ART with $k=k_0$.}
    \label{k_curve_alg1}
    \vspace{-0.2cm}
\end{figure}

\begin{figure}[!ht]
    \centering
    \includegraphics[width=\columnwidth]{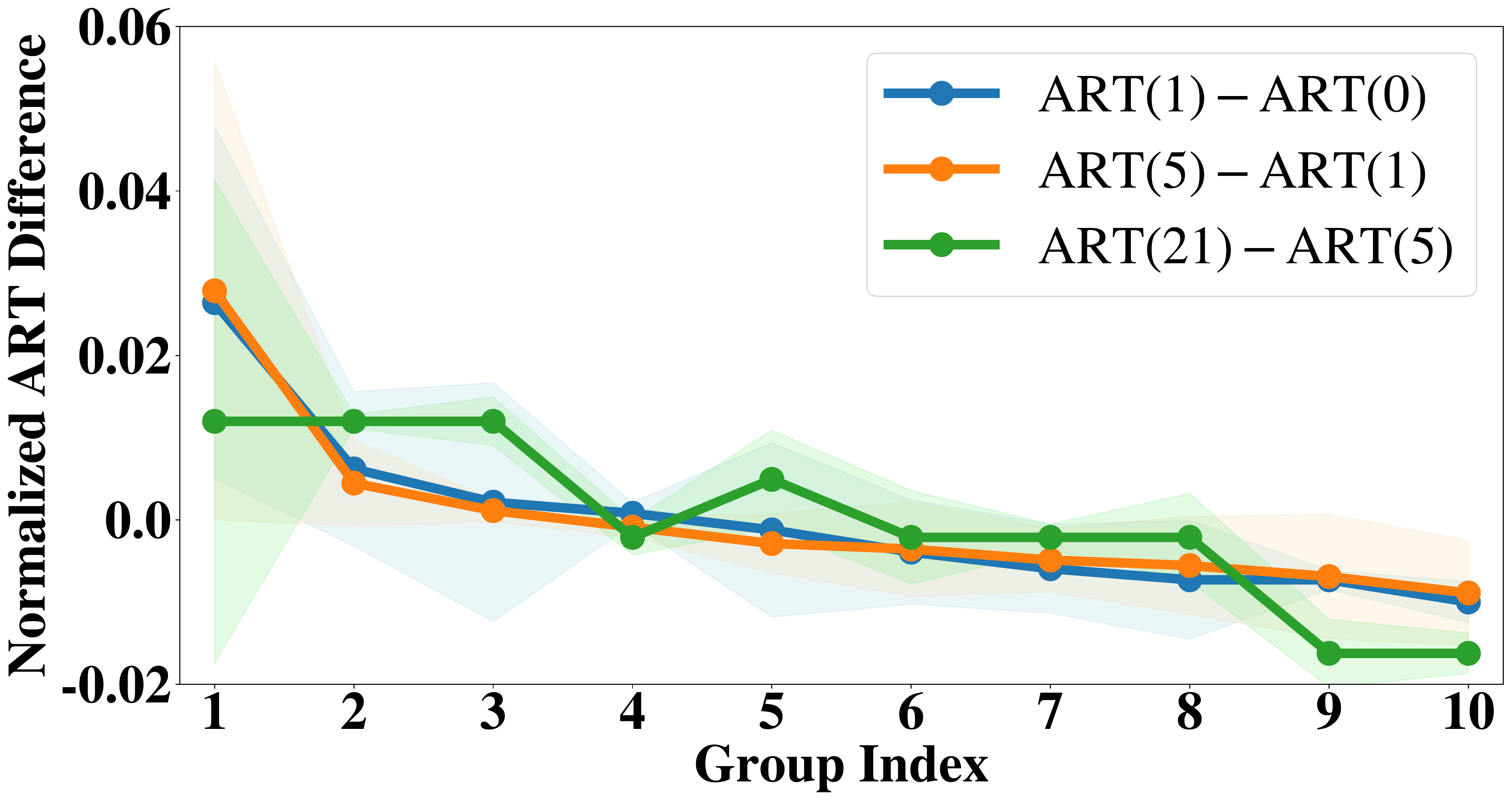}
    \vspace{-0.4cm}
    \caption{Normalized ART difference for each sorted group (DF-UOR-RL). ART($k_0$) denotes the ART with $k=k_0$.}
    \label{k_curve_alg3}
    \vspace{-0.2cm}
\end{figure}

Each of \cite{state_robust_1, state_robust_2,state_robust_3,action_robust_1,action_robust_2,mankowitz2020robust,viano2021robust,chen2021improved,wang2020reinforcement,kumar2020one, tobin2017domain, jiang2021monotonic, igl2019generalization, cobbe2019quantifying,risk-sensitive2015,soft-robust,distributionally2010,distributionally2015} optimizes a specific type of robustness, and is only suitable to a specific preference to robustness (e.g. methods focusing on worst case suit the most conservative preference to robustness). However, user preference varies in different scenarios, and an RL method that optimizes a specific type of robustness will be no more suitable when user preference changes. In real applications, it is significant to take user preference into consideration and design a general framework suitable to various types of robustness. Therefore, we design UOR-RL as a general framework, which can be applied to satisfy a variety of preference to robustness.
As far as we know, UOR is the first RL framework to take user preference into consideration.



\section{Conclusion}
In this paper, we propose the UOR metric, which integrates user preference into the measurement of robustness. Aiming at optimizing such metric, we design two UOR-RL training algorithms, which work in the scenarios with or without a priori known environment distribution, respectively. Theoretically, we prove that the output policies of the UOR-RL training algorithms, in the scenarios with accurate, inaccurate or even completely no knowledge of the environment distribution, are all $\epsilon$-suboptimal to the optimal policy. Also, we conduct extensive experiments in 6 MuJoCo tasks, and the results validate that UOR-RL is comparable to the state-of-the-art baselines under traditional metrics and establishes new state-of-the-art performance under the UOR metric.


\section{Acknowledgements}
This work was supported by NSF China (No. U21A20519, U20A20181, 61902244).
\printbibliography
\bibstyle{aaai22}

\newpage
\appendix
\onecolumn
\section{Proof of Theorems and Corollary}
\subsection{Theorem ~\ref{Theorem near optimal}}

\begin{proof}~\par
We divide the proof into 3 parts.
\subsubsection{Continuity of $J(\pi,p)$}
Firstly, we want to prove $J(\pi,p)$ is continuous to the environment $p$. In the most of MDPs, the state space and action space $\mathcal{S,A}$ are bounded. Even if $S,A$ are unbounded, we can have function which maps unbounded set into a bounded one. For example, let $\mathcal{S}\in \mathbb{R}^k$, we define the function $M:\mathbb{R}^k\rightarrow [0,1]^k$ that
\begin{equation}
    M((x_1,x_2,\dots,x_k)^T)=(\frac{1}{1+\exp(-x_1)},\frac{1}{1+\exp(-x_2)},\dots,\frac{1}{1+\exp(-x_k)})^T.
\end{equation}
And if $\mathcal{S,A}$ is not closed, we can also add the boundary into them, that is
\begin{equation}
    \mathcal{S}=\mathcal{S}\cup \partial(\mathcal{S}), \mathcal{A}=\mathcal{A}\cup \partial(\mathcal{A}).
\end{equation}
As a result, w.l.o.g, we assume $\mathcal{S,A}$ are compact sets.

Furthermore, we can also assume the environment range $\mathcal{P}$ is a compact set. If $\mathcal{P}\in \mathbb{R}^d$ is unbounded, we define $B_r(0)$ be the closed ball in space $R^d$ with center at origin and radius $r$. 
From the property of distribution $\mathbb{D}$ that
\begin{equation}
    1=\int_\mathcal{P}\mathbb{D}(p)\mathrm{d}p=\int_{\mathcal{P}\cup B_\infty(0)}\mathbb{D}(p)\mathrm{d}p=\lim_{r\rightarrow\infty}\int_{\mathcal{P}\cup B_r(0)}\mathbb{D}(p)\mathrm{d}p
\end{equation}

Thus for $\forall~\varphi,\exists r_0$ that 
\begin{equation}
    \int_{\mathcal{P}\cup B_{r_0}(0)}\mathbb{D}(p)\mathrm{d}p>1-\varphi
\end{equation}
So we can thake $\mathcal{P}'=\mathcal{P}\cup B_{r_0}(0)$ to be bounded and use the Theorem~\ref{Theorem Empirical}. As a result, w.l.o.g, we can assume $\mathcal{P}$ is also bounded and compact.

Therefore, the domains of $R$ is $\mathcal{S}^2\times \mathcal{A}\times\mathcal{P}\rightarrow R$ is a compact set, so is $T$. From \textbf{Cantor}'s Theorem, we can get $R,T$ is uniformly continue.
So we get
\begin{equation}
\begin{aligned}
    \forall~\epsilon, \exists~\delta>0, \forall s\in \mathcal{S}, a\in \mathcal{A}, p_1,p_2\in \mathcal{P}, |p_1-p_2|\footnote{In Theorem Proof, we use the 2-norm to measure the distance. For ease of illustration, we user |x-y| to denotes 2-norm distance $||x-y||_2$}<\delta \Rightarrow |R_{p_1}(s,a)-R_{p_2}(s,a)|<\epsilon\\
        \forall~\epsilon, \exists~\delta>0, \forall s_1,s_2\in \mathcal{S}, a\in \mathcal{A}, p_1,p_2\in \mathcal{P}, |p_1-p_2|<\delta \Rightarrow |T_{p_1}(s_1,a,s_2)-T_{p_2}(s_1,a,s_2)|<\epsilon\\
\end{aligned}
\end{equation}
where for ease of illustration, we use $R_p(s,a,s'), T_p(s,a)$ respectively denote $R_p(s,a,s',p)$ and $T_p(s,a,p)$.

For $\forall$ given $\epsilon$, let set $\Delta_\epsilon\subseteq\mathbb{R^+}$ satisfies
\begin{equation}
\begin{aligned}
    \Delta_\epsilon^R:&=\{\delta~|~\forall s\in \mathcal{S}, a\in \mathcal{A}, p_1,p_2\in \mathcal{P}, |p_1-p_2|<\delta \Rightarrow |R_{p_1}(s,a)-R_{p_2}(s,a)|<\epsilon\}\\
    \Delta_\epsilon^T:&=\{\delta~|~\forall s_1,s_2\in \mathcal{S}, a\in \mathcal{A}, p_1,p_2\in \mathcal{P}, |p_1-p_2|<\delta \Rightarrow |T_{p_1}(s_1,a,s_2)-T_{p_2}(s_1,a,s_2)|<\epsilon\}
\end{aligned}
\end{equation}
$\Delta_\epsilon^R$ has upper bound $d(\mathcal{P})$ (diameter of set $\mathcal{P}$), so it has supremum $\delta_\epsilon^R$. Here we define a function 
\begin{equation}
    \delta_R(\epsilon)=\delta_\epsilon^R
\end{equation}
Similarly we define the funtion
\begin{equation}
    \delta_T(\epsilon)=\delta_\epsilon^T
\end{equation}
with $\delta_\epsilon^T$ as supremum of $\Delta_\epsilon^T$.

As for $\delta_{R}$, there are several properties
\begin{enumerate}
    \item Domain of $\delta_{R}$ is $(0,\max R-\min R]$
    \item Range of $\delta_R\subseteq (0,d(P)]$
    \item $\delta_R$ is strictly increasing.
\end{enumerate}
 Similarly, these properties are the same for $ \delta_{T}$. 

 After that, we discuss $J(\pi,p)$. We define it as 
\begin{equation}
    J(\pi,p)=\mathbb{E}[\{\sum_{i=0}^\infty \gamma^i\cdot R_p(s_i,\pi(s_i),s_{i+1})\}]
\end{equation}
 Then we compare the performance of policy $\pi$ under two environments $p_1$ and $p_2$. We have $s_0^1=s_0^2=s_0$ and $|p_1-p_2|<\delta$.
\begin{equation}
\begin{aligned}
    &J(\pi,p_1)-J(\pi,p_2)\\
    =&\mathbb{E}\left[\sum_{i=0}^\infty \gamma^i\cdot R_{p_1}(s^1_i,\pi(s^1_i,p_1),s^1_{i+1})\right]-\mathbb{E}\left[\sum_{i=0}^\infty \gamma^i\cdot R_{p_2}(s^2_i,\pi(s^2_i,p_2),s^2_{i+1})\right]\\
    =&\sum_{i=0}^\infty \gamma^i\cdot\mathbb{E}\left[ R_{p_1}(s^1_i,\pi(s^1_i),s^1_{i+1})\right]-\sum_{i=0}^\infty\gamma^i\cdot \mathbb{E}\left[R_{p_2}(s^2_i,\pi(s^2_i),s^2_{i+1})\right]\\
    =&\sum_{i=0}^\infty \gamma^i\cdot\left[\mathbb{E}\left[ R_{p_1}(s^1_i,\pi(s^1_i),s^1_{i+1})\right]-\mathbb{E}\left[R_{p_2}(s^2_i,\pi(s^2_i),s^2_{i+1})\}\right]\right]\\
\end{aligned}
\end{equation}
We know the distribution of initial state $s_0$ is $\mathbb{S}_0$. We define that under the environment parameter $p$ and executing policy $\pi$, the state distribution in the $t^{th}$ step is $\mathbb{S}_t^{\pi,p}$. And the actions distribution under the state $s$, environment parameter $p$ and executing policy $\pi$ is $\pi^p_s$. And the transition distribution from state $s$ and action $a$ under parameter $p$ is $T_{p}^{s,a}$.

Then we have 
\begin{equation}
\begin{aligned}
    \mathbb{E}\left[R_{p_1}(s^1_i,\pi(s^1_i),s^1_{i+1})\right]=&\int_\mathcal{S}\mathbb{S}_t^{\pi,p_1}(s_1)\int_\mathcal{A}\pi^{p_1}_s(a)\int_\mathcal{S}T_{p_1}^{s_1,a}(s_2)R_{p_1}(s_1,a,s_2) \mathrm{d}s_2\mathrm{d}a\mathrm{d}s_1\\
    =&R_{p_1}(\mathbb{S}_t^{\pi,p_1},\pi(\mathbb{S}_t^{\pi,p_1}),T_{p_1}(\mathbb{S}_t^{\pi,p_1},\pi(\mathbb{S}_t^{\pi,p_1}))).
\end{aligned}
\end{equation}

  Firstly, we consider the distance of distribution $\mathbb{S}_t^{\pi,p_1}$ and $\mathbb{S}_t^{\pi,p_2}$
\begin{equation}
    \begin{aligned}
    &D_{TV}(\mathbb{S}_{t+1}^{\pi,p_1},\mathbb{S}_{t+1}^{\pi,p_2})\\
    =&|T_{p_1}(\mathbb{S}_{t}^{\pi,p_1},\pi(\mathbb{S}_{t}^{\pi,p_1}))-T_{p_2}(\mathbb{S}_{t}^{\pi,p_2},\pi(\mathbb{S}_{t}^{\pi,p_2}))|\\
    =&|T_{p_1}(\mathbb{S}_{t}^{\pi,p_1},\pi(\mathbb{S}_{t}^{\pi,p_1}))-T_{p_1}(\mathbb{S}_{t}^{\pi,p_2},\pi(\mathbb{S}_{t}^{\pi,p_1}))+T_{p_1}(\mathbb{S}_{t}^{\pi,p_2},\pi(\mathbb{S}_{t}^{\pi,p_1}))-T_{p_1}(\mathbb{S}_{t}^{\pi,p_2},\pi(\mathbb{S}_{t}^{\pi,p_2}))\\
    &~+T_{p_1}(\mathbb{S}_{t}^{\pi,p_2},\pi(\mathbb{S}_{t}^{\pi,p_2}))-T_{p_2}(\mathbb{S}_{t}^{\pi,p_2},\pi(\mathbb{S}_{t}^{\pi,p_2}))|\\
    \le&L_{T,S}\cdot[ D_{TV}(\mathbb{S}_{t}^{\pi,p_1},\mathbb{S}_{t}^{\pi,p_2}) +L_{T,A} L_\pi\cdot D_{TV}(\mathbb{S}_{t}^{\pi,p_1},\mathbb{S}_{t}^{\pi,p_2})]+ \delta_{T}^{-1}(\delta)\\
    =&(L_{T,S}+L_{T,A}L_\pi)\cdot D_{TV}(\mathbb{S}_{t}^{\pi,p_1},\mathbb{S}_{t}^{\pi,p_2})+\delta_{T}^{-1}(\delta)
    \end{aligned}
\end{equation}
 Then we signal $\Delta^i$ and define a new function $dis_\delta(x)$.
\begin{equation}
    \Delta^i:=D_{TV}(\mathbb{S}_{t}^{\pi,p_1},\mathbb{S}_{t}^{\pi,p_2})~~~~~~~~~dis_\delta(x):=(L_{T,S}+L_{T,A}L_\pi)\cdot x+\delta_{T}^{-1}(\delta).
\end{equation}
 Then we have $\Delta^{i+1}\le dis_\delta(\Delta^i)$
 Let $dis_\delta^{(2)}=dis_\delta\circ dis_\delta$ and $dis_\delta^{(k)}=dis_\delta^{(k+1)}\circ dis_\delta$\\
So we can have
\begin{equation}
    \Delta^{i}\le dis_\delta(\Delta^{i-1})\le dis_\delta^{(2)}(\Delta^{i-2})\dots \le dis_\delta^{(i)}(\Delta^0)=dis_\delta^{(i)}(|s_0^1-s_0^2|)=dis_\delta^{(i)}(0).
\end{equation}
 Let $\alpha=L_{T,S}+L_{T,A}L_\pi,~ \beta(\delta)=\delta_{T}^{-1}(\delta)$, then we can simplify the formula that
\begin{equation}
    \begin{aligned}
        dis_\delta^{(i)}(0)&=\alpha\cdot dis_\delta^{(i-1)}(0)+\beta(\delta)\\
        &=\alpha^2\cdot dis_\delta^{(i-2)}(0)+\alpha\cdot \beta(\delta)+\beta(\delta)\\
        &\dots\\
        &=\alpha^{i-1}\cdot dis_\delta(0)+\beta(\delta)\cdot \sum_{j=0}^{i-2}\alpha^j\\
        &=\left\{\begin{aligned}
        &\frac{(\alpha^i-1)\cdot \beta(\delta)}{\alpha-1}~~~~~~~~~~~~&\alpha>1\\
        &i\cdot \alpha\cdot \beta(\delta)~~~~~~~~&\alpha=1\\
        &\frac{(1-\alpha^i)\cdot \beta(\delta)}{1-\alpha}~~~~~~~~~~~~&\alpha<1\\
        \end{aligned}\right.
    \end{aligned}
\end{equation}
Then consider the distance of reward function $R$
\begin{equation}
\begin{aligned}
    &\left|R_{p_1}(\mathbb{S}_t^{\pi,p_1},\pi(\mathbb{S}_t^{\pi,p_1}),T_{p_1}(\mathbb{S}_t^{\pi,p_1},\pi(\mathbb{S}_t^{\pi,p_1})))-R_{p_2}(\mathbb{S}_t^{\pi,p_2},\pi(\mathbb{S}_t^{\pi,p_2}),T_{p_2}(\mathbb{S}_t^{\pi,p_2},\pi(\mathbb{S}_t^{\pi,p_2})))\right|\\
    \le&|R_{p_1}(\mathbb{S}_t^{\pi,p_1},\pi(\mathbb{S}_t^{\pi,p_1}),T_{p_1}(\mathbb{S}_t^{\pi,p_1},\pi(\mathbb{S}_t^{\pi,p_1})))-R_{p_1}(\mathbb{S}_t^{\pi,p_2},\pi(\mathbb{S}_t^{\pi,p_1}),T_{p_1}(\mathbb{S}_t^{\pi,p_1},\pi(\mathbb{S}_t^{\pi,p_1})))|\\
    +&|R_{p_1}(\mathbb{S}_t^{\pi,p_2},\pi(\mathbb{S}_t^{\pi,p_1}),T_{p_1}(\mathbb{S}_t^{\pi,p_1},\pi(\mathbb{S}_t^{\pi,p_1})))-R_{p_1}(\mathbb{S}_t^{\pi,p_2},\pi(\mathbb{S}_t^{\pi,p_2}),T_{p_1}(\mathbb{S}_t^{\pi,p_1},\pi(\mathbb{S}_t^{\pi,p_1})))|\\
    +&|R_{p_1}(\mathbb{S}_t^{\pi,p_2},\pi(\mathbb{S}_t^{\pi,p_2}),T_{p_1}(\mathbb{S}_t^{\pi,p_1},\pi(\mathbb{S}_t^{\pi,p_1})))-R_{p_1}(\mathbb{S}_t^{\pi,p_2},\pi(\mathbb{S}_t^{\pi,p_2}),T_{p_2}(\mathbb{S}_t^{\pi,p_2},\pi(\mathbb{S}_t^{\pi,p_2})))|\\
    +&|R_{p_1}(\mathbb{S}_t^{\pi,p_2},\pi(\mathbb{S}_t^{\pi,p_2}),T_{p_2}(\mathbb{S}_t^{\pi,p_2},\pi(\mathbb{S}_t^{\pi,p_2})))-R_{p_2}(\mathbb{S}_t^{\pi,p_2},\pi(\mathbb{S}_t^{\pi,p_2}),T_{p_2}(\mathbb{S}_t^{\pi,p_2},\pi(\mathbb{S}_t^{\pi,p_2})))|\\
    \le& L_{R,S}\Delta^i+L_{R,A}L_\pi(\Delta^i)+L_{R,S}(\alpha\Delta^i+\beta(\delta))+\delta_R^{-1}(\delta)
\end{aligned}
\end{equation}
 Let $\zeta=L_{R,S}+L_{R,A}\cdot L_{\pi}+L_{R,S}\cdot (L_{T,S}+L_{T,A}\cdot L_\pi)$ and $\eta(\delta)=L_{R,S}\cdot  \delta_{T}^{-1}(\delta)+ \delta_{R}^{-1}(\delta)$, then we can find a upper bound of the distance of $J(\pi,p_1)$ and $J(\pi,p_2)$. 
\begin{equation}
    \begin{aligned}
    &J(\pi,p_1)-J(\pi,p_2)\\
     =&\sum_{i=0}^\infty \gamma^i\cdot\left[\mathbb{E}\left[ R_{p_1}(s^1_i,\pi(s^1_i),s^1_{i+1})\right]-\mathbb{E}\left[R_{p_2}(s^2_i,\pi(s^2_i),s^2_{i+1})\}\right]\right]\\
     \le&\sum_{i=0}^\infty \gamma^i(\zeta\cdot \Delta^i+\eta(\delta))\\
     =&\frac{\eta(\delta)}{1-\gamma}+\zeta\cdot \sum_{i=0}^\infty \gamma^i\Delta^i
    \end{aligned}
\end{equation}
 Here we need discuss respectively with the relationship with 1 and $\alpha$
\begin{equation}
    \begin{aligned}
\sum_{i=0}^\infty \gamma^i\Delta^i=\left\{\begin{aligned}
        &\sum_{i=0}^\infty \gamma^i\cdot \frac{(\alpha^i-1)\cdot \beta(\delta)}{\alpha-1}\le \frac{\beta(\delta)}{\alpha-1}\sum_{i=0}^\infty (\gamma\cdot \alpha)^i\le  \frac{\beta(\delta)}{(\alpha-1)(1-\gamma\cdot \alpha)}~~~~&\alpha>1\\
        &\sum_{i=0}^\infty \gamma^i\cdot i\cdot \alpha\cdot \beta(\delta)=\frac{\gamma\cdot \alpha\cdot \beta(\delta)}{(1-\gamma)^2} &\alpha=1\\
        &\sum_{i=0}^\infty \gamma^i\cdot \frac{(1-\alpha^i)\cdot \beta(\delta)}{1-\alpha}\le\sum_{i=0}^\infty \gamma^i\cdot \frac{\beta(\delta)}{1-\alpha}=\frac{\beta(\delta)}{(1-\alpha)\cdot (1-\gamma)} &\alpha<1\\
\end{aligned}\right.
    \end{aligned}
\end{equation}
  Since $\alpha,\zeta,\gamma$ are all fixed constants. We can set
\begin{equation}
\begin{aligned}
    \mu_0&=\max\{\frac{1}{(\alpha-1)(1-\gamma\cdot \alpha)},\frac{\gamma\cdot \alpha}{(1-\gamma)^2},\frac{1}{(1-\alpha)\cdot (1-\gamma)}\}\\
    &\lambda=\frac{1}{1-\gamma}~~~~~~\mu=\mu_0\cdot \zeta
\end{aligned}
\end{equation}
So we can conclude
\begin{equation}
    |J(\pi,p_1)-J(\pi,p_2)|\le\lambda\cdot \eta(\delta)+\mu\cdot \beta(\delta)    =:j(\delta)
\end{equation}
\subsubsection{$\mathcal{E}(\pi)~~\mathrm{and}~~\hat{\mathcal{E}}(\pi)$}
In this section, we focus on a fixed policy $\pi$, and discuss the value between $\mathcal{E}$ and $\hat{\mathcal{E}}$ executing $\pi$.

As for preference function $W$, if we take normalized one $W'$ as the new preference function that
\begin{equation}
    W'(x)=W(x)/\mathcal{W}~~with~~~\mathcal{W}=\int_0^1W(x)\mathrm{d}x
\end{equation}
The optimal policy keeps the same. That is, we can normalize the preference function will the actually influence the UOR-PMDP. As a result, w.l.o.g, we can assume
\begin{equation}
    \int_0^1W(x)\mathrm{d}x=1.
\end{equation}
  From the calculation of $\hat{\mathcal{E}}(\pi)$, we can define a new function $H'_{\pi}$ as a step function, assume the order is $\alpha_1$ to $\alpha_n$
\begin{equation}
    \begin{aligned}
    M_i=&\sum_{k=1}^{i} m_{\alpha_i}~~~~(M_0=0)\\
    H'_{\pi}(x)=J(\pi,p_{\alpha_i})~~~~~&with~~~M_{i-1}\le x< M_{i}
    \end{aligned}
\end{equation}
From the definition of $\mathcal{E}$, we know
\begin{equation}
        \mathcal{E}(\pi)=\int_\mathcal{P}\mathbb{D}(p)\cdot J(\pi,p)\cdot W(h(\pi,p))\mathrm{d}p.
\end{equation}
We define another function $H$ that
\begin{equation}
    H(x):=\int_\mathcal{P}\mathbb{D}(p)\cdot\mathds{1}\left[J(\pi,p)\le x\right]\mathrm{d}p
\end{equation}
From the definition we get easily get
\begin{equation}
    H(J(\pi,p))=h(\pi,p).
\end{equation}
For the function $H$, we find $H$ is monotonic increasing. 
We signal $H^{-1}$ as the inverse function of $H$. It's important to mention that since $H$ may be not strictly increasing, the true inverse function may not exists, but we can let the lower bound one as the value of the $H^{-1}$, so we assume the inverse function existing.

After that we can have
\begin{equation}
\begin{aligned}
            \mathcal{E}(\pi)&=\int_\mathcal{P}\mathbb{D}(p)\cdot J(\pi,p)\cdot W(h(\pi,p))\mathrm{d}p\\
                &=\int_{\mathcal{P}} (H_\pi^{-1}(h(\pi,p))\cdot W(h(\pi,p))(\mathbb{D}(p)\mathrm{d}p)\\
    &\overset{x=h(\pi,p)}{=}\int_0^1 H_\pi^{-1}(x)\cdot W(x)\mathrm{d}x
\end{aligned}
\end{equation}
Also we have
\begin{equation}
    \hat{\mathcal{E}}(\pi)=\int_0^1 H'_\pi(x)\cdot W(x)\mathrm{d}x.
\end{equation}
As a result
\begin{equation}
\begin{aligned}
    &|\mathcal{E}(\pi)-\hat{\mathcal{E}}(\pi)|\\=&|\int_0^1 (H_\pi^{-1}(x)-H'_\pi(x))\cdot W(x)|\\
    =&|\sum_{i=0}^n \int_{M_i}^{M_{i+1}}(H_\pi^{-1}(x)-H'_\pi(x))\cdot W(x)|\\
    \le&\sum_{i=0}^n \int_{M_i}^{M_{i+1}}|H_\pi^{-1}(x)-H'_\pi(x)|\cdot W(x)
\end{aligned}
\end{equation}

We from two direction to bound $H_\pi^{-1}-H'_\pi$
\begin{enumerate}
    \item $H'_\pi(x)\ge H_\pi^{-1}(x)-j(\delta):$
    \begin{equation}
        \begin{aligned}
        &Let~~M_i\le x< M_{i+1} \Rightarrow\forall j<i, J(\pi,p_{l^j})\le H'_\pi(x)\\
        &From~~\forall k, d(\mathcal{P}_k)\le\delta\\
        \Rightarrow& \forall y\in \mathcal{P}_{l^j},|J(\pi,p_{l^j})-J(\pi,y)|<j(\delta)\\
        \Rightarrow& J(\pi,y)\le J(\pi,p_{l^j})+j(\delta)\le H'_\pi(x)+j(\delta)\\
        &Let~~\mathbb{U}_i=\bigcup_{j=0}^{i} \mathcal{P}_{\alpha_j}\\
        \Rightarrow& \forall y\in \mathbb{U}_i,J(\pi,y)<H'_\pi(x)+j(\delta)\\
        &We~know,~\int_{\mathbb{U}_i}\mathbb{D}(p)\mathrm{d}p=M_{i+1}\\
        \Rightarrow&\int_\mathcal{P} \mathds{1}[J(\pi,p)<H'_\pi(x)+j(\delta)]\cdot \mathbb{D}(p)\mathrm{d}p\ge M_{i+1}~~~~~~~~~~~~~~~~~~~~~~~~~~~~~~~~~\\
        \Rightarrow&H_\pi^{-1}(x)\le H_\pi^{-1}(M_{i+1})\le H'_\pi(x)+j(\delta)\\
        \Rightarrow&H'_\pi(x)\ge H_\pi^{-1}(x)-j(\delta)
        \end{aligned}
    \end{equation}
    \item $H'_\pi(x)\le H_\pi^{-1}(x)+j(\delta)$:
    \begin{equation}
        \begin{aligned}
        &Let~~M_i\le x< M_{i+1}\Rightarrow\forall j>i, J(\pi,p_{l^j})\ge H'_\pi(x)\\
        &From~~\forall k, d(\mathcal{P}_k)\le\delta\\
        \Rightarrow& \forall y\in \mathcal{P}_{l^j},|J(\pi,p_{l^j})-J(\pi,y)|<j(\delta)\\
        \Rightarrow& J(\pi,y)\ge J(\pi,p_{l^j})-j(\delta)\ge H'_\pi(x)-j(\delta)\\
        &Let~~\mathbb{V}_i=\bigcup_{j=i}^{n} \mathcal{P}_{\alpha^j}\\
        \Rightarrow& \forall y\in \mathbb{V}_i,J(\pi,y)\ge H'_\pi(x)-j(\delta)\\
        &We~know,~\int_{\mathbb{V}_i}\mathbb{D}(p)\mathrm{d}p=1-M_{i}\\
        \Rightarrow&\int_\mathcal{P} \mathds{1}[J(\pi,p)\ge H'_\pi(x)-j(\delta)]\cdot \mathbb{D}(p)\mathrm{d}p\ge 1-M_{i}~~~~~~~~~~~~~~~~~~~~~~~~~~~~~~~~~\\
        A&nd \int_\mathcal{P} \mathds{1}[J(\pi,p)\ge H'_\pi(x)-j(\delta)]\cdot \mathbb{D}(p)\mathrm{d}p+\int_\mathcal{P} \mathds{1}[J(\pi,p)\le H'_\pi(x)-j(\delta)]\cdot \mathbb{D}(p)\mathrm{d}p\\
        =& \int_\mathcal{P} \mathds{1}\cdot \mathbb{D}(p)\mathrm{d}p-\int_\mathcal{P} \mathds{1}[J(\pi,p)=H'_\pi(x)-j(\delta)]\cdot \mathbb{D}(p)\mathrm{d}p=1-0=1\\
        \Rightarrow&\int_\mathcal{P} \mathds{1}[J(\pi,p)\le H'_\pi(x)-j(\delta)]\cdot \mathbb{D}(p)\mathrm{d}p\le M_{i}\\
        \Rightarrow&H_\pi^{-1}(x)\ge H_\pi^{-1}(M_i)\ge H'_\pi(x)+j(\delta)\\
        \Rightarrow&H'_\pi(x)\le H_\pi^{-1}(x)+j(\delta)
        \end{aligned}
    \end{equation}
\end{enumerate}
  As a result, we can conclude $|H'_\pi(x)-H_\pi^{-1}(x)|\le j(\delta)$, Then
\begin{equation}
    \begin{aligned}
    &|\mathcal{E}(\pi)-\hat{\mathcal{E}}(\pi)|\\
    \le&\sum_{i=0}^n \int_{M_i}^{M_{i+1}}|H_\pi^{-1}(x)-H'_\pi(x)|\cdot W(x)\mathrm{d}x\\
    \le&\sum_{i=0}^n \int_{M_i}^{M_{i+1}}j(\delta)\cdot W(x)\mathrm{d}x\\
    =&j(\delta)\cdot \sum_{i=0}^n \int_{M_i}^{M_{i+1}}W(x)\mathrm{d}x\\
    =&j(\delta)\cdot \int_{0}^{1}W(x)\mathrm{d}x\\
    =&j(\delta)
    \end{aligned}
\end{equation}
with
\subsubsection{$\hat{\pi}$, $\pi^*$}

 We define these two optimal policies that
\begin{equation}
    \begin{aligned}
    \pi^*=\argmax_{\pi}\mathcal{E}(\pi)\\
    \hat{\pi}^\ast=\argmax_{\pi}\hat{\mathcal{E}}(\pi)
    \end{aligned}
\end{equation}

Consider the function $j(\delta)=\eta(\delta)+\mu\cdot \beta(\delta)$ that
\begin{equation}
\begin{aligned}
    \beta(\delta)&=L_{T,A}L_\pi\delta+\delta_{T}^{-1}(\delta)\\
    \eta(\delta)&=L_{R,S}\cdot  \delta_{T}^{-1}(\delta)+ \delta_{R}^{-1}(\delta).
\end{aligned}
\end{equation}
$ \delta_{T}$ and $ \delta_{R}$ is a strictly increasing function, and $ \delta_{T}(0)= \delta_{R}(0)=0 $. So we know $\beta(\delta)$ ,$\eta(\delta)$ and $j(\delta)$ are all strictly increasing and $\beta(0)=\eta(0)=j(0)=0$.
As a result, $j^{-1}$ exists and is strictly increasing with $j^{-1}(0)=0$.


Then for $\forall \epsilon=2\epsilon_0$, let $\delta_0=j^{-1}(\epsilon_0/2)$. As long as we have block diameter upper bound is $\delta_0$ then we can get
\begin{equation}
\forall policy~~\pi, |\mathcal{E}(\pi)-\hat{\mathcal{E}}(\pi)|\le j(\delta_0)=j(j^{-1}(\epsilon/2))=\epsilon/2.
\end{equation}
And we know the Policy\_Update function can output $\epsilon_0-suboptimal$ policy $\hat{\pi}$, that is
\begin{equation}
    \hat{\mathcal{E}}(\hat{\pi})\ge \hat{\mathcal{E}}(\hat{\pi}^\ast)-\epsilon_0
\end{equation}
Then we can conclude the proof above that
\begin{equation}
    \begin{aligned}
        \mathcal{E}(\hat{\pi})\ge\hat{\mathcal{E}}(\hat{\pi})-\epsilon_0/2\ge\hat{\mathcal{E}}(\hat{\pi}^\ast)-3\epsilon_0/2\ge\hat{\mathcal{E}}(\pi^\ast)-3\epsilon_0/2\ge\mathcal{E}(\pi^\ast)-2\epsilon_0=\mathcal{E}(\pi^\ast)-\epsilon
    \end{aligned}
\end{equation}
Then we prove this theorem.
\end{proof}
\subsection{Theorem ~\ref{Theorem near optimal strengthem}}
\label{Proof of Theorem 2}
\begin{proof}
Since the we strengthen the Assumption~\ref{Assumption p continue} and get the Lipschitz continuity one, Assumption~\ref{assumption lipschitz of p}, we can modify the proof of Theorem~\ref{Theorem near optimal} that
\begin{equation}
    \begin{aligned}
    \delta_T(\delta)=&L_{T,p}\cdot \delta\\
    \delta_R(\delta)=&L_{R,p}\cdot \delta
    \end{aligned}
\end{equation}
As a result, we can get accurate value of $\beta$ and $\eta$ that
\begin{equation}
    \begin{aligned}
    \beta(\delta)&=\frac{\delta}{L_{T,p}}\\
    \eta(\delta)&=\frac{L_{R,S}\cdot\delta}{L_{T,p}}+ \frac{\delta}{L_{R,p}}.
    \end{aligned}
\end{equation}
So we can have the $j(\delta)$ that
\begin{equation}
\begin{aligned}
j(\delta)&=\lambda\eta(\delta)+\mu\beta(\delta)\\
&=\frac{\frac{L_{R,S}\cdot\delta}{L_{T,p}}+ \frac{\delta}{L_{R,p}}}{1-\gamma}+[L_{R,S}+L_{R,A}\cdot L_{\pi}+L_{R,S}\cdot (L_{T,S}+L_{T,A}\cdot L_\pi)]\mu_0\cdot\frac{\delta}{L_{T,p}}\\
&=[\frac{L_{R,p}L_{R,S}+L_{T,p}}{L_{T,p}L_{R,p}(1-\gamma)}+\frac{(L_{R,S}+L_{R,A} L_{\pi}+L_{R,S}(L_{T,S}+L_{T,A}L_\pi))\mu_0}{L_{T,p}}]\delta\\
&=:\frac{\delta}{\upsilon}
\label{detailed equation}
\end{aligned}
\end{equation}
with
\begin{equation}
    \mu_0=\max\{\frac{1}{(\alpha-1)(1-\gamma\cdot \alpha)},\frac{\gamma\cdot \alpha}{(1-\gamma)^2},\frac{1}{(1-\alpha)\cdot (1-\gamma)}\}
\end{equation}
Then we can have $j^{-1}(\epsilon)=\upsilon\epsilon=O(\epsilon)$.
\end{proof}
\subsection{Theorem ~\ref{Theorem Empirical}}
\begin{proof}
~\par First, we define two function
\begin{equation}
    \begin{aligned}
    H_{\pi}(x)=&\int_\mathcal{P} \mathds{1}[J(\pi,p)\le x]\cdot \mathbb{D}(p)\mathrm{d}p\\
    H_{\pi,e}(x)=&\int_\mathcal{P} \mathds{1}[J(\pi,p)\le x]\cdot \mathbb{D}^e(p)\mathrm{d}p
    \end{aligned}
\end{equation}
  So from that, we know
\begin{equation}
\begin{aligned}
 \mathcal{E}(\pi)=&\int_0^1 H^{-1}_{\pi}(x)\cdot W(x)\mathrm{d}x\\
 \mathcal{E}^e(\pi)=&\int_0^1 H^{-1}_{\pi,\mathbb{D}^e}(x)\cdot W(x)\mathrm{d}x
\end{aligned}
\end{equation}
 Before that, there is a property
\begin{lemma}
\begin{equation}
    \exists J_0,\forall~policy~\pi,\forall~0\le x\le1,~\forall\mathbb{D},~ |H^{-1}_{\pi}(x)|\le J_0
\end{equation}
\label{lemma1}
\end{lemma}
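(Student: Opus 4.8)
The plan is to show that the quantile (generalized inverse) function $H_\pi^{-1}$ inherits its bound directly from a uniform bound on the return $J(\pi,p)$ itself, so the essential work reduces to bounding $J$ and then transferring that bound through the inverse.

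First I would invoke the reductions already established in the proof of Theorem~\ref{Theorem near optimal}: without loss of generality $\mathcal{S}$, $\mathcal{A}$, and $\mathcal{P}$ are compact, and the reward function $R$ is continuous on the compact domain $\mathcal{S}\times\mathcal{A}\times\mathcal{S}\times\mathcal{P}$. By the extreme value theorem $R$ is therefore bounded, so I set $R_{\max}=\sup|R(s,a,s',p)|<\infty$. Next I would bound the return uniformly: for any policy $\pi$ and any parameter $p$,
\begin{equation*}
|J(\pi,p)|=\left|\mathbb{E}\left[\sum_{t=0}^\infty\gamma^t R(s_t,a_t,s_{t+1},p)\right]\right|\le\sum_{t=0}^\infty\gamma^t R_{\max}=\frac{R_{\max}}{1-\gamma}=:J_0.
\end{equation*}
Crucially this bound is independent of $\pi$, of $p$, and of $\mathbb{D}$.

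Finally I would transfer the bound to the inverse. Recall that $H_\pi(x)=\int_{\mathcal{P}}\mathds{1}[J(\pi,p)\le x]\mathbb{D}(p)\mathrm{d}p$ is exactly the cumulative distribution function of the random variable $J(\pi,p)$ when $p\sim\mathbb{D}$. Since $J(\pi,p)\in[-J_0,J_0]$ always, this CDF satisfies $H_\pi(y)=1$ for every $y\ge J_0$ and $H_\pi(y)=0$ for every $y<-J_0$. With the convention introduced in the proof of Theorem~\ref{Theorem near optimal} that $H_\pi^{-1}(x)=\inf\{y:H_\pi(y)\ge x\}$ (the lower value), every image $H_\pi^{-1}(x)$ with $x\in[0,1]$ lies in $[-J_0,J_0]$, which yields $|H_\pi^{-1}(x)|\le J_0$ simultaneously for all $\pi$, all $x\in[0,1]$, and all $\mathbb{D}$, as claimed.

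The main obstacle is purely a bookkeeping one: ensuring the generalized inverse is well defined and bounded at the endpoint $x=0$, where $\{y:H_\pi(y)\ge 0\}$ is unbounded below. I would handle this by clamping the definition to the support $[-J_0,J_0]$, equivalently setting $H_\pi^{-1}(0)=\inf_p J(\pi,p)\ge-J_0$; this single measure-zero point does not affect any integral $\int_0^1 H_\pi^{-1}(x)W(x)\mathrm{d}x$ elsewhere in the argument. The uniformity over $\mathbb{D}$ demanded by the statement then comes for free, since $J_0$ depends only on $R$ and $\gamma$ and never references the distribution $\mathbb{D}$ at all.
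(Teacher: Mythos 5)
Your proposal is correct and follows essentially the same route as the paper: bound $R$ via continuity on a compact domain, bound $|J(\pi,p)|$ by the geometric series $R_{\max}/(1-\gamma)$, and observe that the generalized inverse $H_\pi^{-1}$ must take values in the range of $J$ (the paper phrases this as the sandwich $J(\pi,p_1)\le H_\pi^{-1}(x)\le J(\pi,p_2)$ with $p_1,p_2$ the minimizer and maximizer). Your explicit handling of the endpoint $x=0$ is a small but welcome refinement of the paper's implicit lower-bound convention for the inverse.
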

\begin{proof}
~\par For $\forall~policy~\pi,\forall ~0\le x\le1,~\forall\mathbb{D}$, there exists $p_1,p_2\in \mathcal{P}$,
\begin{equation}
    J(\pi,p_1)\le H^{-1}_{\pi}(x)\le J(\pi,p_2).
\end{equation}

Exactly, we can get the $p_1=\argmin_{p\in\mathcal{P}}J(\pi,p),p_2=\argmax_{p\in\mathcal{P}}J(\pi,p)$

After that, from assumption 4,5, $R$ function is continue to $\mathcal{P}$, Lipschitz to $S$ and $A$, so $R$ is continue in its domain $\mathbb{S}^2\times \mathbb{A}\times \mathcal{P}$.

Besides, from assumption 1,2,6, we get the domain of $R$ is a compact set, with the continuity of $R$, we know range of $R$ is a compact set, so its bounded. We can assume
\begin{equation}
    |R_p(s,a,s')|\le R_0
\end{equation}

Then we can get
\begin{equation}
\begin{aligned}
    \forall p,\pi, |J(\pi,p)|&=|\mathbb{E}[\{\sum_{i=0}^\infty \gamma^i\cdot R_{p_1}(s_i,\pi(s_i),s_{i+1})\}]|\\
    &\le 
    \mathbb{E}[\sum_{i=0}^\infty\gamma^i*R_0]\\
    &=\frac{R_0}{1-\gamma}
\end{aligned}
\end{equation}

Let $J_0=\frac{R_0}{1-\gamma}$, we have
\begin{equation}
\begin{aligned}
    -\frac{R_0}{1-\gamma}\le J(\pi,p_1)&\le H^{-1}_{\pi}(x)\le J(\pi,p_2)\le \frac{R_0}{1-\gamma}\\
    \Longrightarrow& |H^{-1}_{\pi}(x)|\le \frac{R_0}{1-\gamma}
\end{aligned}
\end{equation}
\end{proof} 
\begin{lemma}
$\forall$ given $\epsilon>0, policy~\pi$, $\exists~\kappa=O(\epsilon)$, as long as $D_{TV}(\mathbb{D},\mathbb{D}^e)\le\kappa$,  then we can promise
\begin{equation}
    |\mathcal{E}(\pi)-\mathcal{E}^e(\pi)|\le\epsilon
\end{equation}
\label{lemma2}
\end{lemma}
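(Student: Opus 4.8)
The plan is to exploit the quantile representation $\mathcal{E}(\pi)=\int_0^1 H^{-1}_{\pi}(x)W(x)\,\mathrm{d}x$ and $\mathcal{E}^e(\pi)=\int_0^1 H^{-1}_{\pi,\mathbb{D}^e}(x)W(x)\,\mathrm{d}x$ established just above, which conveniently absorbs the dependence of \emph{both} the outer integral and the ranking function $h$ on the distribution into a single quantile (inverse-CDF) function. This collapses the problem to comparing the two quantile functions. Writing
\begin{equation}
|\mathcal{E}(\pi)-\mathcal{E}^e(\pi)|\le\int_0^1 \left|H^{-1}_{\pi}(x)-H^{-1}_{\pi,\mathbb{D}^e}(x)\right|W(x)\,\mathrm{d}x,
\end{equation}
I would first bound $W$ by its supremum (for the family in Equation~(\ref{fomula robust weight function}) this is $W(0)=k+1<\infty$), reducing the task to controlling the unweighted quantity $\int_0^1 |H^{-1}_{\pi}(x)-H^{-1}_{\pi,\mathbb{D}^e}(x)|\,\mathrm{d}x$.

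Next I would isolate the only place the hypothesis $D_{TV}(\mathbb{D},\mathbb{D}^e)\le\kappa$ enters. Since for every $x$ the sublevel set $\{p:J(\pi,p)\le x\}$ is measurable and $J(\pi,\cdot)$ is the same function for both distributions, the definition of total variation distance gives the uniform vertical bound $|H_{\pi}(x)-H_{\pi,\mathbb{D}^e}(x)|\le\kappa$ for all $x$. The remaining step converts this closeness of the CDFs into closeness of their generalized inverses, and here I would invoke the elementary identity
\begin{equation}
\int_0^1\left|H^{-1}_{\pi}(x)-H^{-1}_{\pi,\mathbb{D}^e}(x)\right|\mathrm{d}x=\int_{\mathbb{R}}\left|H_{\pi}(t)-H_{\pi,\mathbb{D}^e}(t)\right|\mathrm{d}t,
\end{equation}
i.e. the area between two monotone curves is the same whether measured horizontally or vertically (this common value is the $1$-Wasserstein distance between the laws of $J(\pi,p)$). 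By Lemma~\ref{lemma1}, $J(\pi,\cdot)$ takes values in $[-J_0,J_0]$ for every distribution, so both CDFs equal $0$ before $-J_0$ and $1$ after $J_0$ and the right-hand integrand vanishes outside $[-J_0,J_0]$; combined with the uniform bound this yields $\int_{\mathbb{R}}|H_{\pi}-H_{\pi,\mathbb{D}^e}|\,\mathrm{d}t\le 2J_0\kappa$.

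Putting the pieces together gives $|\mathcal{E}(\pi)-\mathcal{E}^e(\pi)|\le 2J_0(k+1)\kappa$, so choosing $\kappa=\epsilon/(2J_0(k+1))=O(\epsilon)$ finishes the proof. The main obstacle, and the step deserving the most care, is the inversion: a uniform $\kappa$-bound on the CDFs does \emph{not} translate into a uniform bound on the quantile functions, because inverting a nearly flat region of a CDF amplifies a small vertical gap into a large horizontal one. The resolution is precisely to work with the \emph{integrated} quantile difference rather than a pointwise one, using the horizontal/vertical area identity above, where the bounded support supplied by Lemma~\ref{lemma1} (together with $\sup W<\infty$) is exactly what keeps the constant finite and the dependence on $\kappa$ linear.
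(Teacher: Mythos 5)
Your route through the quantile/CDF area identity is correct and genuinely different from the paper's, and within its domain of validity it is cleaner and stronger. The paper instead splits $\mathcal{E}$ into the contribution of a possible atom of $W$ at $0$ plus the remainder, truncates $W$ at a level $\sigma$ to obtain a bounded Lipschitz surrogate $W_1$, and controls $|h(\pi,p)-h^e(\pi,p)|\le D_{TV}(\mathbb{D},\mathbb{D}^e)$ to compare the two weighted integrals directly. Your single application of $\int_0^1\bigl|H^{-1}_{\pi}(x)-H^{-1}_{\pi,\mathbb{D}^e}(x)\bigr|\,\mathrm{d}x=\int_{\mathbb{R}}\bigl|H_{\pi}(t)-H_{\pi,\mathbb{D}^e}(t)\bigr|\,\mathrm{d}t$ replaces that entire truncation-plus-Lipschitz machinery, needs only $\sup W<\infty$ rather than Lipschitz continuity of $W$, and yields the explicit linear bound $|\mathcal{E}(\pi)-\mathcal{E}^e(\pi)|\le 2J_0(\sup W)\kappa$ with a dimension-free $\kappa=O(\epsilon)$. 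Your diagnosis of the pointwise-inversion pitfall and its resolution via the integrated (Wasserstein) form is exactly right.

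The gap is that your argument silently excludes the regime the UOR metric is explicitly built to cover: preference weights concentrating at $0$, in the limit $W=\delta$, for which $\mathcal{E}(\pi)=\min_{p}J(\pi,p)$. The paper's proof devotes its entire Part 1 to the atom $A=\int_0^{0^+}W(x)\,\mathrm{d}x$, because there $\mathcal{E}_1(\pi)=A\cdot\lim_{x\to 0^+}H^{-1}_{\pi}(x)$ is an essential infimum, and closeness of essential infima does not follow from smallness of the integrated CDF gap: an $O(\kappa)$ amount of mass displaced far below the rest of the support changes the $1$-Wasserstein distance by only $O(\kappa)$ but changes the essential infimum by $O(1)$. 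Equivalently, $\sup W=\infty$ invalidates your very first inequality. Handling this case requires the separate continuity-of-$\mathbb{D}$ argument in the paper's Part 1, and it is precisely that part which forces $\kappa_1=O(\epsilon^d)$ --- which is why Theorem~\ref{Theorem Empirical} (and the downstream use of this lemma) quotes $\kappa=O(\epsilon^d)$. So: as a proof for bounded preference functions, including the family of Equation~(\ref{fomula robust weight function}), your argument is complete and actually improves the rate; as a proof of the lemma in the generality the paper intends (covering the max-min extreme), it is missing the atom-at-zero case.
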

\begin{proof}

~\par We divide $\mathcal{E}(\pi)$ into 2 parts.
\begin{equation}
\begin{aligned}
    \mathcal{E}(\pi)&=\int_0^1 H_\pi^{-1}(x)\cdot W(x)\mathrm{d}x\\
    &=\int_0^{0^+} H_\pi^{-1}(x)\cdot W(x)\mathrm{d}x+\int_{0^+}^1 H_\pi^{-1}(x)\cdot W(x)\mathrm{d}x\\
    :&=\mathcal{E}_1(\pi)+\mathcal{E}_2(\pi)
\end{aligned}
\end{equation}

Then we can easily get
\begin{equation}
    |\mathcal{E}(\pi)-\mathcal{E}^e(\pi)|\le |\mathcal{E}_1(\pi)-\mathcal{E}^e_1(\pi)|+|\mathcal{E}_2(\pi)-\mathcal{E}^e_2(\pi)|
\end{equation}

\paragraph{Part 1:$\mathcal{E}_1(\pi)$}
~\par From the definition, we have 
\begin{equation}
    \mathcal{E}_1(\pi)=\int_0^{0^+}W(x)\mathrm{d}x\cdot\lim_{x \to 0^+}H_\pi^{-1}(x)
\end{equation}

Let $A=\int_0^{0^+}W(x)\mathrm{d}x$, then we get 
\begin{equation}
    |\mathcal{E}_1(\pi)-\mathcal{E}^e_1(\pi)|= A\cdot\lim_{x \to 0^+}|H_\pi^{-1}(x)-H_{\pi,e}^{-1}(x)|
\end{equation}
Since the environment distribution will not change abruptly, we can mildly assume the distribution function is continuous, that is, $\mathbb{D}(p)$ is continuous to $p$. Then as $\mathcal{P}$ is a compact set, so $\mathbb{D}$ is uniformly continuous. 
Let $p_0$ be the worst-case environment parameter for distribution $\mathbb{D}$ that $h(\pi,p_0)=0$. Then, w.l.o.g, let $\mathbb{D}(p_0)=D>0$.
Then since the uniform continuity of $\mathbb{D}$, for $\frac{D}{2}$, $\exists~\delta_1$, for $\forall~p_1\in\mathcal{P}$ and $|p_0-p_1|<\delta_1$, then $|\mathbb{D}(p_0)-\mathbb{D}(p_1)|<\frac{D}{2}$.
As a result, we have
\begin{equation}
\begin{aligned}
    &\forall~p\in B_{\delta_1}(p_0), 
    \frac{D}{2}<\mathbb{D}(p)<\frac{3D}{2}\\
    \Rightarrow &\int_{B_{\delta_1}(p_0)}\mathbb{D}(p)\mathrm{d}p>c\cdot \delta_1^d \cdot\frac{D}{2},
\end{aligned}
\end{equation}
where $c\delta_1^d $ is the volume of the ball $B_{\delta_1}(p_0)$ and $c$ depends on the dimension of environment parameter space.

According to Theorem~\ref{Theorem near optimal strengthem}, for $\frac{\epsilon}{3A}, \exists~\delta_0=\frac{\upsilon\epsilon}{3A}$. Since the definition of $O()$ and that as $\epsilon\rightarrow0, \delta_0\rightarrow0$, we can let $\delta_0<\delta_1$.
Then we choose $\kappa_1=c\cdot \delta_0^d\cdot\frac{D}{2}=O(\delta_0^d)=O(\epsilon^d)$, if $D_{TV}(\mathbb{D},\mathbb{D}^e)<\kappa_0$, then 
\begin{equation}
    \int_{B_{\delta_0}(p_0)}\mathbb{D}^e(p)\mathrm{d}p>0.
\end{equation}
Therefore, $\exists~p'\in B_{\delta_0}$ that $\mathbb{D}^e(p')>0$,
then 
\begin{equation}
\lim_{x\rightarrow0^+}H^{-1}_{\pi,e}(x)\ge\lim_{x\rightarrow0^+}H^{-1}_{\pi}(x)-\epsilon.
\end{equation}
Similarly, when consider the worst case in $\mathbb{D}^e$, we can get 
\begin{equation}
\lim_{x\rightarrow0^+}H^{-1}_{\pi}(x)\ge\lim_{x\rightarrow0^+}H^{-1}_{\pi,e}(x)-\epsilon.
\end{equation}
As a result, as long as $D_{TV}(\mathbb{D},\mathbb{D}^e)\le\kappa_1$, we can guarantee

\begin{equation}
    |\mathcal{E}_1(\pi)-\mathcal{E}^e_1(\pi)|= A\cdot\lim_{x \to 0^+}|H_\pi^{-1}(x)-H_{\pi,e}^{-1}(x)|< A\cdot \frac{\epsilon}{3A}=\frac{\epsilon}{3}
\end{equation}
\paragraph{Part 2:$\mathcal{E}_2(\pi)$} 
~\par From
\begin{equation}
    0=\int_{0^+}^{0^+}W(x)\mathrm{d}x=\lim_{x\rightarrow 0^+}\int_{0^+}^{x}W(x)\mathrm{d}x
\end{equation}

we know for given $\epsilon$
\begin{equation}
\exists \sigma>0, \mathrm{s.t.} \int_{0^+}^{\sigma}W(x)\mathrm{d}x\le\frac{\epsilon}{6J_0}
\end{equation}

From the limitation of the $W$, we know $W$ is monotonic decreasing, so 
\begin{equation}
    \forall \sigma\le x\le 1, W(x)\le W(\sigma)
\end{equation}

Then we extend the $W$ function to $W_1$ that
\begin{equation}
    W_1(x)=\left\{\begin{aligned}
    W(x)~~~~~x\ge\sigma\\
    W(\sigma)~~~~~x\le\sigma
    \end{aligned}\right.
\end{equation}

So we can change the $W$ into $W_1$ that for $\forall \mathbb{D},\pi$, we have
\begin{equation}
    \begin{aligned}
    &\mathcal{E}_2(\pi)-\int_0^1 H_\pi^{-1}(x)\cdot W_1(x)\mathrm{d}x\\
    =&\mathcal{E}_2(\pi)-\int_{0^+}^1 H_\pi^{-1}(x)\cdot W_1(x)\mathrm{d}x\\
    =&\int_{0^+}^1 H_\pi^{-1}(x)\cdot [W(x)-W_1(x)]\mathrm{d}x\\
    =&\int_{0^+}^\sigma H_\pi^{-1}(x)\cdot [W(x)-W_1(x)]\mathrm{d}x\\
    \le& J_0\cdot \int_{0^+}^\sigma  [W(x)-W_1(x)]\mathrm{d}x\\
    \le& J_0\cdot \int_{0^+}^\sigma  W(x)\mathrm{d}x=\frac{\epsilon}{6}
    \end{aligned}
\end{equation}

Then we can change the $E_2$ into another form
\begin{equation}
    \begin{aligned}
    &|\mathcal{E}_2(\pi)-\mathcal{E}^e_2(\pi)|\\
    \le&|\mathcal{E}_2(\pi)-\int_0^1 H_\pi^{-1}(x)\cdot W_1(x)\mathrm{d}x-\mathcal{E}^e_2(\pi)+\int_0^1 H_{\pi,e}^{-1}(x)\cdot W_1(x)\mathrm{d}x|\\
    +&|\int_0^1 H_\pi^{-1}(x)\cdot W_1(x)\mathrm{d}x-\int_0^1 H_{\pi,e}^{-1}(x)\cdot W_1(x)\mathrm{d}x|\\
    \le&|\mathcal{E}_2(\pi)-\int_0^1 H_\pi^{-1}(x)\cdot W_1(x)\mathrm{d}x|+|\mathcal{E}^e_2(\pi)-\int_0^1 H_{\pi,e}^{-1}(x)\cdot W_1(x)\mathrm{d}x|\\
    +&|\int_0^1 H_\pi^{-1}(x)\cdot W_1(x)\mathrm{d}x-\int_0^1 H_{\pi,e}^{-1}(x)\cdot W_1(x)\mathrm{d}x|\\
    \le& \frac{\epsilon}{6}+\frac{\epsilon}{6}+|\int_0^1 H_\pi^{-1}(x)\cdot W_1(x)\mathrm{d}x-\int_0^1 H_{\pi,e}^{-1}(x)\cdot W_1(x)\mathrm{d}x|\\
    =& \frac{\epsilon}{3}+|\int_0^1 H_\pi^{-1}(x)\cdot W_1(x)\mathrm{d}x-\int_0^1 H_{\pi,e}^{-1}(x)\cdot W_1(x)\mathrm{d}x|
    \end{aligned}
\end{equation}




And we also have 
\begin{equation}
\begin{aligned}
    =&|\int_{\sigma}^1 H_\pi^{-1}(x)\cdot W_1(x)\mathrm{d}x-\int_{\sigma}^1 H_{\pi,e}^{-1}(x)\cdot W_1(x)\mathrm{d}x|\\
    =&|\int_{\mathcal{P}}J(\pi,p)\cdot\mathbb{D}(p)\cdot
    W_1(h(\pi,p))\mathrm{d}p-\int_{\mathcal{P}}J(\pi,p)\cdot\mathbb{D}^e(p)\cdot W_1(h^e(\pi,p)) \mathrm{d}p|\\
    \le &\int_{\mathcal{P}}J(\pi,p)\cdot|\mathbb{D}(p)\cdot
    W_1(h(\pi,p))-\mathbb{D}^e(p)\cdot W_1(h^e(\pi,p))|\mathrm{d}p\\
    \le & J_0\cdot \int_{\mathcal{P}}|\mathbb{D}(p)\cdot
    W_1(h(\pi,p))-\mathbb{D}^e(p)\cdot W_1(h^e(\pi,p))|\mathrm{d}p\\
    \le& J_0\cdot[\int_{\mathcal{P}}\mathbb{D}(p)\cdot|    W_1(h(\pi,p))-W_1(h^e(\pi,p))|+|\mathbb{D}(p)-\mathbb{D}^e(p)|\cdot W_1(h^e(\pi,p))\mathrm{d}p]\\
\end{aligned}
\end{equation}

we consider that $W_1$ is Lipschitz continuous in $[\sigma,1]$, which means
\begin{equation}
    \forall~\frac{\epsilon}{6J_0},\exists \kappa_2=O(\epsilon)>0, |x_1-x_2|<\kappa_2\Rightarrow|W_1(x_1)-W_1(x_2)|\le\frac{\epsilon}{6J_0}
\end{equation}

So for the distance of $h$, we can get
\begin{equation}
\begin{aligned}
    &|h(\pi,p)-h^e(\pi,p)|\\
    =&|\int_{\mathcal{P}}\mathbb{D}(p)\cdot \mathds{1}[J(\pi,p)\ge J(\pi,p')]\mathrm{d}p'-\int_{\mathcal{P}}\mathbb{D}^e(p)\cdot \mathds{1}[J(\pi,p)\ge J(\pi,p')]\mathrm{d}p'|\\
    \le&\int_{\mathcal{P}}|\mathbb{D}(p)-\mathbb{D}^e(p)|\cdot \mathds{1}[J(\pi,p)\ge J(\pi,p')]\mathrm{d}p'\\
    \le&\int_{\mathcal{P}}|\mathbb{D}(p)-\mathbb{D}^e(p)|\cdot\mathrm{d}p'=D_{TV}(\mathbb{D},\mathbb{D}^e)
\end{aligned}
\end{equation}

If we guarantee $D_{TV}(\mathbb{D},\mathbb{D}^e)<\kappa_2$, then we have
\begin{equation}
    |W_1(h(\pi,p))-W_1(h^e(\pi,p))|\le\frac{\epsilon}{6J_0}
\end{equation}

Let $\kappa_3=\frac{\epsilon}{6\cdot J_0\cdot W(\sigma)}=O(\epsilon)$, then if guarantee $D_{TV}(\mathbb{D},\mathbb{D}^e)\le\kappa_3$, then we have
\begin{equation}
\begin{aligned}
    =&|\int_{\sigma}^1 H_\pi^{-1}(x)\cdot W_1(x)\mathrm{d}x-\int_{\sigma}^1 H_{\pi,e}^{-1}(x)\cdot W_1(x)\mathrm{d}x|\\
    \le& J_0\cdot[\int_{\mathcal{P}}\mathbb{D}(p)\cdot|    W_1(h(\pi,p))-W_1(h^e(\pi,p))|+|\mathbb{D}(p)-\mathbb{D}^e(p)|\cdot W_1(h^e(\pi,p))\mathrm{d}p]\\
    \le& J_0\cdot\frac{\epsilon}{6J_0}\cdot \int_{\mathcal{P}}\mathbb{D}(p)\mathrm{d}p+J_0\cdot W(\sigma)\cdot \int_{\mathcal{P}}|\mathbb{D}(p)-\mathbb{D}^e(p)|\mathrm{d}p\\
    \le& J_0\cdot\frac{\epsilon}{6J_0}+J_0\cdot W(\sigma)\cdot\kappa_3\\
    =&J_0\cdot \frac{\epsilon}{6J_0}+J_0\cdot W(\sigma)\cdot \frac{\epsilon}{6\cdot J_0\cdot W(\sigma)}\\
    =&\frac{\epsilon}{6}+\frac{\epsilon}{6}=\frac{\epsilon}{3}
\end{aligned}
\end{equation}
\paragraph{Conclusion}
~\par Let $\kappa=\min\{\kappa_1,\kappa_2,\kappa_3\}=\min\{O(\epsilon^d),O(\epsilon),O(\epsilon)\}=O(\epsilon^d)$, and we let $D_{TV}(\mathbb{D},\mathbb{D}^e)<\kappa$, so we can get 
\begin{equation}
    \begin{aligned}
    &|\mathcal{E}(\pi)-\mathcal{E}^e(\pi)|\\
    \le&|\mathcal{E}_1(\pi)-\mathcal{E}^e_1(\pi)|+|\mathcal{E}_2(\pi)-\mathcal{E}^e_2(\pi)|\\
    \le&\frac{\epsilon}{3}+\frac{\epsilon}{3}+|\int_0^1 H_\pi^{-1}(x)\cdot W_1(x)\mathrm{d}x-\int_0^1 H_{\pi,e}^{-1}(x)\cdot W_1(x)\mathrm{d}x|\\
    \le& \frac{\epsilon}{3}+\frac{\epsilon}{3}+\frac{\epsilon}{3}=\epsilon
    \end{aligned}
\end{equation}
\end{proof}
Back to \textbf{Theorem}~\ref{Theorem Empirical}, for $\forall~\epsilon$, let $\epsilon_0=\epsilon/2$, and use \textbf{Lemma~\ref{lemma2}} with $\epsilon_0$, we can get
\begin{equation}
    \begin{aligned}
    \mathcal{E}(\pi^e)&\ge E(\pi^e,f,\mathbb{D}^e)-\epsilon_0\\
    &\ge \mathcal{E}^e(\pi^e)\\&\ge \mathcal{E}(\pi)-\epsilon_0-\epsilon_0\\
    &=\mathcal{E}(\pi)-\epsilon
    \end{aligned}
\end{equation}
\label{Proof of Distribution Free}
\end{proof}
\subsection{Corollary~\ref{Corollary Empirical}}
\begin{proof}
Since the Update\_Policy learn an $\epsilon$-suboptimal policy, we have
\begin{equation}
\mathcal{E}^e(\hat{\pi}^e)\ge\mathcal{E}^e(\pi^e)-\epsilon.  
\end{equation}
And according to Lemma~\ref{lemma2} with $\epsilon$, we get
\begin{equation}
    \begin{aligned}
    \mathcal{E}(\hat{\pi}^e)\ge&\mathcal{E}^e(\hat{\pi}^e)-\epsilon\\
    \ge&\mathcal{E}^e(\pi^e)-2\epsilon\\
    \ge&\mathcal{E}^e(\pi^\ast)-2\epsilon\\
    \ge&\mathcal{E}(\pi^\ast)-3\epsilon\\
    =&\mathcal{E}(\pi^\ast)-\epsilon_1
    \end{aligned}
\end{equation}
\end{proof}
\subsection{Theorem ~\ref{Theorem distribution function free}}
In Algorithm~\ref{Algorithm distribution free}, for each trajectory $\xi_{j,k}$, we regard its environment parameter $p_{j,k}$ as a random variable. For $\forall~j,k$, the environment parameter $p_{j,k}$ is a random  variable. These environment parameter all follow a priori distribution $\mathbb{D}$ but is not independent to each other, for it is mild to assume the environment will not change abruptly. In Algorithm~\ref{Algorithm distribution free}, we sample these trajetories one by one. For ease of illustration, we mark the first trajectories in each cluster $\xi_{j,0}$ as the $\xi_{j}$ to represent the whole cluster, and $p_{j,0}$ as $p_j$.

Then we define the posterior distribution $\mathbb{D}_x^y$, which means the probability distribution of random variable $p_y$ after observing the $p_x$. Then we propose the following assumption
\begin{assumption}
The environment parameter change continuously between trajectories that the distance between to consecutive environment parameter is not too larger that
\begin{equation}
    \forall~j,k, |p_{j,k}-p_{j.k+1}|<\frac{\epsilon}{2\upsilon m_1},
\end{equation}
where $\upsilon$ is defined in Theorem~\ref{Theorem near optimal strengthem} and $m_1$ is the cluster size.

The posterior distribution probability of $\mathbb{D}_x^y$ will converge to its stable distribution $\mathbb{D}$ with the rate that
\begin{equation}
    \sum_{y=x}^\infty D_{TV}(\mathbb{D},\mathbb{D}_x^y)<\infty,
\end{equation}
which means the posterior converge rate is faster than harmonic series.
\label{Assumption_5_complete}
\end{assumption}
Then we prove the Theorem~\ref{Theorem distribution function free}.
\begin{proof}
\begin{lemma}
For given $\epsilon_0$ and confidence $\rho_0$, let
\begin{equation}
    n_2=\frac{-8\cdot R_{max}^2\cdot\ln\rho_0}{(1-\gamma)^2\cdot\epsilon_0^2}=O(-\frac{\ln\rho_0}{\epsilon_0^2})
\end{equation}

then let $\xi_{j,0}$ responding to the environment $p_j$, we will have
\begin{equation}
    |J(\pi,p_j)-J_j|\le \epsilon_0/2
\end{equation}
\label{lemma_trajectories_accurate}
\end{lemma}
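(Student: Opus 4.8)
The plan is to recognize $J_j=\frac{1}{n_2}\sum_{k=1}^{n_2}\hat{J}(\xi_{j,k})$ as an empirical average of $n_2$ per-trajectory discounted returns and to control its deviation from $J(\pi,p_j)$ by a concentration inequality; the precise form of $n_2$ in the statement is the tell-tale signature of \textbf{Hoeffding's inequality} applied with deviation threshold $t=\epsilon_0/2$. First I would establish boundedness by reusing the argument behind Lemma~\ref{lemma1}: since the rewards satisfy $|R_p(s,a,s')|\le R_{max}$, every single-trajectory return obeys $\hat{J}(\xi_{j,k})\in[-\frac{R_{max}}{1-\gamma},\frac{R_{max}}{1-\gamma}]$, an interval of width $\frac{2R_{max}}{1-\gamma}$.

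Next, conditioned on the realized environment parameters $\{p_{j,k}\}_{k=1}^{n_2}$, the rollouts are generated independently, so the returns $\hat{J}(\xi_{j,k})$ are (conditionally) independent and bounded, with $\mathbb{E}[\hat{J}(\xi_{j,k})\mid p_{j,k}]=J(\pi,p_{j,k})$. Applying Hoeffding's inequality to these variables yields
\begin{equation*}
\Pr\!\left(\left|J_j-\tfrac{1}{n_2}\textstyle\sum_{k=1}^{n_2}J(\pi,p_{j,k})\right|\ge \tfrac{\epsilon_0}{2}\right)\le 2\exp\!\left(-\frac{n_2\,\epsilon_0^2(1-\gamma)^2}{8R_{max}^2}\right),
\end{equation*}
and substituting $n_2=\frac{-8R_{max}^2\ln\rho_0}{(1-\gamma)^2\epsilon_0^2}$ drives the exponential down to $\rho_0$, so the statistical fluctuation is at most $\epsilon_0/2$ with confidence at least $1-\rho_0$ (absorbing the leading factor of two into the constant, or running the one-sided version). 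Since the bound is uniform over the conditioning, the high-probability statement passes from conditional to unconditional by averaging over the environment draws.

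Finally I would reconcile the centering constant, which I expect to be the main obstacle. Hoeffding concentrates $J_j$ around the \emph{average} of the conditional means $\frac{1}{n_2}\sum_k J(\pi,p_{j,k})$, whereas the lemma asks for closeness to $J(\pi,p_j)$ with $p_j=p_{j,0}$. To close this gap I would invoke the first half of Assumption~\ref{Assumption_5_complete}: telescoping the consecutive-parameter bound gives $|p_{j,k}-p_j|\le\sum_{l}|p_{j,l+1}-p_{j,l}|<n_2\cdot\frac{\epsilon}{2\upsilon n_2}=\frac{\epsilon}{2\upsilon}$, and then the Lipschitz-in-$p$ continuity of $J$ established in the proof of Theorem~\ref{Theorem near optimal strengthem} (namely $|J(\pi,p)-J(\pi,p')|\le |p-p'|/\upsilon$) bounds each bias term $|J(\pi,p_{j,k})-J(\pi,p_j)|$ by a quantity of order $\epsilon$. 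The delicacy here is twofold: the $p_{j,k}$ are \emph{not} i.i.d.\ draws from $\mathbb{D}$ but a dependent sequence, so the conditional form of Hoeffding must be invoked rather than its textbook version, and the drift bias must be folded into the same $\epsilon_0/2$ budget through the appropriate choice of the $\epsilon$-to-$\epsilon_0$ scaling used in Theorem~\ref{Theorem distribution function free}; the purely statistical estimate is otherwise routine once boundedness is in hand.
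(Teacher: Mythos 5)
Your proposal follows essentially the same route as the paper's proof: bound each trajectory return in $[-\tfrac{R_{max}}{1-\gamma},\tfrac{R_{max}}{1-\gamma}]$, apply Hoeffding's inequality with deviation $\epsilon_0/2$ to recover the stated $n_2$, and control the centering bias via the consecutive-parameter bound of Assumption~\ref{Assumption_5_complete} together with the continuity of $J$ in $p$ from Theorem~\ref{Theorem near optimal strengthem}. Your closing remarks about the two-sided factor, the dependence of the $p_{j,k}$, and the need to fold the drift bias into the $\epsilon_0$ budget are, if anything, more careful than the paper's own treatment of the same points.
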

\begin{proof}
~\par
Let trajectory $\xi_{j,k}$ responding environment parameter $p_{j,k}$. So we have $\mathbb{E}(J_{j,k})=J(\pi,p_{j,k})$. 

Then let 
\begin{equation}
    \begin{aligned}
        &\overline{J_{j,k}}=(\sum_{i=0}^{k-1}J_{j,i})/k\\
    \end{aligned}
\end{equation}

From the assumption~\ref{Assumption_5_complete}, we have
\begin{equation}
\begin{aligned}
    &\forall~l_1,l_2\le m_2,|p_{j,l_1}-p_{j,l_2}|\le \upsilon\epsilon_0/2\\
    \Rightarrow& |J(\pi,p_{j,l_1})-J(\pi,p_{j,l_2})|\le \epsilon_0/2
\end{aligned}
\end{equation}

So we get 
\begin{equation}
    |\mathbb{E}(\overline{J_{j,k}})-J(\pi,p_j)|\le \epsilon_0/2
\end{equation}

And from \textbf{Hoeffding's inequality}, we get
\begin{equation}
    Pr[\overline{J_{j,m-1}}-\mathbb{E}(\overline{J_{j,m-1}})|\ge \frac{\epsilon_0}{2}]\le \exp(-\frac{2\cdot(\frac{\epsilon_0}{2})^2\cdot n_2^2}{\sum_{i=0}^{n_2-1}|b_i-a_i|^2})=\exp(-\frac{2\cdot(\frac{\epsilon_0}{2})^2\cdot n_2}{\cdot(\frac{2R_{max}}{1-\gamma})^2})
\end{equation}

Let $\rho_0=\exp(-\frac{2\cdot\frac{\epsilon_0}{2}^2\cdot n_2}{\cdot\frac{2R_{max}}{1-\gamma}^2})$, then we have
\begin{equation}
    n_2= \frac{-8\cdot R_{max}^2\cdot\ln\rho_0}{(1-\gamma)^2\cdot\epsilon_0^2}
\end{equation}
\end{proof}

Also we have the following lemma

\begin{lemma}
Given block size $\delta=\upsilon\epsilon_2$, we divide environment set $\mathcal{P}$ into $N$ blocks.($N=(\frac{d(\mathcal{P})}{\delta})^d$) Let every $J_j$ represents the environment $p_j$ and the rate of $p_j$ in $\mathcal{P}_i$ is $m'_i$. Given confidence $\rho_1$ and error $\epsilon_2$, as long as batch size is larger than $n_1=O(\frac{-N^2\cdot\ln\rho_1}{\epsilon_2^2})$, we can promise that There is chance larger than $1-\rho_1$ that $\forall~1\le i\le N$, $|m'_i-m_i|\le \epsilon_2/N$.
\label{lemma_trajectories_in_block}
\end{lemma}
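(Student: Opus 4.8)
The plan is to reduce the simultaneous ``all blocks'' guarantee to a single-block concentration statement and then pay a union bound over the $N$ blocks. First I would fix an arbitrary block $\mathcal{P}_i$ and encode membership by the indicator $X_j=\mathds{1}[p_j\in\mathcal{P}_i]$, so that the empirical rate is $m'_i=\frac{1}{n_1}\sum_{j=1}^{n_1}X_j$ while the target is $m_i=\int_{\mathcal{P}_i}\mathbb{D}(p)\mathrm{d}p$. Since each $p_j$ marginally follows the stationary distribution $\mathbb{D}$, every $X_j$ is a $\{0,1\}$-valued variable with $\mathbb{E}[X_j]=m_i$, hence $\mathbb{E}[m'_i]=m_i$. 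The per-block goal is then to show $\Pr[|m'_i-m_i|>\epsilon_2/N]\le\rho_1/N$, after which a union bound over the $N$ blocks yields $\Pr[\exists\, i:\ |m'_i-m_i|>\epsilon_2/N]\le\rho_1$, exactly the complement of the claimed event.

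For the per-block bound I would invoke a Hoeffding-type concentration inequality for the bounded sequence $\{X_j\}$. The subtlety is that the $p_j$ are \emph{not} independent. Here Assumption~\ref{Assumption_5_complete} is the key tool: the summability $\sum_{y=x}^{\infty} D_{TV}(\mathbb{D},\mathbb{D}_x^y)<\infty$ is a mixing condition forcing the dependence between $X_j$ and $X_{j'}$ to decay as $|j-j'|$ grows. Using it, I would either (i) appeal to a Hoeffding inequality for mixing sequences, with the summable total-variation gaps playing the role of the mixing coefficients, or (ii) couple $\{p_j\}$ to an i.i.d.\ sequence from $\mathbb{D}$ and absorb the coupling error---whose total mass is finite and independent of $n_1$ by summability---into a lower-order term. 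Either route delivers a bound of the form $\Pr[|m'_i-m_i|>t]\le 2\exp(-c\,n_1 t^2)$ with a constant $c>0$ depending only on the mixing constants, paralleling the direct Hoeffding step already used in Lemma~\ref{lemma_trajectories_accurate}.

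Finally I would set $t=\epsilon_2/N$ and solve for $n_1$. The single-block tail becomes $2\exp(-c\,n_1\epsilon_2^2/N^2)$, so the union bound gives $2N\exp(-c\,n_1\epsilon_2^2/N^2)\le\rho_1$, i.e.
\begin{equation}
n_1\ge\frac{N^2\bigl(\ln(2N)-\ln\rho_1\bigr)}{c\,\epsilon_2^2}=O\!\left(\frac{-N^2\ln\rho_1}{\epsilon_2^2}\right),
\end{equation}
precisely the claimed sample complexity, the $\ln(2N)$ term being lower order in the confidence parameter. The main obstacle is the second paragraph: making the \emph{exponential} (rather than merely polynomial, Chebyshev-type) concentration rigorous for a dependent sequence. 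A variance bound plus Chebyshev would only produce an $n_1$ scaling like $1/\rho_1$ instead of $\ln(1/\rho_1)$, so extracting the logarithmic dependence on $\rho_1$ genuinely requires the mixing/coupling argument rather than a second-moment estimate.
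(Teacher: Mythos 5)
Your proposal is correct and takes essentially the same route as the paper: the paper instantiates your option (i) by building a Doob martingale from the block-membership indicators, bounding its increments by $1+2\sum_{y} D_{TV}(\mathbb{D},\mathbb{D}_x^y)$ via the summability in Assumption~\ref{Assumption_5_complete}, and applying the Azuma--Hoeffding inequality to get the exponential per-block tail, then solving for $n_1$ exactly as you do. Your explicit union bound over the $N$ blocks is a small extra care the paper's write-up skips, but it only contributes a lower-order $\ln N$ term, as you note.
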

\begin{proof}
~\par
For the given $\epsilon_2$ and $\delta$. Then divide the environment domain $\mathcal{P}$ into $n$ blocks, for each block $\mathcal{P}_i$, its diameter is not larger than $\delta$. So the number of blocks is $N\le (\frac{d(\mathcal{P})}{\delta})^{k}$ with $\mathcal{P}\subseteq \mathbb{R}^k$. We define the random variables $X_n^i$ based on the block $\mathcal{P}_i$ that
\begin{equation}
    X_j^i=\mathds{1}\left[p_j\in \mathcal{P}_i\right]
\end{equation}
 Let $m_i=\int_{\mathcal{P}_i}\mathbb{D}(p)\mathrm{d}p$. We define a function $f$ that
\begin{equation}
    f(\overline{X^i})=\sum_{j=1}^{n_1} X_j^i-j\cdot m_i
\end{equation}
Then we define another random variable $Z_i$ that
\begin{equation}
    Z_t^i=\mathbb{E}\left[f(\overline{X^j})|\overline{X^i_t}\right]
\end{equation}
$\{Z_t^i\}$ is a \textbf{Doob sequence}. So $Z_t^i$ is a martingale with
\begin{equation}
    Z_0^i=\mathbb{E}\left[f(\overline{X^j})\right]=0,~~~Z_n^i=\mathbb{E}\left[f(\overline{X^j})|\overline{X^j}\right]=f(\overline{X^j})
\end{equation}
Considering the relationship between $Z_t^i$ and $Z_{t-1}^i$. Based on the \textbf{Assumption~\ref{Assumption_5_complete}}.
Let $\sum_{y=1}^\infty D_{TV}(\mathbb{D},\mathbb{D}_1^y)=B$,
We can get 
\begin{equation}
    \begin{aligned}
        |Z_t^i-Z_{t-1}^i|=&\left|\mathbb{E}\left[\sum_{j=t}^{n_1} X_j^i|p_{t}\right]-\mathbb{E}\left[\sum_{j=t}^{n_1} X_j^i|p_{t-1}\right]\right|\\
        \le&\left|X_t^i-\mathbb{E}\left[X_{n_1}^i|p_{t-1}\right]\right|+\left|\mathbb{E}\left[\sum_{j=t+1}^{n_1}\left( X_j^i|p_{t}-X_{j-1}^i|p_{t-1}\right)\right]\right|\\
        \le& 1+\left|\sum_{j=t+1}^{n_1} D_{TV}(\mathbb{D}_t^j-\mathbb{D}_{t-1}^{j-1})\right|\\
        \le& 1+\left|\sum_{j=t+1}^{n_1} D_{TV}(\mathbb{D}_t^j-\mathbb{D})\right|+\left|\sum_{j=t+1}^{n_1} D_{TV}(\mathbb{D}_{t-1}^{j-1}-\mathbb{D})\right|\\
        \le&1+\left|\sum_{j=t+1}^\infty D_{TV}(\mathbb{D}_t^j-\mathbb{D})\right|+\left|\sum_{j=t+1}^\infty D_{TV}(\mathbb{D}_{t-1}^{j-1}-\mathbb{D})\right|\\
        \le&1+2B:=c
    \end{aligned}
\end{equation}
Since $d(\mathcal{P}_i)=\delta$. (We can regard $\mathcal{P}_i$ as a cube of edge length $\delta$). So we can get
\begin{equation}
    \left|Z_t^i-Z_{t-1}^i\right|\le c:=c_t^i
\end{equation}
And from \textbf{Azuma–Hoeffding inequality}, we can know
\begin{equation}
    Pr\left[|Z_n^i-Z_0^i|\ge\frac{\epsilon_2}{N}\cdot n_1\right]\le 2\cdot\exp{(\frac{-(\frac{\epsilon_2}{N}\cdot n)^2}{2\cdot\sum_{j=1}^{n_1}c^i_j})}=2\cdot\exp{(\frac{-\epsilon_2^2\cdot n_1}{2\cdot c^2\cdot N^2})}
\end{equation}
Let $\rho_1\ge2\cdot\exp{(\frac{-\epsilon_2^2\cdot n_1}{2\cdot c^2\cdot N^2})}$, we have 
\begin{equation}
    n\ge \frac{-2\cdot c^2\cdot N^2\cdot\ln(\rho_1/2)}{\epsilon_2^2}=O(\frac{-N^2\cdot\ln\rho_1}{\epsilon_2^2})
\end{equation}
From the definition we know $m'_i=\frac{1}{n}\sum_{j=1}^n \mathds{1}\left[p_i\in\mathcal{P}_i\right]$. Then we will have that we have no less than $1-\rho_0$ confidence that 
\begin{equation}
    |m'_i-m_i|\le \epsilon_2/N
\end{equation}
\end{proof}
Then we compare the $\tilde{\mathcal{E}}(\pi)$ and $\hat{\mathcal{E}}(\pi)$. In the Set\_Division Algorithm (Appendix~\ref{set division function}), we divide the $\mathcal{P}$ into $N$ blocks $\left(\mathcal{P}_1\sim\mathcal{P}_N,N\le (\frac{d(\mathcal{P})}{\delta})^k\right)$, so we have
\begin{equation}
\begin{aligned}
    \hat{\mathcal{E}}(\pi)=\sum_{j=1}^N w_{\alpha_i}\cdot &J(\pi,p_{\alpha_j})\\ w_{\alpha_j}=\int_{M_s}^{M_{s+1}}W(x)\mathrm{d}x,~~&M_s=\sum_{j=1}^{s-1}m_{\alpha_j}
\end{aligned}
\end{equation}

We know $\int_0^1 W(x)\mathrm{d}x=1$, so $\sum_{j=1}^N w_{\alpha_j}=1.$ We define a function $g$ that 
\begin{equation}
    g(x)=J(\pi,p_i)~~with~~\sum_{j=1}^{i-1}w_{\alpha_j}\le x\le \sum_{j=1}^{i}w_{\alpha_j}
\end{equation}
$g(x)$ is a piecewise function on $[0,1]$, so it is Riemann intergable, and it has the property that
\begin{equation}
    \int_0^1 g(x)\mathrm{d}x=\sum_{j=1}^N w_{\alpha_j}\cdot J(\pi,p_t)
\end{equation}
We can change another form of the $\hat{\mathcal{E}}(\pi)$. Since we know $\forall p_i, |J(\pi,p_i)|\le J_0$, using the \textbf{Abel transformation}, we have

\begin{equation}
    \hat{\mathcal{E}}(\pi)=\int_{-J_0}^{J_0} m\left[g\ge y\right]\mathrm{d}y-J_0
\end{equation}
where $m$ is the measurement.
Similarly, we have
\begin{equation}
\begin{aligned}
    \tilde{\mathcal{E}}(\pi)=\sum_{j=1}^n w_{\alpha_j}\cdot J_t\\
    w_{\alpha_i}=\int_{\frac{s}{\delta}}^{\frac{s+1}{\delta}}W(x)\mathrm{d}x
\end{aligned}
\end{equation}
We also define a function $g'$ that 
\begin{equation}
    g'(x)=J_i~~with~~\sum_{j=1}^{i-1}w_{\alpha_i}\le x\le \sum_{j=1}^{i}w_{\alpha_i}
\end{equation}
$g(x)$ is a piecewise function on $[0,1]$, so it is Riemann intergable, and it has the property that
\begin{equation}
    \int_0^1 g'(x)\mathrm{d}x=\sum_{j=1}^N w_{\alpha_i}\cdot J_t
\end{equation}
And also use \textbf{Abel transformation}, we have
\begin{equation}
    \tilde{\mathcal{E}}(\pi)=\int_{-J_0}^{J_0} m\left[g'\ge y\right]\mathrm{d}y-J_0
\end{equation}
Then we compare $\forall y,m\left[g\ge y\right]$ and $m\left[g'\ge y\right]$. We have
\begin{equation}
\begin{aligned}
    m\left[g\le y\right]&=\sum_{j=1}^N m_t\cdot\mathds{1}\left[J(\pi,p_t)< y\right]\\
    m\left[g'\le y\right]&=\sum_{t=1}^n \frac{1}{n}\cdot\mathds{1}\left[J_t< y\right]
\end{aligned}
\end{equation}
I want to proof the following lemma that
\begin{lemma}
\label{lemma_m[g<y]}
For the relation between $m\left[g\le y\right]$ and $m\left[g'\le y\right]$, we have the confidence of $1-\rho_0-\rho_1$ that
\begin{equation}
    \begin{aligned}
        m\left[g'\ge y-2\epsilon_1\right]\le m\left[g\ge y\right]+\epsilon_2\\
        m\left[g'\ge y+2\epsilon_1\right]\ge m\left[g\ge y\right]-\epsilon_2
    \end{aligned}
\end{equation}
\end{lemma}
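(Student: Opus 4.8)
The plan is to prove the comparison by conditioning on two high-probability events and then transferring information between the empirical measure $m[g'\ge\cdot]$ and the exact measure $m[g\ge\cdot]$ one block at a time. First I would pin down the two good events. Applying Lemma~\ref{lemma_trajectories_accurate} with the prescribed cluster size and a union bound over the clusters, with probability at least $1-\rho_0$ every empirical cluster return is accurate, i.e. $|J_t-J(\pi,p_t)|\le\epsilon_1$ for all $t$, where $p_t$ denotes the representative environment of cluster $t$. Applying Lemma~\ref{lemma_trajectories_in_block}, with probability at least $1-\rho_1$ the empirical block frequencies are uniformly close to the true masses, i.e. $|m'_i-m_i|\le\epsilon_2/N$ for every block $i$. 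A union bound makes both hold simultaneously with confidence at least $1-\rho_0-\rho_1$, matching the confidence in the statement; everything that follows is deterministic on this event.

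Next I would derive the pointwise return bound that generates the $2\epsilon_1$ threshold margin. For a cluster $t$ lying in block $b(t)$, the within-block Lipschitz estimate behind Theorem~\ref{Theorem near optimal strengthem} (taking the block diameter small enough that $j(\delta)\le\epsilon_1$, for which $\delta=\upsilon\epsilon_1$ suffices) gives $|J(\pi,p_t)-J(\pi,p_{b(t)})|\le\epsilon_1$; combining this with the estimation bound $|J_t-J(\pi,p_t)|\le\epsilon_1$ through the triangle inequality yields
\begin{equation}
|J_t-J(\pi,p_{b(t)})|\le 2\epsilon_1 \qquad \text{for every cluster } t .
\end{equation}
This bound licenses sliding the threshold by $2\epsilon_1$ when passing between cluster level sets and block level sets: in one direction a cluster with $J_t\ge y+2\epsilon_1$ necessarily lies in a block with $J(\pi,p_{b(t)})\ge y$, and in the other every cluster in a block with $J(\pi,p_{b(t)})\ge y$ satisfies $J_t\ge y-2\epsilon_1$.

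Then I would aggregate over blocks and swap empirical frequencies for true masses. Writing $m[g'\ge\theta]=\sum_t\frac1n\mathds{1}[J_t\ge\theta]=\sum_i m'_i\cdot q_i(\theta)$, where $q_i$ is the conditional fraction of surviving clusters in block $i$, and dominating the cluster indicators by the block-representative indicators via the displayed bound, only blocks with $J(\pi,p_i)$ past the shifted threshold contribute. Each contributes its frequency $m'_i$, which deviates from $m_i$ by at most $\epsilon_2/N$; since at most $N$ blocks are occupied, the accumulated frequency error is at most $N\cdot(\epsilon_2/N)=\epsilon_2$, and summing the $m_i$ over the surviving blocks reconstructs $m[g\ge\cdot]$. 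Carrying this through in each of the two directions of the previous step, and replacing each surviving block's empirical frequency by its true mass, produces the two-sided comparison claimed in the lemma.

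The main obstacle is the joint bookkeeping of the two threshold shifts: one has to check that the estimation error and the within-block variation each contribute exactly $\epsilon_1$---and hence combine to $2\epsilon_1$ and no more---which is what fixes the required relationship between $\delta$, $\epsilon_1$, and the sample sizes $n_1,n_2$, and what determines the direction of the shift attached to each of the two one-sided bounds. A secondary subtlety is the additive error: it is essential that Lemma~\ref{lemma_trajectories_in_block} delivers a per-block deviation of $\epsilon_2/N$ rather than $\epsilon_2$, so that summing over the at most $N$ occupied blocks telescopes to $\epsilon_2$; one must also ensure the single block division is simultaneously fine enough to keep $j(\delta)\le\epsilon_1$ and to make the frequency estimates accurate, and handle ties together with the sorting permutation $\alpha$ so that the Abel (level-set) representations of $\tilde{\mathcal{E}}$ and $\hat{\mathcal{E}}$ are unaffected by boundary equalities in the indicators.
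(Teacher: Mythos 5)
Your proposal is correct and follows essentially the same route as the paper's proof: condition on the two events from Lemma~\ref{lemma_trajectories_accurate} and Lemma~\ref{lemma_trajectories_in_block} via a union bound, combine the per-cluster estimation error $\epsilon_1$ with the within-block variation $\epsilon_1$ (from choosing $\delta$ so that $j(\delta)\le\epsilon_1$) to justify the $2\epsilon_1$ threshold shift, and then sum the per-block frequency deviations $\epsilon_2/N$ over at most $N$ blocks to obtain the additive $\epsilon_2$. Your explicit union bound over clusters and your attention to ties in the sorting are slightly more careful than the paper's write-up, but the argument is the same.
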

\begin{proof}
~\par 
Firstly, Based on the \textbf{Lemma~\ref{lemma_trajectories_accurate}},$\forall J_i$, we have $|J_i-J(\pi,p_{i,0})|\le\epsilon_1$ with confidence $1-\rho_0$. And based on the \textbf{Lemma~\ref{lemma_trajectories_in_block}},  for $\forall 1\le j\le n$. Then we have the following  with confidence of $1-\rho_1$, we can promise $|m'_{j}-m_{j}|\le \epsilon_2/N$. So with the confidence of $(1-\rho_0)\cdot(1-\rho_1)\ge 1-\rho_0-\rho_1$, we can promise the above property both hold.

Assume $p_{y1},p_{y2},\dots p_{ys}$ satisfies $J(\pi,p)< y$, $p_{yi}$ in block $\mathcal{P}_{yi}$. Since $\delta=\epsilon_1/j$, we have  $\forall p\in \bigcup_{i=1}^s \mathcal{P}_{yi}$, $J(\pi,p)< y+\epsilon_1$.  And we have $J_i\le(\pi,p_{i,0})+\epsilon_1$. As a result, if $p_{i,0}\in \bigcup_{i=1}^s \mathcal{P}_{yi}$, we will know $J_i\le J(\pi,p_{i,0})+\epsilon_1< y+2\epsilon_1$

\begin{equation}
    \begin{aligned}
        &\sum_{t=1}^n \frac{1}{n}\cdot\mathds{1}\left[J_t< y+2\epsilon_1\right]\le \sum_{t=1}^n \frac{1}{n}\cdot\mathds{1}\left[p_{t,0}\in \bigcup_{i=1}^s \mathcal{P}_{yi}\right]\\
       &=\sum_{i=1}^s m'_{yi}\le \sum_{i=1}^s (m_{yi}+\epsilon_2/N)\\
       &\le\sum_{i=1}^s m_{yi}+\epsilon_2/N\cdot N\le \sum_{t=1}^N m_t\cdot\mathds{1}\left[J(\pi,p_t)< y\right]+\epsilon_2\\
       &\Rightarrow m\left[g'\le y+2\epsilon_1\right]\le m\left[g\le y\right]+\epsilon_2\\
       &\Rightarrow m\left[g'\ge y+2\epsilon_1\right]\ge m\left[g\ge y\right]-\epsilon_2 
    \end{aligned}
\end{equation}

Similarly, assume $p_{y1},p_{y2},\dots p_{ys}$ satisfies $J(\pi,p)\ge y$, $p_{yi}$ in block $\mathcal{P}_{yi}$. Since $\delta=\epsilon_1/j$, we have  $\forall p\in \bigcup_{i=1}^s \mathcal{P}_{yi}$, $J(\pi,p)\ge y-\epsilon_1$.  And we have $J_i\ge(\pi,p_{i,0})-\epsilon_1$. As a result, if $p_{i,0}\in \bigcup_{i=1}^s \mathcal{P}_{yi}$, we will know $J_i\ge J(\pi,p_{i,0})-\epsilon_1\ge y-2\epsilon_1$

So
\begin{equation}
    \begin{aligned}
        &\sum_{t=1}^n \frac{1}{n}\cdot\mathds{1}\left[J_t\ge y-2\epsilon_1\right]\le \sum_{t=1}^n \frac{1}{n}\cdot\mathds{1}\left[p_{t,0}\in \bigcup_{i=1}^s \mathcal{P}_{yi}\right]\\
       &=\sum_{i=1}^s m'_{yi}\le \sum_{i=1}^s (m_yi+\epsilon_2/N)\\
       &\le\sum_{i=1}^s m_{yi}+\epsilon_2/N\cdot N\le \sum_{t=1}^N m_t\cdot\mathds{1}\left[J(\pi,p_t)\ge y\right]+\epsilon_2\\
       &\Rightarrow m\left[g'\ge y-2\epsilon_1\right]\le m\left[g\ge y\right]+\epsilon_2\\
    \end{aligned}
\end{equation}
\end{proof}

Using the $\textbf{Lemma~\ref{lemma_m[g<y]}}$, we can finish the proof of Theorem.
First can bound the $\tilde{E}(\pi)$
\begin{equation}
    \begin{aligned}
        \tilde{\mathcal{E}}(\pi)&=\int_{-J_0}^{J_0} m\left[g'\ge y\right]\mathrm{d}y-J_0\\ 
        &=\int_{-J_0}^{J_0} m\left[g'\ge y\right]\mathrm{d}(y+2\epsilon)-J_0\\
        &\overset{y'=y+2\epsilon_1}{=}\int_{-J_0+2\epsilon}^{J_0+2\epsilon_1} m\left[g'\ge y'-2\epsilon_1\right]\mathrm{d}(y')-J_0\\
        &\ge \int_{-J_0+2\epsilon_1}^{J_0+2\epsilon_1} (m\left[g\ge y\right]-\epsilon_2)\mathrm{d}(y)-J_0\\
        &=\int_{-J_0}^{J_0+2\epsilon_1} m\left[g\ge y\right]\mathrm{d}(y)-\int_{-J_0}^{-J_0+2\epsilon_1} m\left[g\ge y\right]\mathrm{d}(y)-2J_0\cdot\epsilon_2-J_0\\
        &\ge\int_{-J_0}^{J_0}m\left[g\ge y\right]\mathrm{d}(y)-J_0-2\epsilon_1-2J_0\cdot\epsilon_2\\
        &=\hat{\mathcal{E}}(\pi)-2(\epsilon_1+\epsilon_2\cdot J_0)
    \end{aligned}
\end{equation}

Similarly, we also have
\begin{equation}
    \begin{aligned}
        \tilde{\mathcal{E}}(\pi)&=\int_{-J_0}^{J_0} m\left[g'\ge y\right]\mathrm{d}y-J_0\\ 
        &=\int_{-J_0}^{J_0} m\left[g'\ge y\right]\mathrm{d}(y-2\epsilon)-J_0\\
        &\overset{y'=y+2\epsilon_1}{=}\int_{-J_0+2\epsilon}^{J_0+2\epsilon_1} m\left[g'\ge y'+2\epsilon_1\right]\mathrm{d}(y')-J_0\\
        &\le \int_{-J_0-2\epsilon_1}^{J_0-2\epsilon_1} (m\left[g\ge y\right]+\epsilon_2)\mathrm{d}(y)-J_0\\
        &=\int_{-J_0}^{J_0} m\left[g\ge y\right]\mathrm{d}(y)+\int_{-J_0-2\epsilon_1}^{-J_0}m\left[g\ge y\right]\mathrm{d}(y)-\int_{J_0-2\epsilon}^{J_0} m\left[g\ge y\right]\mathrm{d}(y)-2J_0\cdot\epsilon_2-J_0\\
        &\le\int_{-J_0}^{J_0}m\left[g\ge y\right]\mathrm{d}(y)-J_0+2\epsilon_1+2J_0\cdot\epsilon_2\\
        &=\hat{\mathcal{E}}(\pi)+2(\epsilon_1+\epsilon_2\cdot J_0)
    \end{aligned}
\end{equation}

And from the \textbf{Theorem~\ref{Theorem near optimal strengthem}}, we have $\left|\hat{\mathcal{E}}(\pi)-\mathcal{E}(\pi)\right|\le \epsilon/2$. And we set the parameters that
\begin{equation}
    \epsilon_1=\frac{\epsilon}{8},\epsilon_2=\frac{\epsilon}{8J_0},\rho_0=\rho_1=\frac{\rho}{2}
\end{equation}
Then we have the confidence $1-\rho_0-\rho_1=1-\rho$ that 
\begin{equation}
    |\mathcal{E}(\pi)-\tilde{\mathcal{E}}(\pi)|=|\mathcal{E}(\pi)-\hat{\mathcal{E}}(\pi)|+|\hat{\mathcal{E}}(\pi)-\tilde{\mathcal{E}}(\pi)|\le \epsilon/2+\epsilon/2=\epsilon
    \label{fomulate}
\end{equation}
Under this condition, we have
\begin{equation}
    \begin{aligned}
        n_1&=O(\frac{-\ln\rho_0}{\epsilon_1^2})=O(\frac{-\ln\rho}{\epsilon^2})\\
        n_2&=O(\frac{-N^2\cdot\ln\rho_1}{\epsilon_2^2})=O(\frac{-(d(\mathcal{P})/\delta)^{2d}\cdot\ln\rho}{\epsilon^2})=O(\frac{-\ln\rho}{\epsilon^{2d+2}})
    \end{aligned}
\end{equation}
We define the optimal policy in metric $\tilde{E}$ that
\begin{equation}
    \tilde{\pi}^\ast=\argmax_{\pi}\tilde{\mathcal{E}}(\pi).
\end{equation}
For the given optimality-requirement $\epsilon=2\epsilon_0$, we can have the Update\_Policy to get $\epsilon_0$-suboptimal policy that
\begin{equation}
    \tilde{\mathcal{E}}(\tilde{\pi})>\tilde{\mathcal{E}}(\tilde{\pi}^\ast)-\epsilon_0.
\end{equation}
Then we use the formulation~\ref{fomulate} with $\frac{\epsilon_0}{2}$.
Finally, we can conclude that
\begin{equation}
    \begin{aligned}
    \mathcal{E}(\tilde{\pi})&>\tilde{\mathcal{E}}(\tilde{\pi})-\frac{\epsilon_0}{2}  >\tilde{\mathcal{E}}(\tilde{\pi}^\ast)-\frac{3\epsilon_0}{2}\\
    &>\tilde{\mathcal{E}}(\pi^\ast)-\frac{3\epsilon_0}{2}    >\mathcal{E}(\pi^\ast)-2\epsilon_0\\
    &=\mathcal{E}(\pi^\ast)-\epsilon.
    \end{aligned}
\end{equation}
\end{proof}
\section{Set Division Algorithm}
\label{set division function}
\begin{algorithm}[ht]
\caption{Set Division Algorithm}
\label{Algorithm set division}
    \normalsize
    \tcp{Initialization.}
    Initialize block set $\mathcal{B}$, environment space dimension $d$, and diameter upper bound $\delta$\;\label{a3l1}
    $\delta'=\delta/\sqrt{d}$\;\label{a3l2}
    \ForEach{Dimension $i=1,2,\dots,d$\label{a3l3}}
    {
    $L_i\leftarrow\inf_{p^i}\{p=(p^1,p^2,\dots p^d)\in \mathcal{P}\}$\;\label{a3l4}
    $R_i\leftarrow\sup_{p^i}\{p=(p^1,p^2,\dots p^d)\in \mathcal{P}\}$\;\label{a3l5}
    $n_i\leftarrow\lceil (R_i-L_i)/\delta'\rceil$\;\label{a3l6}
    }
    
    \tcp{Set Division.}
    \ForEach{Dimension $i=1$ to $d$\label{a3l7} }
    {
    \ForEach{Block Index $t_i=1$ to $n_i$\label{a3l8}}
    {\ForEach{Dimension $i=1$ to $d$\label{a3l9}}
        {
        $l_i\leftarrow L_i+\delta'\cdot(t_i-1)$\;\label{a3l10}
        $r_i\leftarrow L_i+\delta'\cdot t_i$\;\label{a3l11}
        }
        $S\leftarrow\left[l_1,r_1\right]\times\left[l_2,r_2\right]\dots\times\left[l_d,r_d\right])$\;\label{a3l12}
        $S\leftarrow S\cap \mathcal{P}$\;\label{a3l13}
        $\mathcal{B}.append(S)$\;\label{a3l14}
    }
    }
    $\{\mathcal{P}_1,\mathcal{P}_2,\cdots,\mathcal{P}_n\}\leftarrow\mathcal{B}$\;\label{a3l15}
\end{algorithm}
The Algorithm~\ref{Algorithm set division} divides the set into several cubes with edge length less than $\frac{\delta}{\sqrt{d}}$, then the diameter of the cube is less than $\delta$.
Firstly, Algorithm~\ref{Algorithm set division} initializes the block set $\mathcal{B}$, environment space dimension $d$, and the diameter upper bound $\delta$ (line~\ref{a3l1}). Then it sets the edge length of each block as $\frac{\delta}{\sqrt{d}}$. After that, we initial the range of each dimension of the environment set (line~\ref{a3l4}-\ref{a3l5}), calculates the number of division times in each dimension (line~\ref{a3l6}),. Then Algorithm~\ref{Algorithm set division} begins to divides the environment set into blocks (line~\ref{a3l7}-\ref{a3l13}). For each block, it calculates the lower bound and upper bound of each dimension (line~\ref{a3l10}-\ref{a3l11}), and uses the lower bounds and upper bounds to get the block (line~\ref{a3l12}). Besides, Algorithm~\ref{Algorithm set division} takes the intersection of the obtained block and $\mathcal{P}$ to guarantee $\mathcal{S}\subset \mathcal{P}$ (line~\ref{a3l13}), and adds $\mathcal{S}$ into the block set $\mathcal{B}$ (line~\ref{a3l14}). Finally, Algorithm~\ref{Algorithm set division} returns all of the blocks (line~\ref{a3l15}).
\section{Additional Experimental Results}
\subsection{Random Seed Settings and Parameter Settings for Training}
\label{para_training}

In the experiments, we choose 6 different random seeds in $\{2,4,6,8,10,12\}$ for each task. In training, we train models for UOR-RL and baselines under these 6 random seeds; and correspondingly in testing, we compare the performance of the models under these 6 seeds as well.

In training, according to the specific process of each method, the environment parameters for the baseline algorithms and UOR-RL are selected as described below. For DR-U, EPOpt and MRPO, according to the process of the baseline algorithms, the environment parameters are set to be uniformly distributed, as in Table \ref{env_para_u}. And for DR-G, DB-UOR-RL and DF-UOR-RL, consistent with that during testing, the environment parameters are sampled following the Gaussian distributions truncated over the range given in Table \ref{env_para}.

\begin{table}[htbp!]
    \caption{Environment Parameter Settings for Training.}
    \centering
    \begin{tabular}{c|c|c|c}
    \Xhline{1.25pt}
    \textbf{Task} & \textbf {Parameters} & \textbf{Range $\mathcal{P}$} & \textbf{Distribution $\mathbb{D}$}\\
    \Xhline{1.25pt}
    \multirow{2}{*}{Reacher} & Body size & [0.008,0.05]& $\mathcal U(0.008,0.05)$\\  
    & Body length & [0.1,0.13] & $\mathcal U(0.1,0.13)$\\ \hline
    \multirow{2}{*}{Hopper} &  Density & [750,1250] & $\mathcal U(750,1250)$\\  
    & Friction & [0.5, 1.1] & $\mathcal U(0.5,1.1)$\\ \hline
    \multirow{2}{*}{Half Cheetah} &  Density & [750,1250] & $\mathcal U(750,1250)$\\  
    & Friction & [0.5, 1.1] & $\mathcal U(0.5,1.1)$\\ \hline
    \multirow{2}{*}{Humanoid} &  Density & [750,1250] & $\mathcal U(750,1250)$\\  
    & Friction & [0.5, 1.1]& $\mathcal U(0.5,1.1)$\\ \hline
    \multirow{2}{*}{Ant} &  Density & [750,1250] & $\mathcal U(750,1250)$\\  
    & Friction & [0.5, 1.1]& $\mathcal U(0.5,1.1)$\\ \hline
    \multirow{2}{*}{Walker 2d} &  Density & [750,1250] & $\mathcal U(750,1250)$\\  
    & Friction & [0.5, 1.1]& $\mathcal U(0.5,1.1)$\\ 
    \Xhline{1.25pt}
    \end{tabular}
    \label{env_para_u}
\end{table}

In addition, we list the final hyper-parameter settings for training UOR-RL and baselines in Table \ref{para_uorrl} and \ref{para_baseline}.

\begin{table}[htbp!]
    \caption{Final Hyper-parameter Settings for Training UOR-RL.}
    \centering
    \begin{tabular}{c|c|c|c|c|c|c|c}
    \Xhline{1.25pt}
    \textbf{Task} & \textbf{Algorithms} & \textbf{LR of Actor} & \textbf{LR of Critic} & \textbf{Blocks} & \textbf{Optimizer} & \textbf{Total Iterations}\\
    \Xhline{1.25pt}
    \multirow{2}{*}{Reacher} & UOR-RL-DB & $5.0\times 10^{-4}$ & $6.0\times 10^{-4}$ & 100 & Adam & 1000\\  
    & UOR-RL-DF & $5.0\times10^{-4}$ & $6.0\times10^{-4}$ & 100 & Adam & 1000\\ \hline
    \multirow{2}{*}{Hopper} & UOR-RL-DB & $4.0\times 10^{-4}$ & $5.0\times 10^{-4}$ & 100 & Adam & 1000\\  
    & UOR-RL-DF & $4.0\times10^{-4}$ & $5.0\times10^{-4}$ & 100 & Adam & 1000\\ \hline
    \multirow{2}{*}{Half Cheetah} & UOR-RL-DB & $6.0\times 10^{-4}$ & $8.0\times 10^{-4}$ & 100 & Adam & 1000\\  
    & UOR-RL-DF & $6.0\times10^{-4}$ & $8.0\times10^{-4}$ & 100 & Adam & 1000\\ \hline
    \multirow{2}{*}{Humanoid} & UOR-RL-DB & $3.0\times 10^{-4}$ & $3.0\times 10^{-4}$ & 100 & Adam & 1000\\  
    & UOR-RL-DF & $3.0\times10^{-4}$ & $3.0\times10^{-4}$ & 100 & Adam & 1000\\ \hline
    \multirow{2}{*}{Ant} & UOR-RL-DB & $3.0\times 10^{-4}$ & $3.0\times 10^{-4}$ & 100 & Adam & 1000\\  
    & UOR-RL-DF & $3.0\times10^{-4}$ & $3.0\times10^{-4}$ & 100 & Adam & 1000\\ \hline
    \multirow{2}{*}{Walker 2d} & UOR-RL-DB & $3.0\times 10^{-4}$ & $8.0\times 10^{-4}$ & 256 & Adam & 1000\\  
    & UOR-RL-DF & $3.0\times10^{-4}$ & $8.0\times10^{-4}$ & 256 & Adam & 1000\\ \hline
    \Xhline{1.25pt}
    \end{tabular}
    \label{para_uorrl}
\end{table}

\begin{table}[htbp!]
    \caption{Final Hyper-parameter Settings for Training baselines.}
    \centering
    \begin{tabular}{c|c|c|c|c|c|c}
    \Xhline{1.25pt}
    \textbf{Task} & \textbf{Algorithms} & \textbf{LR} & \textbf{Minibatch Numbers} & \textbf{Optimizer} & \textbf{Total Episodes}\\
    \Xhline{1.25pt}
    \multirow{4}{*}{Reacher} 
    & DR-U & $5.0\times 10^{-4}$ &No Minibatch & Adam &$5.0\times 10^{4}$\\  
    & DR-G & $5.0\times10^{-4}$ & No Minibatch& Adam & $5.0\times 10^{4}$\\
    & EPOpt & $5.0\times10^{-4}$ &No Minibatch & Adam &$5.0\times 10^{4}$\\
    & MRPO & $5.0\times10^{-4}$ & No Minibatch& Adam & $5.0\times 10^{4}$\\
    \hline
    \multirow{4}{*}{Hopper} 
    & DR-U & $5.0\times 10^{-4}$ &No Minibatch & Adam &$5.0\times 10^{4}$\\  
    & DR-G & $5.0\times10^{-4}$ & No Minibatch& Adam & $5.0\times 10^{4}$\\
    & EPOpt & $5.0\times10^{-4}$ &No Minibatch & Adam &$5.0\times 10^{4}$\\
    & MRPO & $5.0\times10^{-4}$ & No Minibatch& Adam & $5.0\times 10^{4}$\\\hline
    \multirow{4}{*}{Half Cheetah}
    & DR-U & $5.0\times 10^{-4}$ &No Minibatch & Adam &$5.0\times 10^{4}$\\  
    & DR-G & $5.0\times10^{-4}$ & No Minibatch& Adam & $5.0\times 10^{4}$\\
    & EPOpt & $5.0\times10^{-4}$ &No Minibatch & Adam &$5.0\times 10^{4}$\\
    & MRPO & $5.0\times10^{-4}$ & No Minibatch& Adam & $5.0\times 10^{4}$\\\hline
    \multirow{4}{*}{Humanoid}
    & DR-U & $5.0\times 10^{-4}$ &128 & Adam &$5.0\times 10^{4}$\\  
    & DR-G & $5.0\times10^{-4}$ & 128& Adam & $5.0\times 10^{4}$\\
    & EPOpt & $5.0\times10^{-4}$ &128 & Adam &$5.0\times 10^{4}$\\
    & MRPO & $5.0\times10^{-4}$ & 128& Adam & $5.0\times 10^{4}$\\\hline
    \multirow{4}{*}{Ant}
     & DR-U & $5.0\times 10^{-4}$ &No Minibatch & Adam &$5.0\times 10^{4}$\\  
    & DR-G & $5.0\times10^{-4}$  & No Minibatch& Adam & $5.0\times 10^{4}$\\
    & EPOpt & $5.0\times10^{-4}$  &No Minibatch & Adam &$5.0\times 10^{4}$\\
    & MRPO & $5.0\times10^{-4}$ & No Minibatch& Adam & $5.0\times 10^{4}$\\\hline
    \multirow{4}{*}{Walker 2d}
    & DR-U & $5.0\times 10^{-4}$  &256 & Adam &$5.0\times 10^{4}$\\  
    & DR-G & $5.0\times10^{-4}$  & 256& Adam & $5.0\times 10^{4}$\\
    & EPOpt & $5.0\times10^{-4}$  &256 & Adam &$5.0\times 10^{4}$\\
    & MRPO & $5.0\times10^{-4}$  & 256& Adam & $5.0\times 10^{4}$\\\hline
    \Xhline{1.25pt}
    \end{tabular}
    \label{para_baseline}
\end{table}

\subsection{The Supplementary Heat Maps}
\label{other_heatmaps}
\begin{figure*}[htbp!]
    \centering
    \setlength{\abovecaptionskip}{-0.1cm} 
    \includegraphics[width=\textwidth]{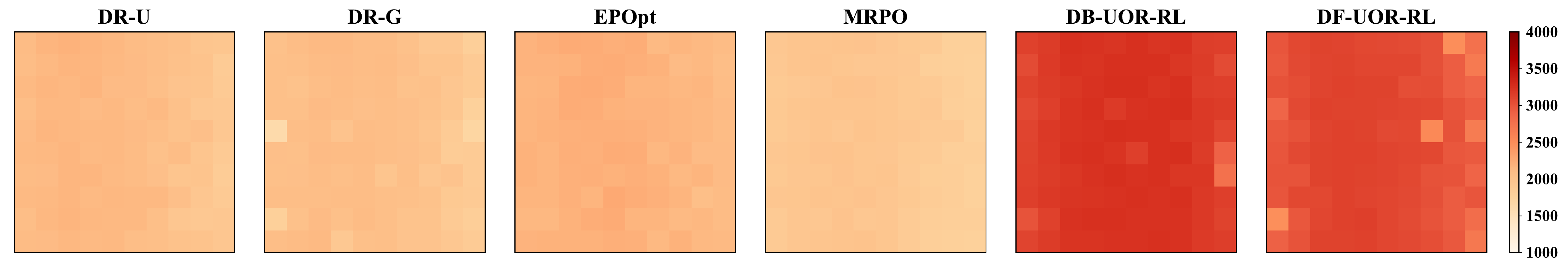}
    \caption{Heat map of $\mathcal{E}_1$ in sub-ranges (Ant). The x-axis and y-axis denote friction and density, respectively. The ranges of these two parameters are chosen as in Table \ref{env_para}, and are evenly divided into 10 sub-ranges for each one.}
    \label{Ant-v0}
\end{figure*}

\begin{figure*}[htbp!]
    \centering
    \setlength{\abovecaptionskip}{-0.1cm} 
    \includegraphics[width=\textwidth]{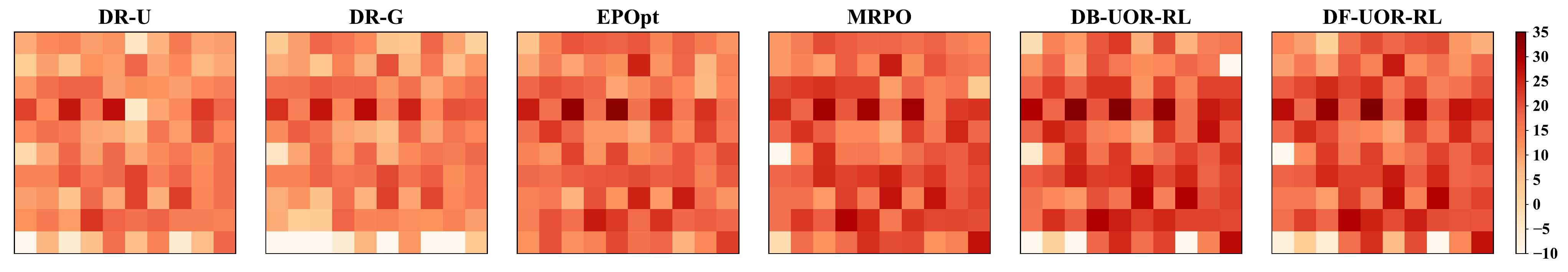}
    \caption{Heat map of $\mathcal{E}_1$ in sub-ranges (Reacher). The x-axis and y-axis denote body length and body size, respectively. The ranges of these two parameters are chosen as in Table \ref{env_para}, and are evenly divided into 10 sub-ranges for each one.}
    \label{Reacher-v0}
    \vspace{-0.2cm}
\end{figure*}

\begin{figure*}[htbp!]
    \centering
    \setlength{\abovecaptionskip}{-0.1cm} 
    \includegraphics[width=\textwidth]{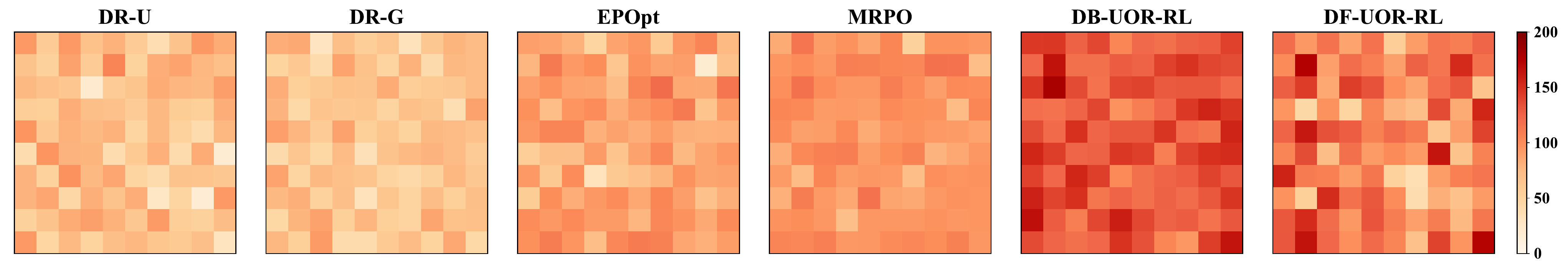}
    \caption{Heat map of $\mathcal{E}_1$ in sub-ranges (Humanoid). The x-axis and y-axis denote body length and body size, respectively. The ranges of these two parameters are chosen as in Table \ref{env_para}, and are evenly divided into 10 sub-ranges for each one.}
    \label{Humanoid-v0}
     \vspace{-0.2cm}
     
\end{figure*}

\begin{figure*}[htbp!]
    \centering
    \setlength{\abovecaptionskip}{-0.1cm} 
    \includegraphics[width=\textwidth]{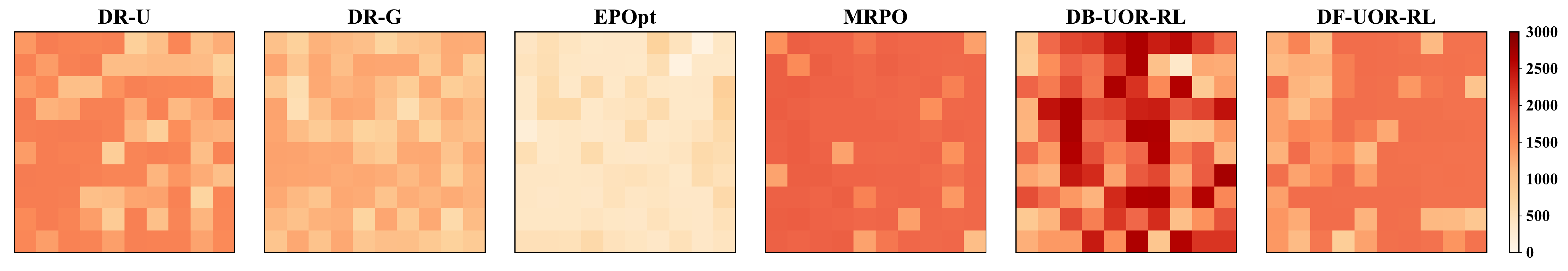}
    \caption{Heat map of $\mathcal{E}_1$ in sub-ranges (Walker 2d). The x-axis and y-axis denote body length and body size, respectively. The ranges of these two parameters are chosen as in Table \ref{env_para}, and are evenly divided into 10 sub-ranges for each one.}
    \vspace{-0.2cm}
    \label{Walker2d-v0}
\end{figure*}

Due to space limitations, we only show the heat maps of Half Cheetah and Hopper in the main text. Here, we add the heat maps for the remaining tasks(Ant, Humanoid, Reacher and Walker 2d), as shown in Figures \ref{Ant-v0} to \ref{Walker2d-v0}.

\subsection{Training Curves}

\label{train_cruve}
Training curves of the baseline algorithms and UOR-RL are shown in Figure \ref{ant_train}-\ref{reacher_train}. In the same task, the model of UOR-RL are trained in three different robustness degree $k$, while the baseline models are the same but showing the performance under different metrics. In the process of training the model, the abort condition of training is that the model has reached convergence or has been trained for 1000 iterations.

\begin{figure}[htbp!]
\centering
\setlength{\abovecaptionskip}{-0.05cm} 
\includegraphics[width=0.32\textwidth]{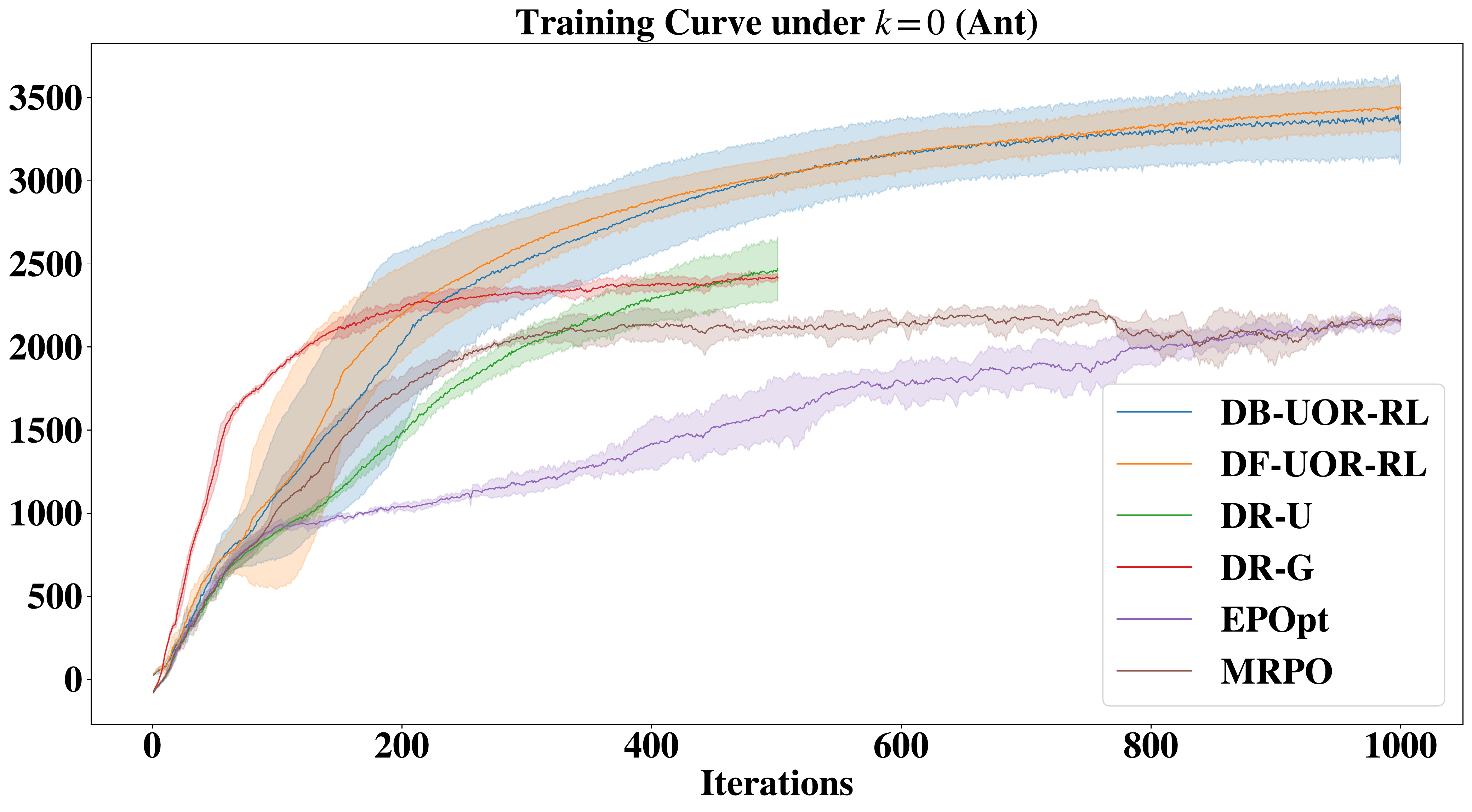}
\
\includegraphics[width=0.32\textwidth]{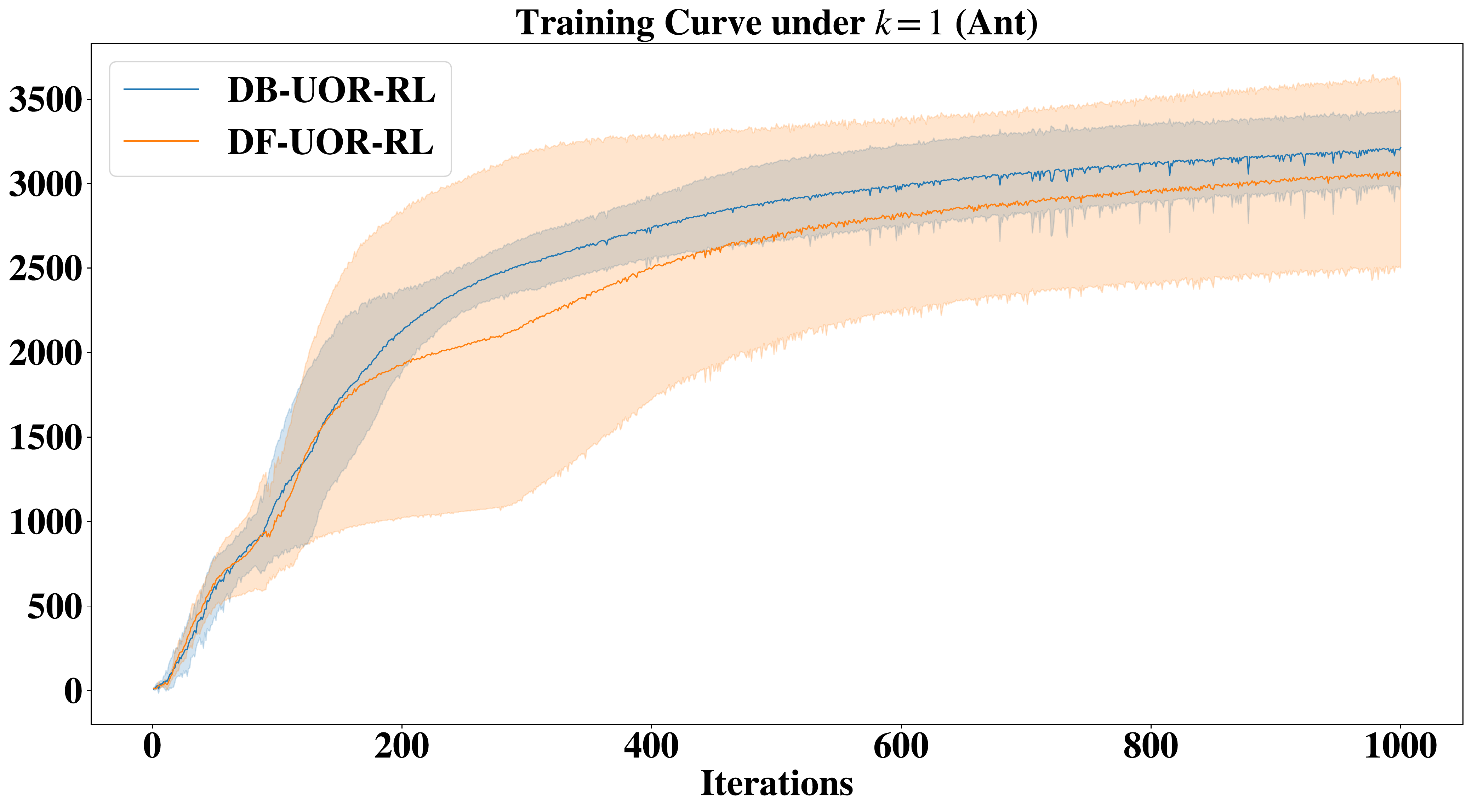}
\
\includegraphics[width=0.32\textwidth]{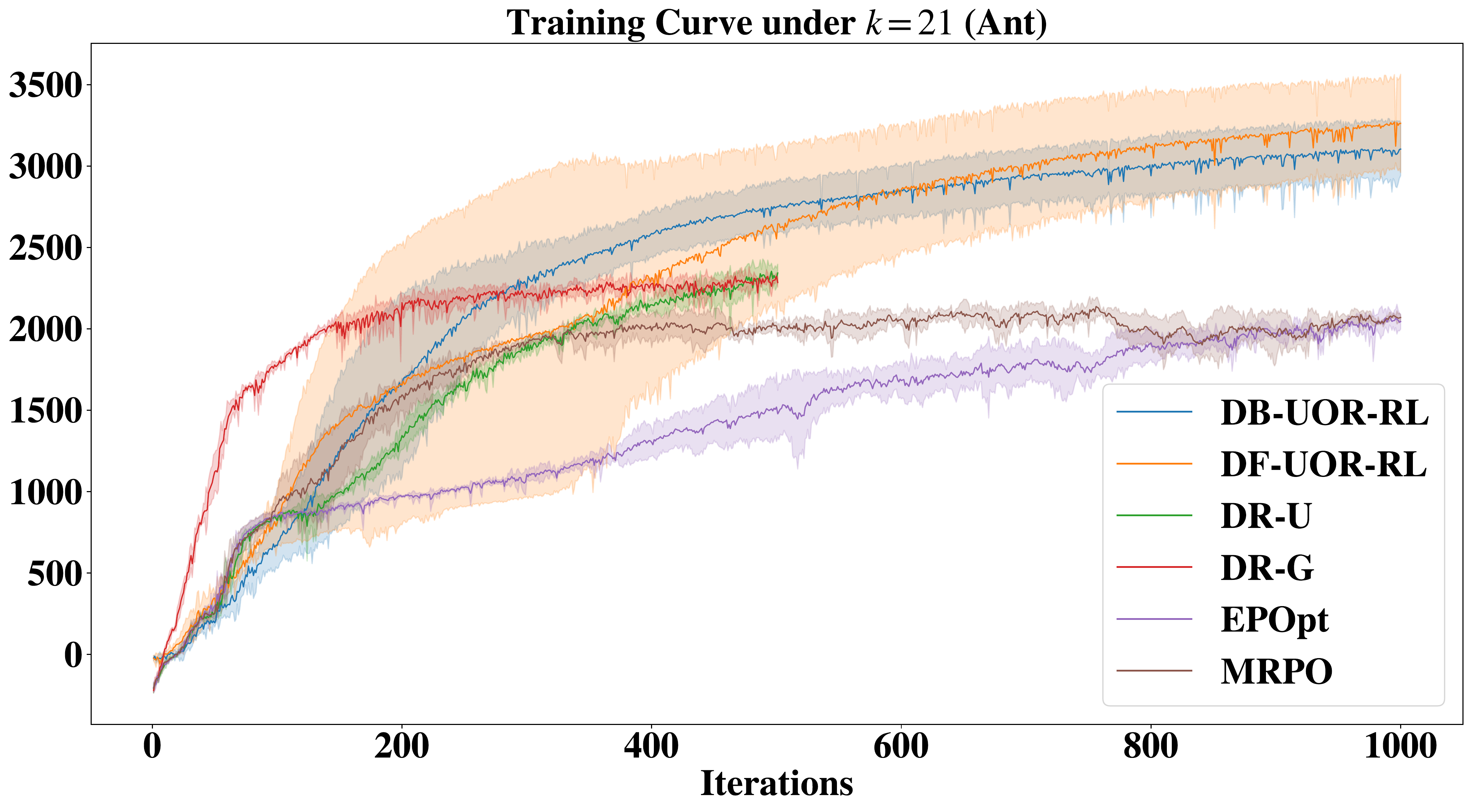}
\caption{Training Curves (Ant). In the three graphs from left to right, the UOR-RL algorithms are trained under $k=0,1$ and 21, and the y-axis  represent the average return of all trajectories, $\mathcal{E}_1$ and the average return of worst 10\% trajectories respectively. }
\label{ant_train}
\end{figure}

\begin{figure}[htbp!]
\centering
\setlength{\abovecaptionskip}{-0.05cm} 
\includegraphics[width=0.32\textwidth]{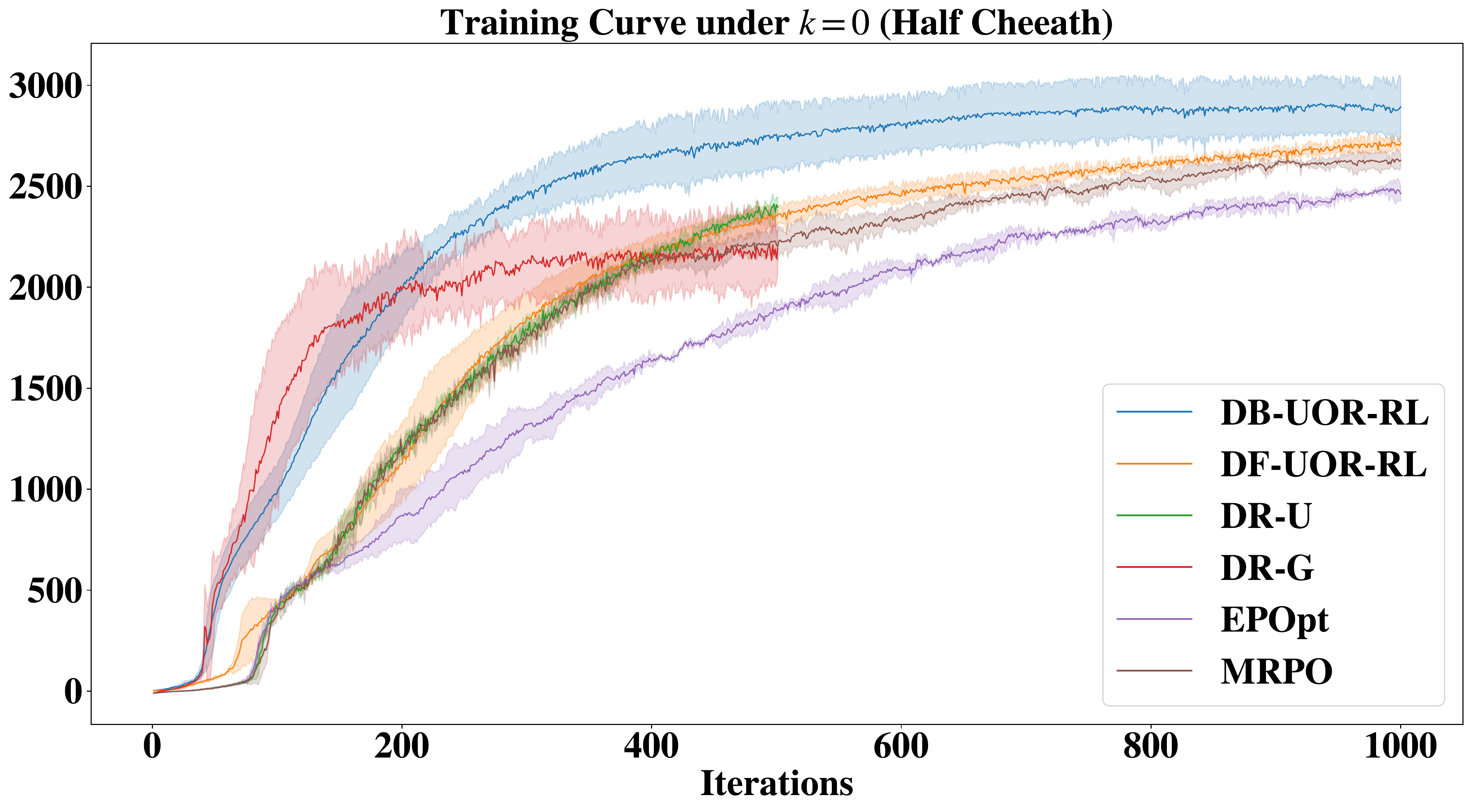}
\
\includegraphics[width=0.32\textwidth]{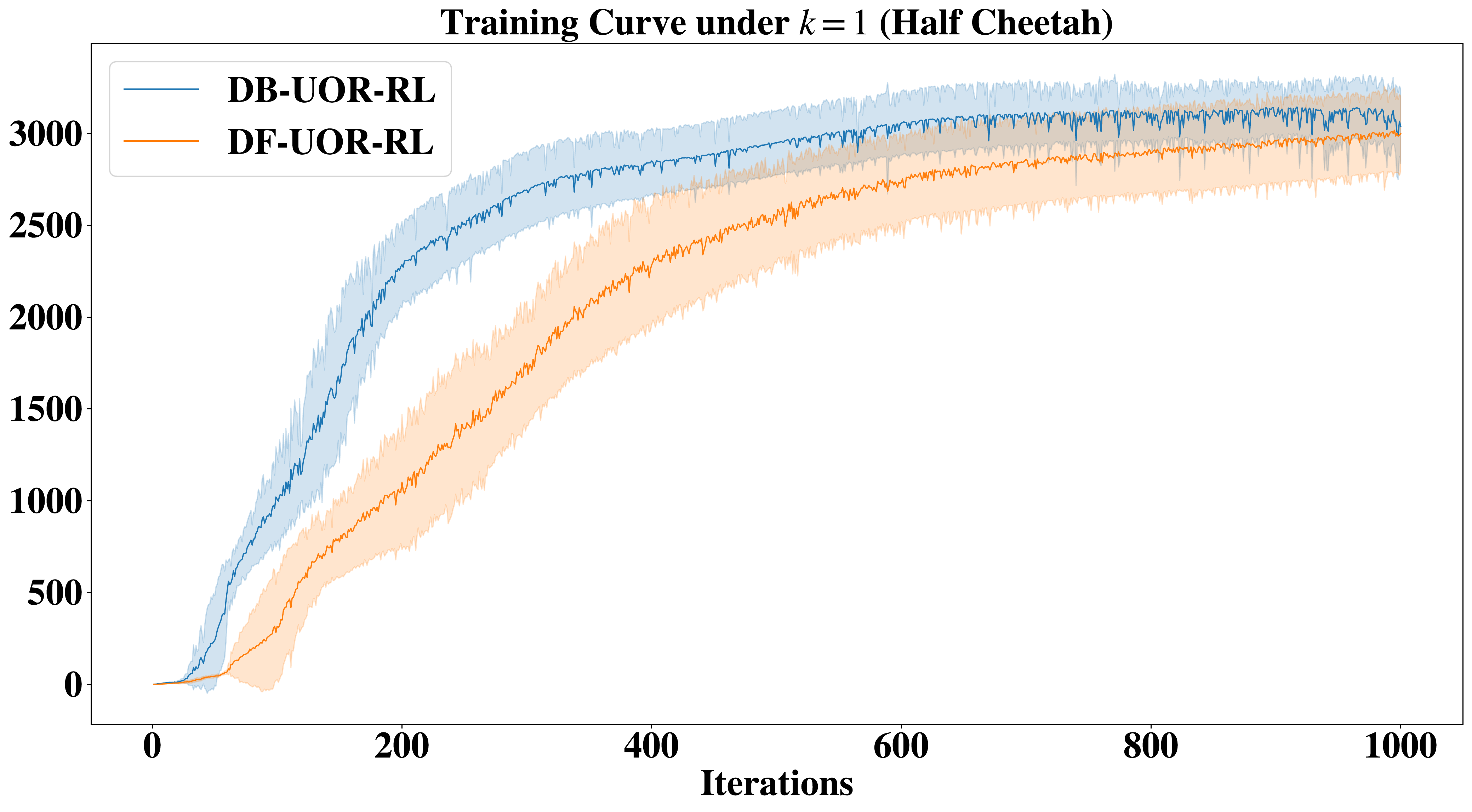}
\
\includegraphics[width=0.32\textwidth]{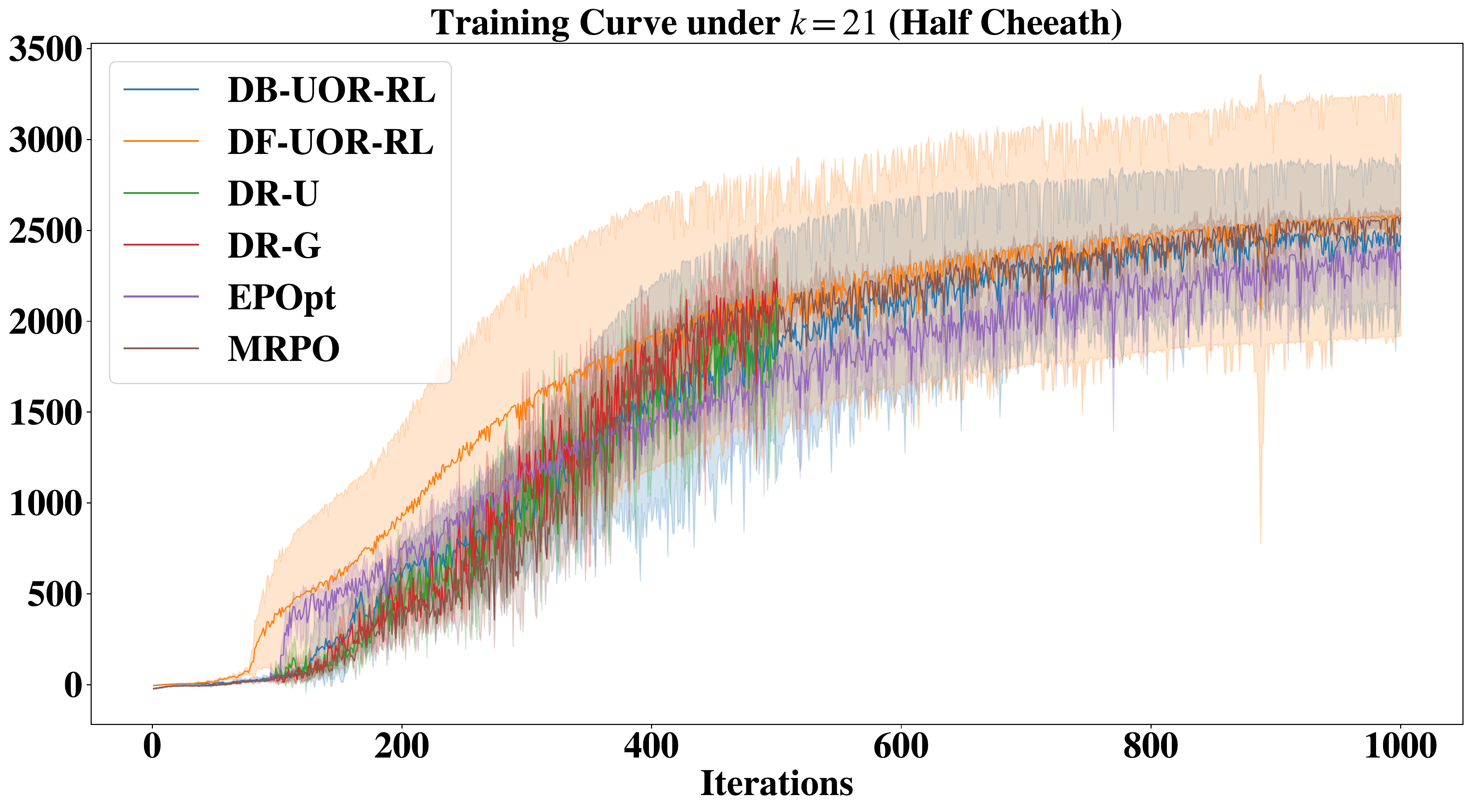}
\caption{Training Curves (Half Cheetah). In the three graphs from left to right, the UOR-RL algorithms are trained under $k=0,1$ and 21, and the y-axis  represent the average return of all trajectories, $\mathcal{E}_1$ and the average return of worst 10\% trajectories respectively. }
\vspace{-0.8cm}
\label{half_train}
\end{figure}

\begin{figure}[htbp!]
\centering
\includegraphics[width=0.32\textwidth]{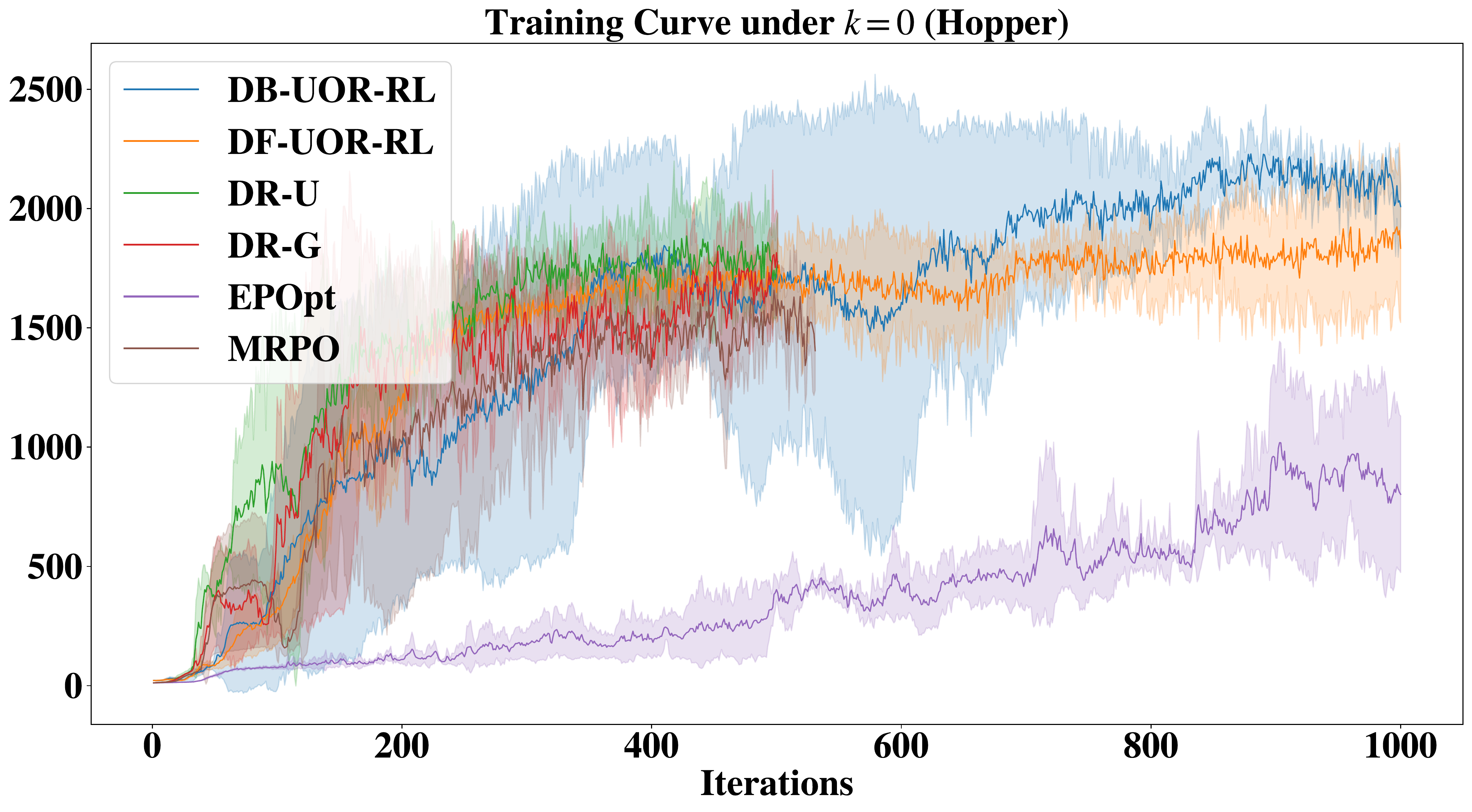}
\
\includegraphics[width=0.32\textwidth]{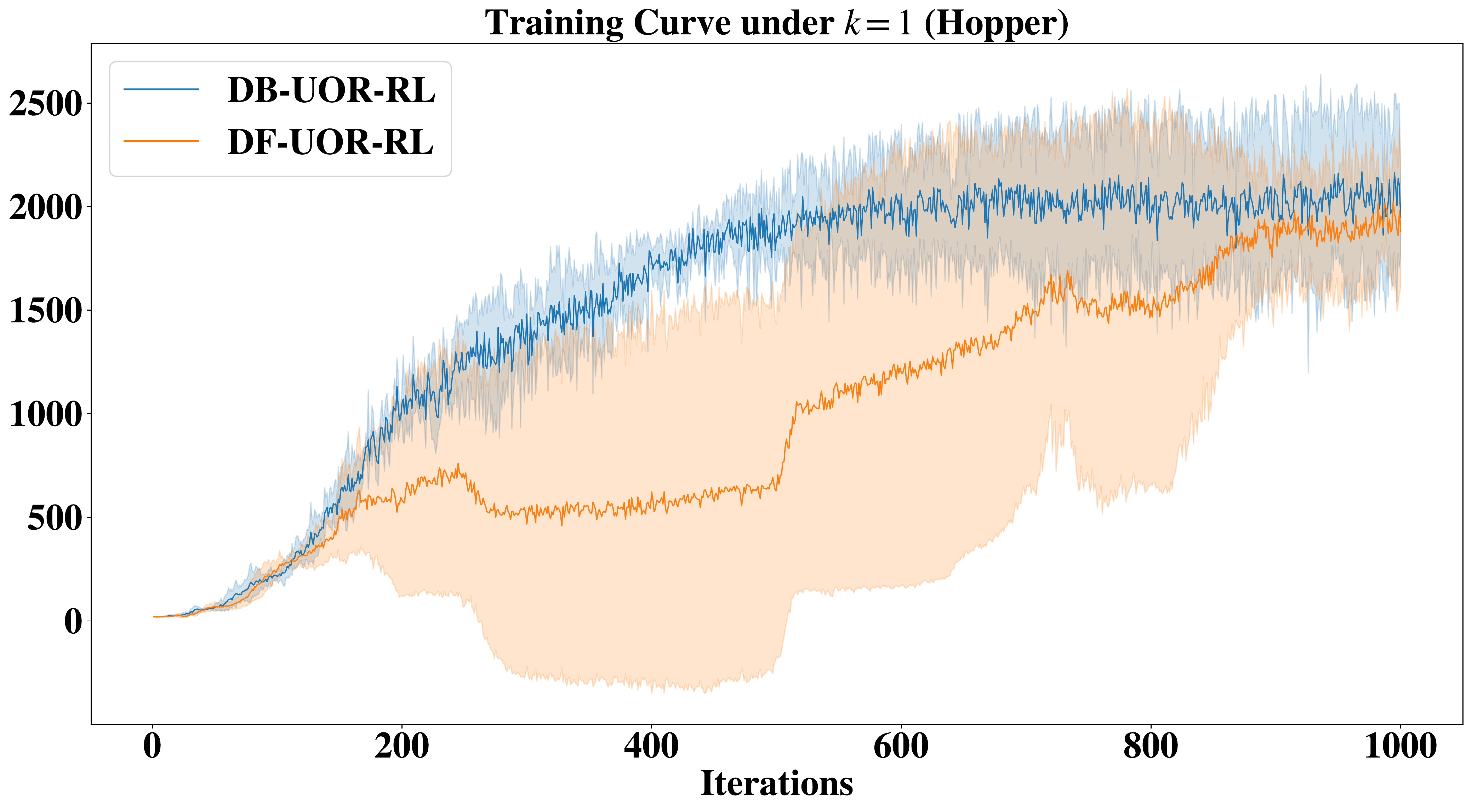}
\
\includegraphics[width=0.32\textwidth]{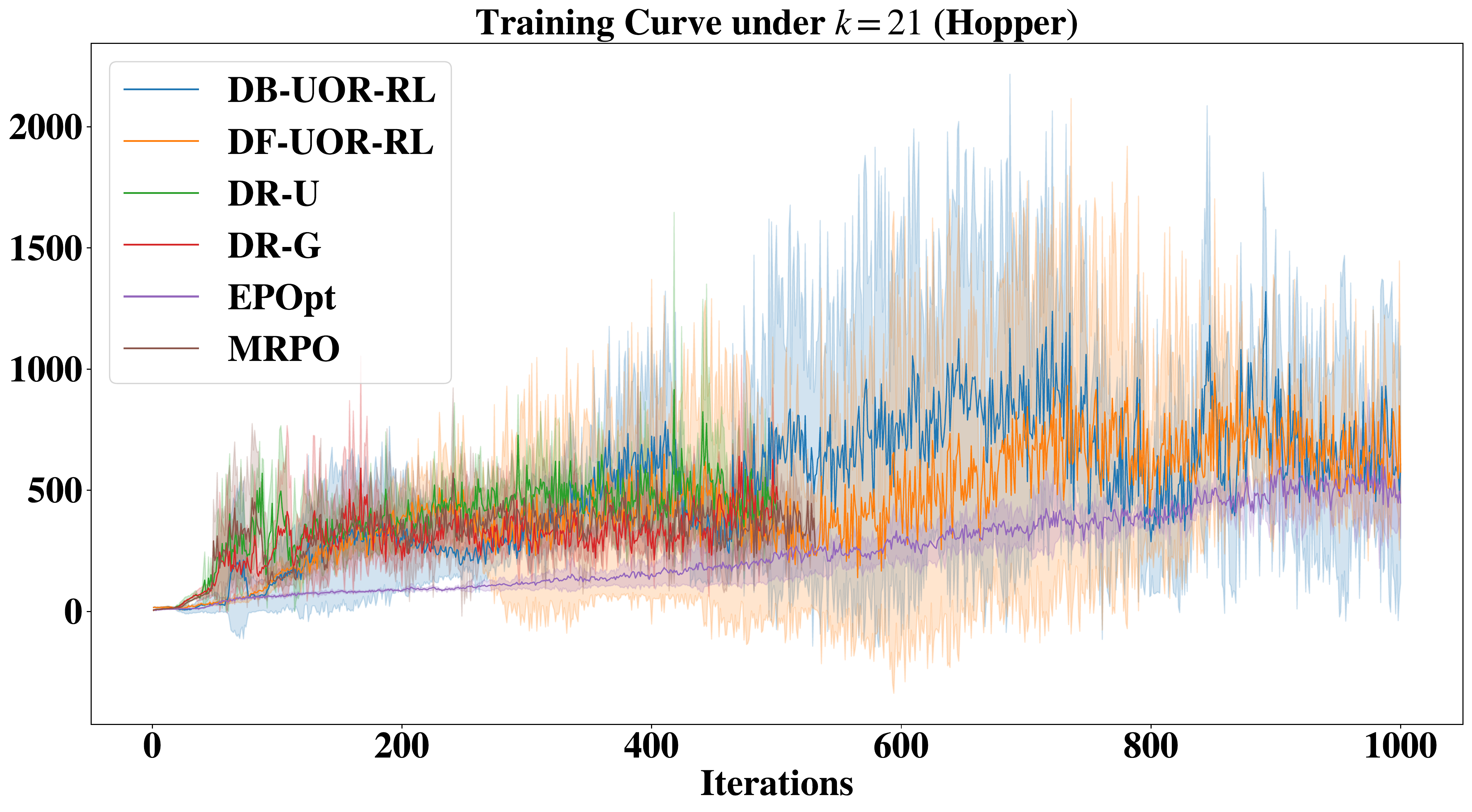}
\caption{Training Curves (Hopper). In the three graphs from left to right, the UOR-RL algorithms are trained under $k=0,1$ and 21, and the y-axis  represent the average return of all trajectories, $\mathcal{E}_1$ and the average return of worst 10\% trajectories respectively. }
\label{hopper_train}
\end{figure}

\begin{figure}[htbp!]
\centering
\includegraphics[width=0.32\textwidth]{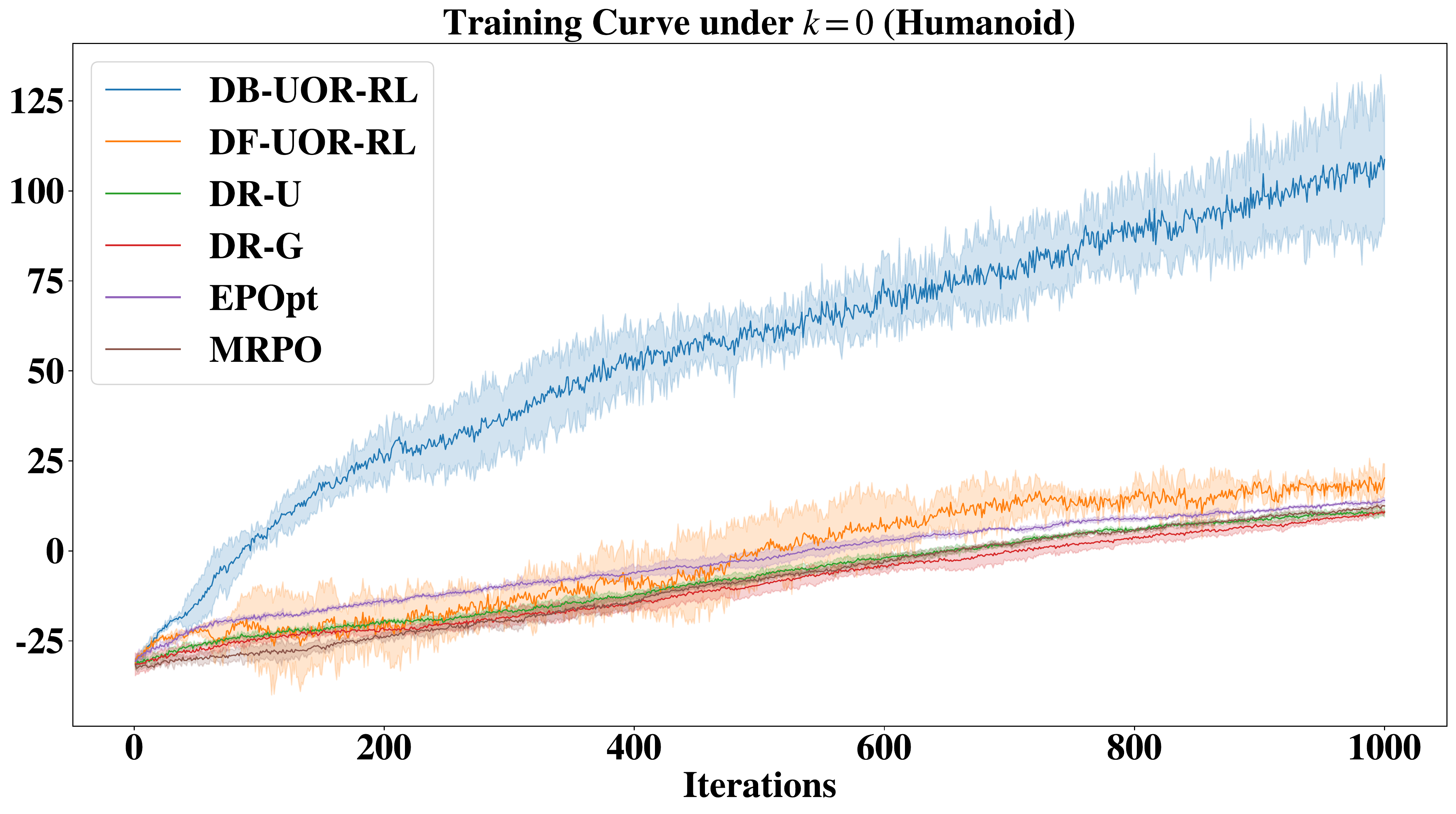}
\
\includegraphics[width=0.32\textwidth]{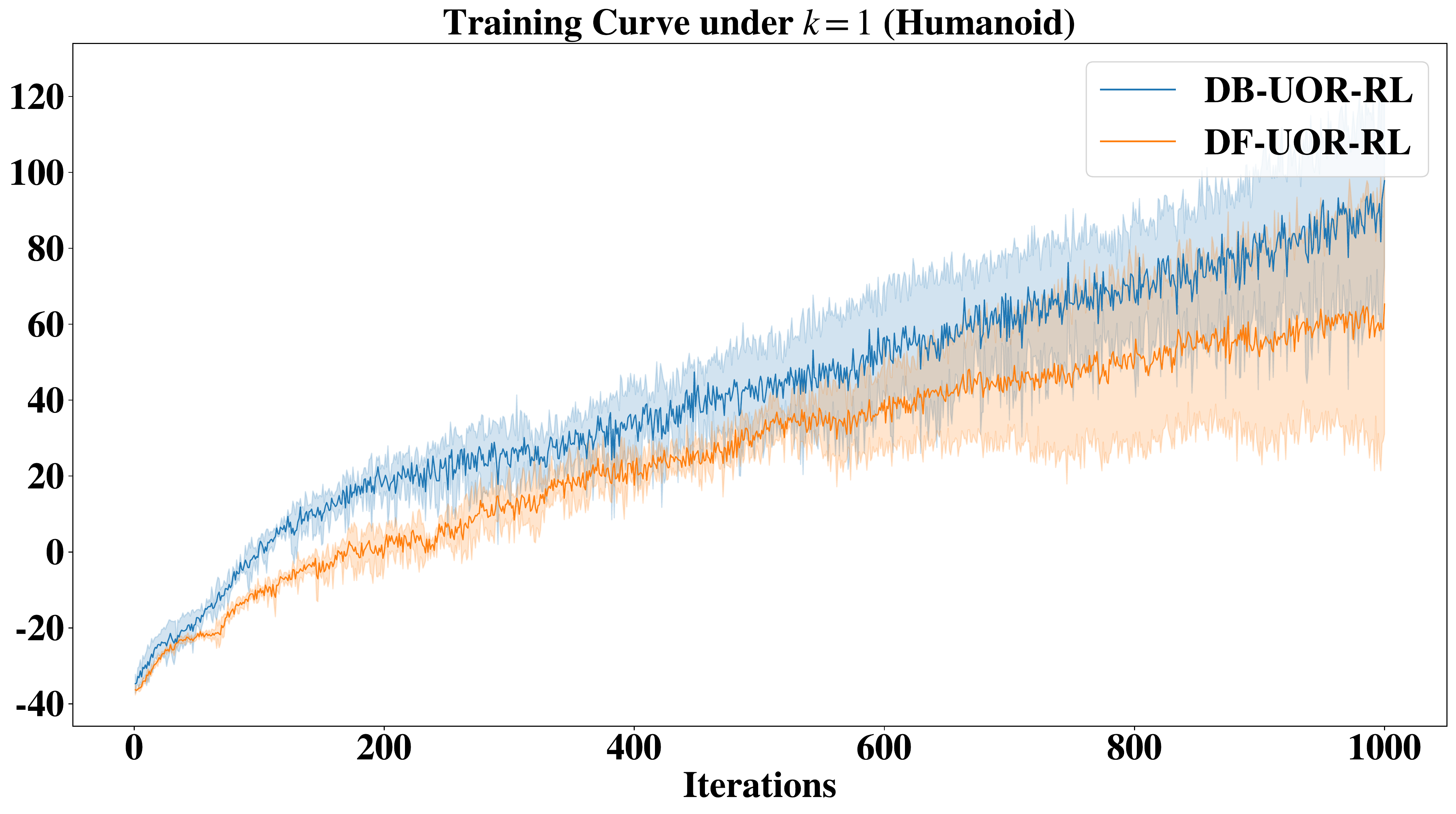}
\
\includegraphics[width=0.32\textwidth]{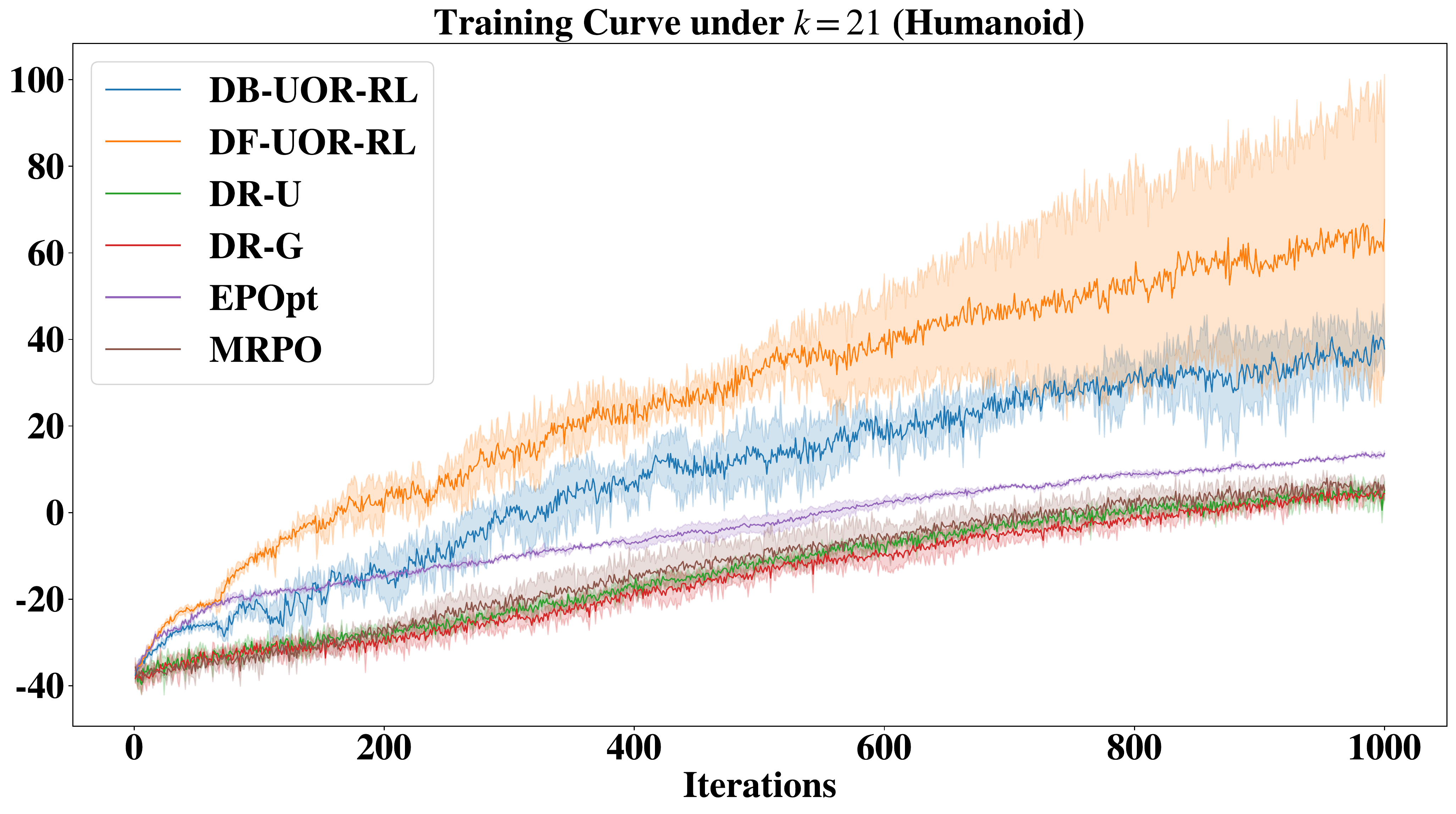}
\caption{Training Curves (Humanoid). In the three graphs from left to right, the UOR-RL algorithms are trained under $k=0,1$ and 21, and the y-axis  represent the average return of all trajectories, $\mathcal{E}_1$ and the average return of worst 10\% trajectories respectively. }

\label{humanoid_train}
\end{figure}

\begin{figure}[htbp!]
\centering
\includegraphics[width=0.32\textwidth]{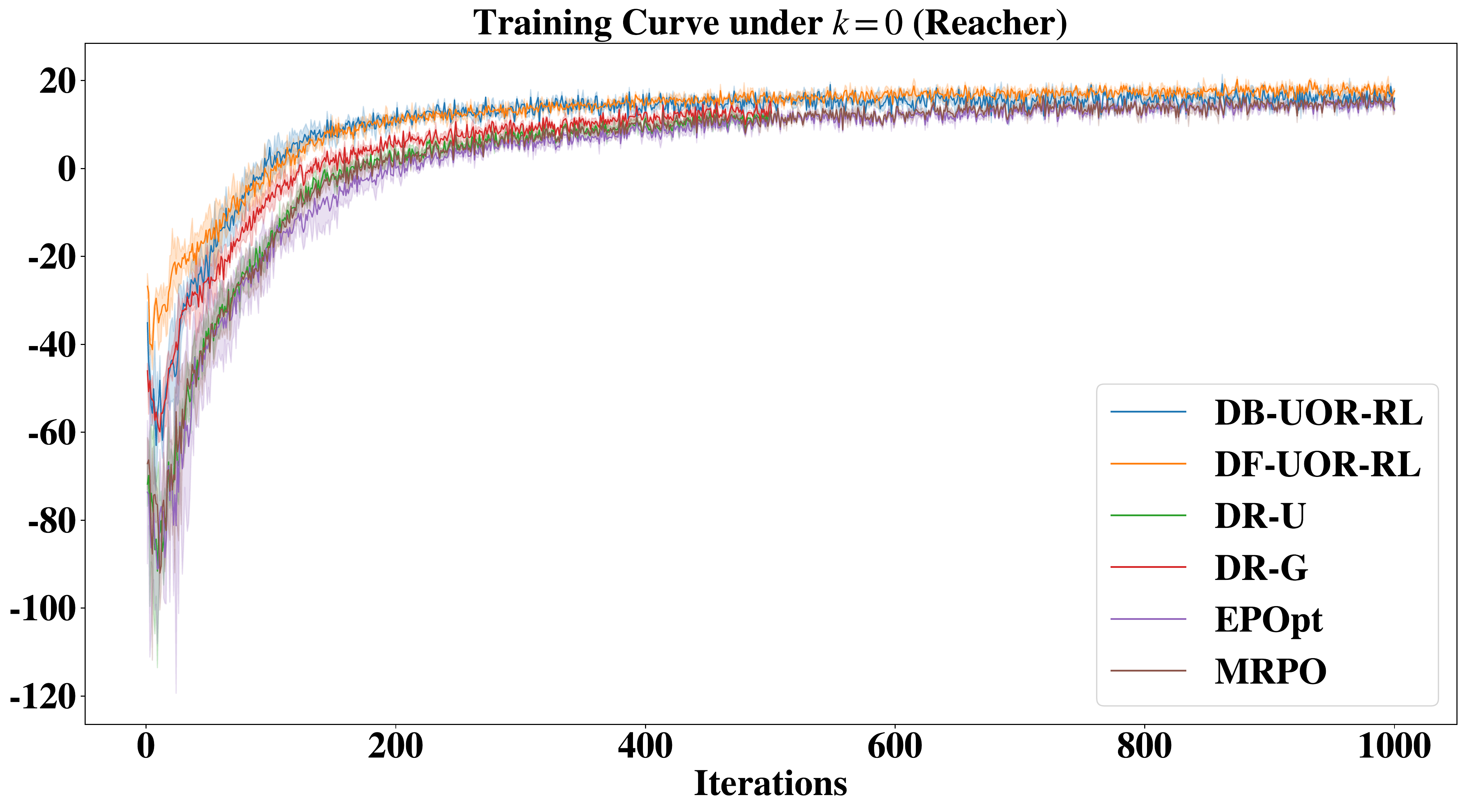}
\
\includegraphics[width=0.32\textwidth]{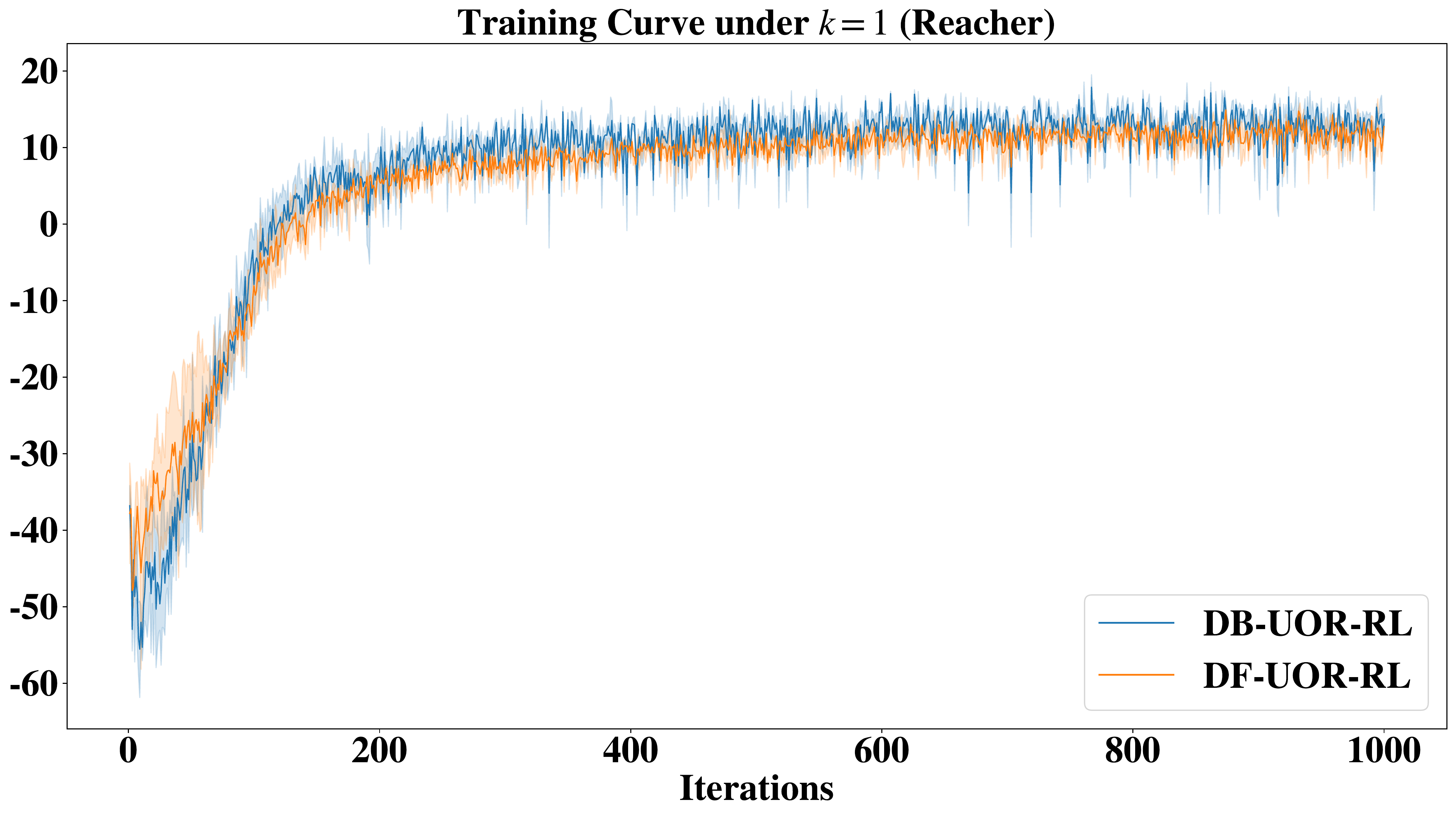}
\
\includegraphics[width=0.32\textwidth]{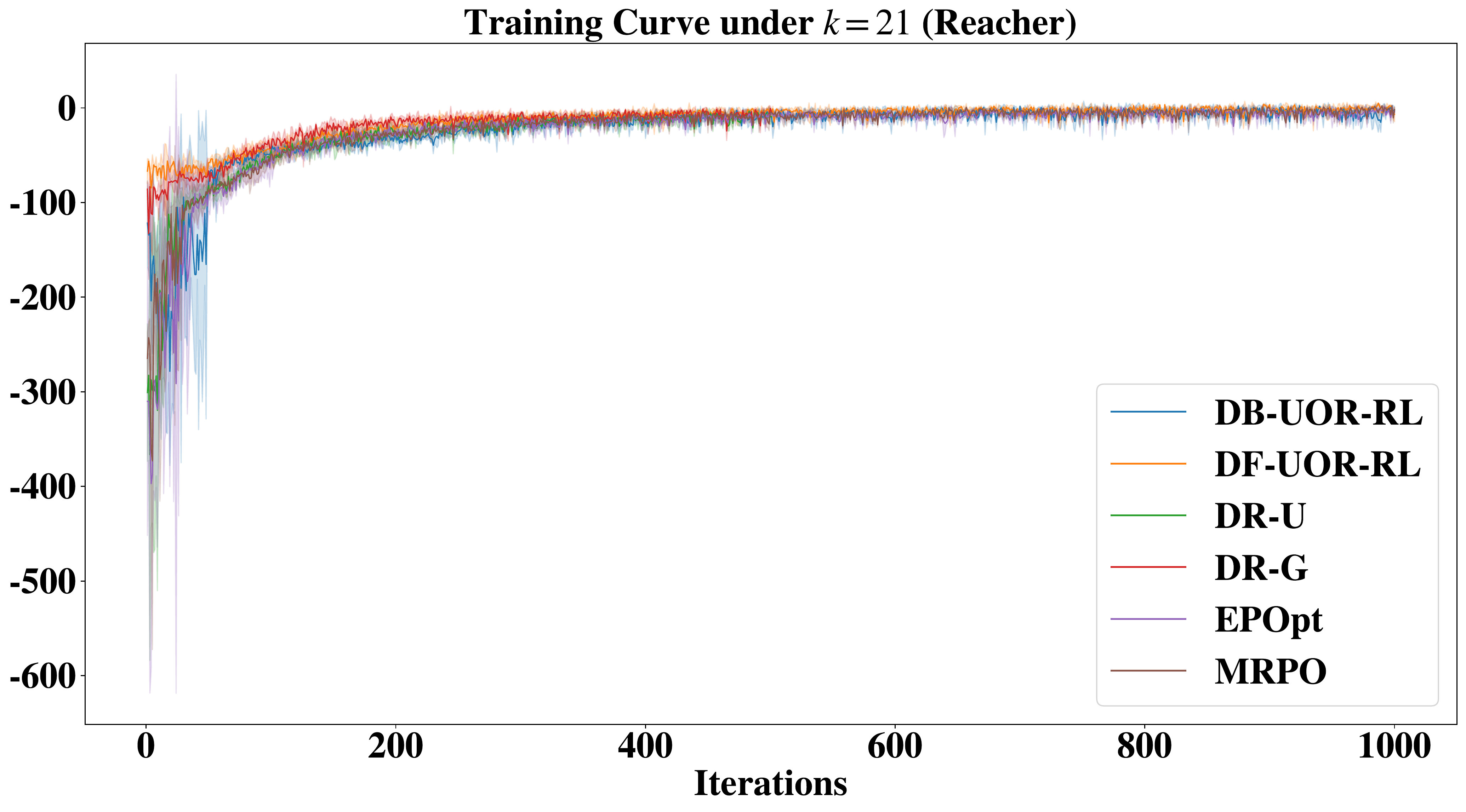}
\caption{Training Curves (Reacher). In the three graphs from left to right, the UOR-RL algorithms are trained under $k=0,1$ and 21, and the y-axis  represent the average return of all trajectories, $\mathcal{E}_1$ and the average return of worst 10\% trajectories respectively. }
\label{reacher_train}
\end{figure}

\begin{figure}[htbp!]
\centering
\includegraphics[width=0.32\textwidth]{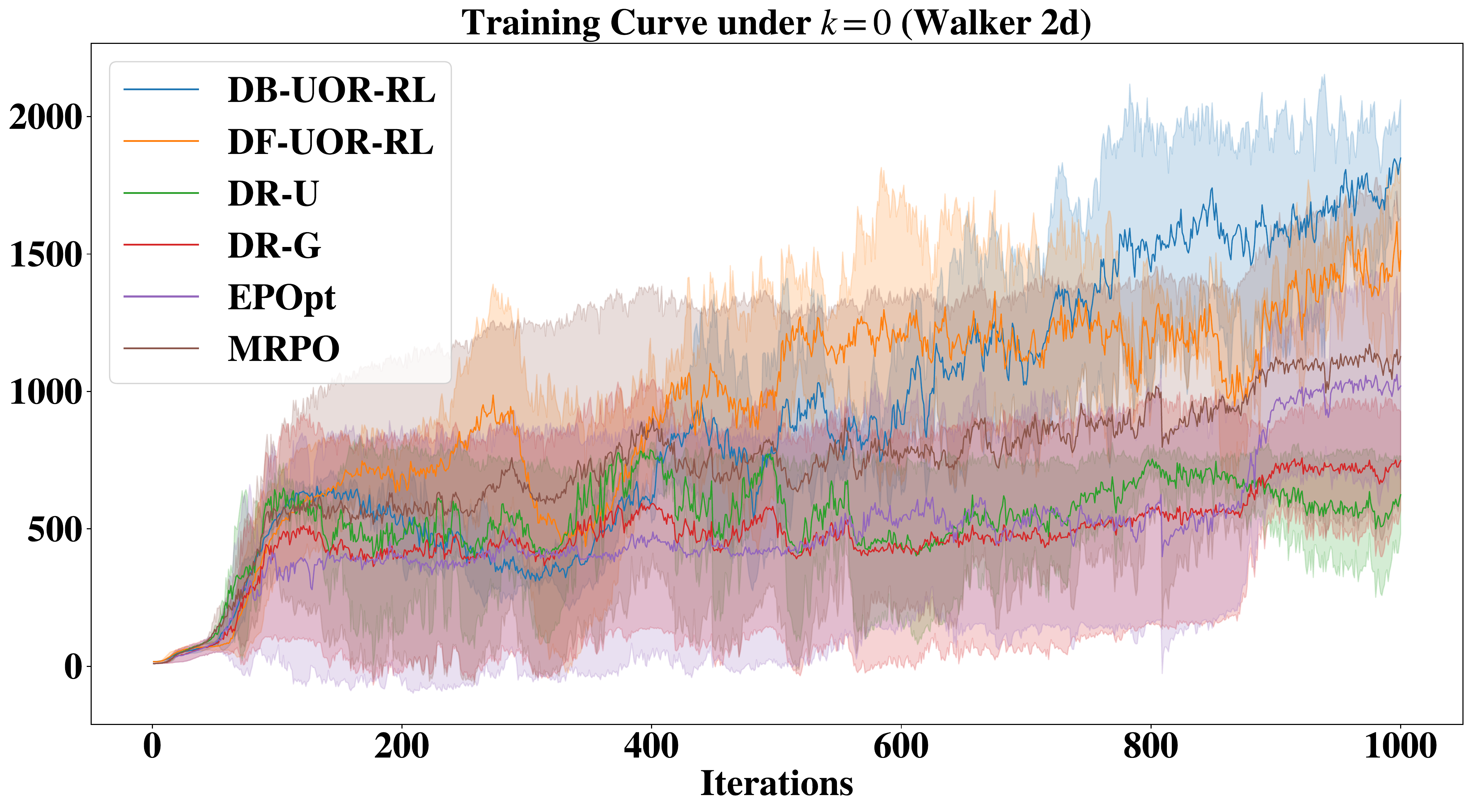}
\
\includegraphics[width=0.32\textwidth]{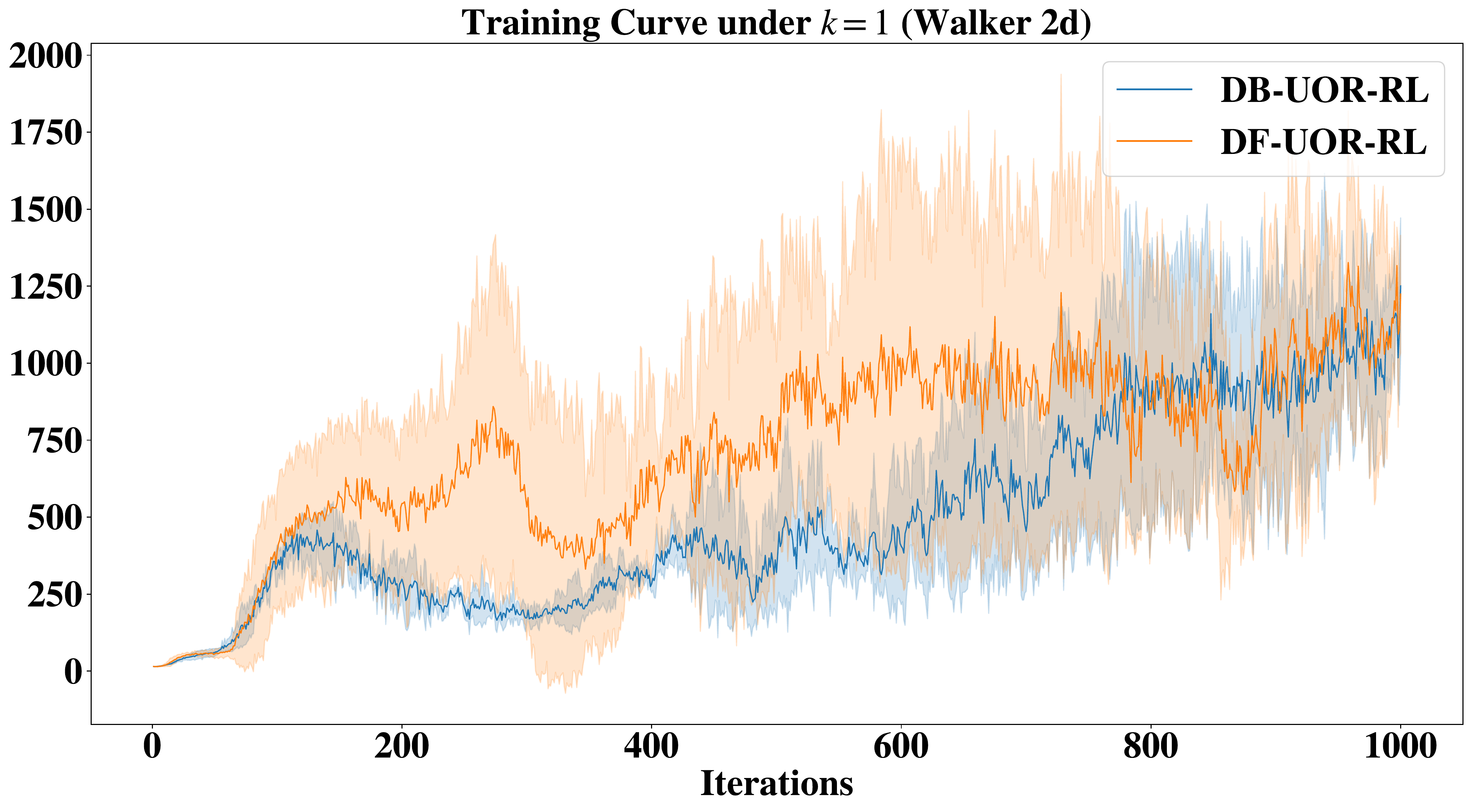}
\
\includegraphics[width=0.32\textwidth]{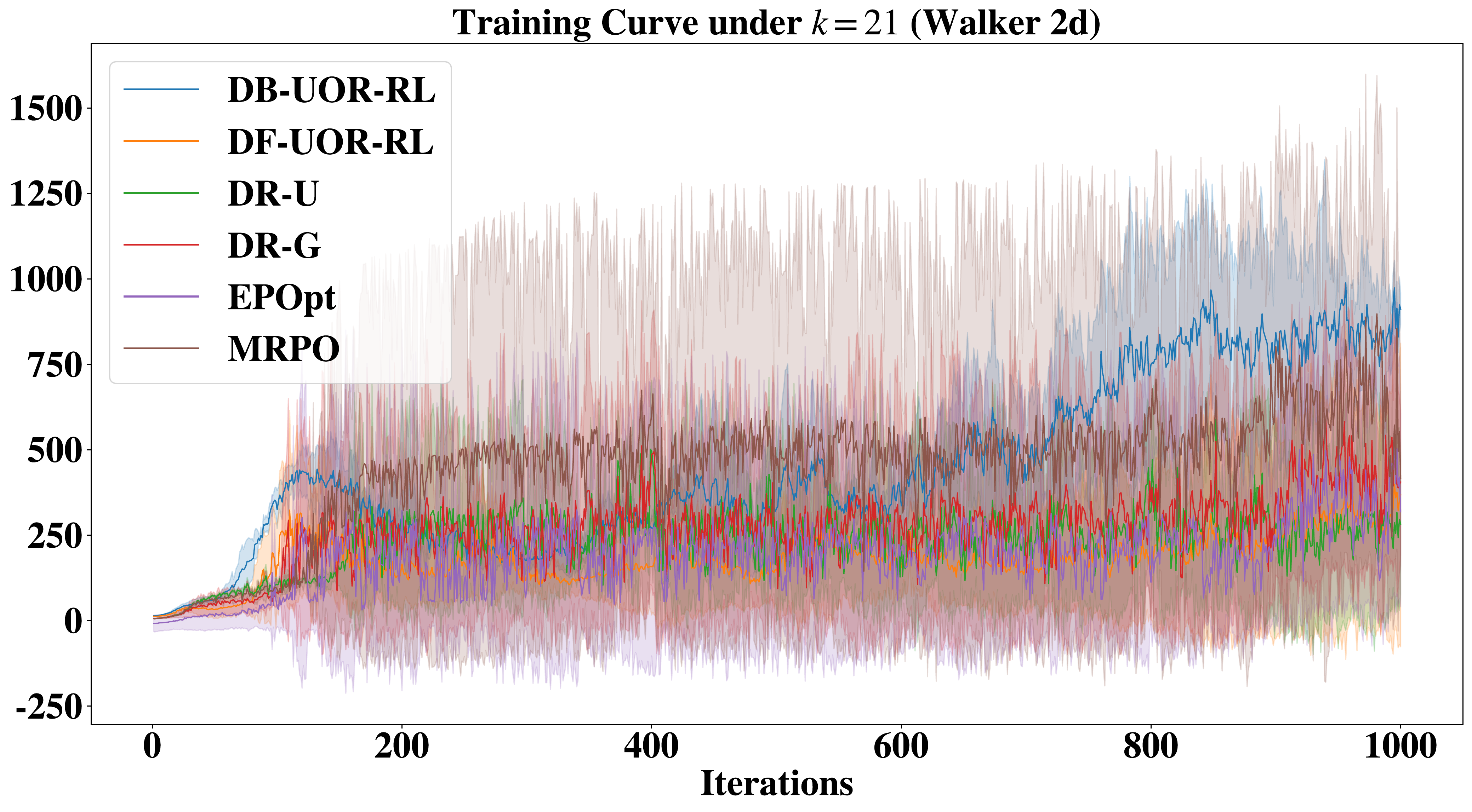}
\caption{Training Curves (Walker 2d). In the three graphs from left to right, the UOR-RL algorithms are trained under $k=0,1$ and 21, and the y-axis  represent the average return of all trajectories, $\mathcal{E}_1$ and the average return of worst 10\% trajectories respectively. }

\label{walker2d_train}
\end{figure}
\end{document}